\let\NAT@parse\undefined
\definecolor{fimdpcolor}{rgb}{0.00000,0.44700,0.74100}
\definecolor{stormcolor}{rgb}{0.74100,0.00000,0.44700}
\definecolor{darkgreen}{rgb}{0.0078,0.5450, 0.0666}
\definecolor{darkblue}{rgb}{0.1529,0.5019,0.8705}
\definecolor{darkgray}{rgb}{0.45,0.45,0.45}
\pgfplotsset{compat=1.16}
\newtheorem{theorem}{Theorem}
\newtheorem{definition}{Definition}
\newtheorem{lemma}{Lemma}
\newtheorem{example}{Example}
\newlength{\spc} 
\newcommand{\punct}[1]{%
  \settowidth{\spc}{#1}
  \addtolength{\spc}{-1.8\spc}
  #1
  \hspace*{\spc}
}
\newcommand{\optmdp}[2][]{
  \ifthenelse{\isempty{#1}}%
    {#2}     
    {#2_{#1}}
}
\newcommand{\loaded}[2]{\tensor[^{#2}]{#1}{}}
\newcommand{\Nset}{\mathbb{N}}
\newcommand{\extNset}{\overline{\Nset}}
\renewcommand{\vec}[1]{\mathbf{#1}}
\newcommand{\veccomp}[2]{#1(#2)}
\newcommand{\infvec}{\boldsymbol{\infty}} 
\newcommand{\loadedprobm}[4][]{\tensor*[^{#4}_{#3}]{\mathbb{P}}{_{#1}^{#2}}}
\newcommand{\calO}{\mathcal{O}}
\newcommand{\mdp}{\mathcal{C}}
\newcommand{\states}{S}
\newcommand{\actions}{A}
\newcommand{\trans}{\Delta}
\newcommand{\cons}{\gamma}
\newcommand{\reloads}{\mathit{R}}
\newcommand{\Ca}{\mathit{cap}}
\newcommand{\altmdp}[3][]{\ifthenelse{\isempty{#1}}{\mathcal{B}(#2,#3)}{\mathcal{B}(#2,#3,#1)}}
\newcommand{\sink}{\mathit{sink}}
\newcommand{\altarget}{X}
\newcommand{\acta}{\mathsf{a}}
\newcommand{\actb}{\mathsf{b}}
\newcommand{\hacta}{\textcolor{magenta!80!black}{\acta}}
\newcommand{\hactb}{\textcolor{cyan!80!black}{\actb}}
\newcommand{\Succ}{\mathit{Succ}}
\newcommand{\target}{T}
\newcommand{\trunc}[2][]{\optmdp[#1]{\llbracket \, #2 \, \rrbracket}}
\newcommand{\strunc}[2][]{\optmdp[#1]{\llfloor \,#2 \,\rrfloor}}
\newcommand{\apath}{\alpha}
\newcommand{\run}{\mathit{\varrho}}
\newcommand{\hist}{\alpha}
\newcommand{\histpr}{\beta}
\newcommand{\histconc}{\odot}
\newcommand{\len}[1]{len(#1)}
\newcommand{\pref}[1]{_{..#1}}
\newcommand{\suff}[1]{_{#1..}}
\newcommand{\infix}[2]{_{#1..#2}}
\newcommand{\rstate}[2][\run]{#1_{#2}}
\newcommand{\last}[1]{\mathit{last(#1)}}
\newcommand{\loadedpath}[2]{\loaded{#1}{#2}}
\newcommand{\reslevs}[3][]{\optmdp[#1]{\mathit{RL}}(\loadedpath{#3}{#2})}
\newcommand{\reslevsscr}[4][]{\optmdp[#1]{\mathit{RL}}(\tensor*[^{#2}]{#3}{#4})}
\newcommand{\lreslev}[3][]{\mathit{last}\reslevs[#1]{#2}{#3}}
\newcommand{\lcompatible}[4][]{\optmdp[#1]{\mathsf{Comp}}(#2,#3,#4)}
\newcommand{\selector}{\Phi}
\newcommand{\selrule}{\varphi}
\newcommand{\dom}{\mathit{dom}}
\newcommand{\srules}[1][]{\optmdp[#1]{\mathit{Rules}}}
\newcommand{\selection}[2]{\mathit{select}(#1,#2)}
\newcommand{\obj}{\mathsf{O}}
\newcommand{\safety}{\mathsf{S}}
\newcommand{\reachability}[1][]{\mathsf{R}_{#1}}
\newcommand{\buchi}[1][]{\mathsf{B}_{#1}}
\newcommand{\nonreloading}[1][]{\mathsf{N}_{#1}}
\newcommand{\minreach}[2][]{\mathsf{F}_{\!#1}^{#2}}
\newcommand{\sdsat}[6][]{#2 \!\tensor*[_{#3}^{#4}]{\models}{_{#1}^{\!#6}}\!#5}
\newcommand{\vecsat}[5][]{#2 \!\tensor*[^{#3}]{\models}{_{#1}^{\!#5}}\!#4}
\newcommand{\pos}{>0}
\newcommand{\as}{=1}
\newcommand{\ml}[3][]{\vec{ml}[#2]_{#1}^{#3}}
\newcommand{\AVname}{\mathit{AV}}
\newcommand{\AV}[3]{\AVname(#1,#2,#3)}
\newcommand{\shopeName}{\mathit{HV}}
\newcommand{\shope}[4]{\shopeName(#1, #2, #3, #4)}
\newcommand{\shopeSinks}[5]{\shopeName[#5](#1, #2, #3, #4)}
\newcommand{\SVname}{\mathit{SV}}
\newcommand{\SV}[3]{\SVname(#1, #2, #3)}
\newcommand{\SVSinks}[4]{\SVname[#4](#1, #2, #3)}
\newcommand{\SVThres}[4]{\SVname_{\!#4}(#1, #2, #3)}
\newcommand{\reachindex}{reachability index}
\newcommand{\sinkstates}[1]{F(#1)}
\newcommand{\ERT}[5][]{\mathit{ERT}_{#1}(#2, #3, #4, #5)}
\newcommand{\old}{\mathit{old}}
\newcommand{\varRel}{\reloads'}
\newcommand{\varToRemove}{\mathit{Unusable}}
\newcommand{\minsafe}{min-safe}
\newcommand{\FiMDP}{\textsc{FiMDP}}
\newcommand{\FiMDPEnv}{\textsc{FiMDPEnv}}
\newcommand{\Storm}{\textsc{Storm}}
\newcommand{\Stormpy}{\textsc{Stormpy}}
\newcommand{\Jani}{JANI}
\newcommand{\Prism}{PRISM}
\title{Efficient Strategy Synthesis for MDPs\\ with Resource Constraints}
\author{František Blahoudek,
Petr Novotný,
Melkior Ornik,
Pranay Thangeda, and
Ufuk Topcu
\thanks{%
  Submitted for review on 26 January, 2021. This work was partially supported by NASA’s Space Technology Research Grants program for Early Stage Innovations under the grant ``Safety-Constrained and Efficient Learning for Resilient Autonomous Space Systems'', by DARPA's grant HR001120C0065, and by the Czech Ministry of Education by ``Youth and Sports'' ERC.CZ project LL1908. Petr Novotný is supported by the Czech Science Foundation Junior grant no. GJ19-15134Y.}%
  \thanks{František Blahoudek was with the Oden Institute, The University of Texas at Austin, Austin, USA. He is now with the Faculty of Information Technology, Brno University of Technology, Brno, Czech Republic (e-mail: frantisek.blahoudek@gmail.com).}
  \thanks{Petr Novotný is with the Faculty of Informatics, Masaryk University, Brno, Czech Republic (e-mail: petr.novotny@fi.muni.cz).}
  \thanks{Melkior Ornik and Pranay Thangeda are with the Department of Aerospace Engineering, University of Illinois at Urbana-Champaign, Urbana, USA (e-mail: mornik@illinois.edu, pranayt2@illinois.edu).}
  \thanks{Ufuk Topcu is with the Department of Aerospace Engineering and Engineering Mechanics, The University of Texas at Austin, Austin, USA (e-mail: utopcu@utexas.edu).}%
}
\begin{document}

\maketitle

\begin{abstract}
We consider qualitative strategy synthesis for the formalism called
consumption Markov decision processes. This formalism can model dynamics of an
agents that operates under resource constraints in a stochastic environment.
The presented algorithms work in time polynomial with respect to the
representation of the model and they synthesize strategies ensuring that a
given set of goal states will be reached (once or infinitely many times) with
probability 1 without resource exhaustion. In particular, when the amount of
resource becomes too low to safely continue in the mission, the strategy
changes course of the agent towards one of a designated set of reload states
where the agent replenishes the resource to full capacity; with sufficient
amount of resource, the agent attempts to fulfill the mission again.

We also present two heuristics that attempt to reduce expected time that the
agent needs to fulfill the given mission, a parameter important in practical
planning. The presented algorithms were implemented and numerical examples
demonstrate (i) the effectiveness (in terms of computation time) of the
planning approach based on consumption Markov decision processes and (ii) the
positive impact of the two heuristics on planning in a realistic example.
\end{abstract}

\begin{IEEEkeywords}
consumption Markov decision process,
planning,
resource constraints,
strategy
synthesis
\end{IEEEkeywords}

\section{Introduction}

Autonomous agents like driverless cars, drones, or planetary rovers typically
operate under resource constraints. A lack of the critical resource usually
leads to a mission failure or even to a crash.

Autonomous agents are often deployed in stochastic environments which exhibit
uncertain outcomes of the agents' actions. \emph{Markov decision processes
(MDPs)} are commonly used to model such environments for planning purposes.
Intuitively, an MDP is described by a set of \emph{states} and
\emph{transitions} between these states. In a discrete-time MDP, the evolution
happens in discrete steps and a transition has two phases: first, the agent
chooses some \emph{action} to play, and the resulting state is chosen randomly
based on a probability distribution defined by the action and the agent's
state.

The interaction of an agent with an MDP is formalized using strategies. A
strategy is simply a recipe that tells the agent, in every moment, what action
to play next. The problem of finding strategies suitable for given objectives
is called \emph{strategy synthesis for MDPs}.

As the main results of this paper, we solve strategy synthesis for two kinds
of objectives in resource-constrained MDPs. These two objectives are (i)
\emph{almost-sure reachability} of a given set of states $\target$, and (ii)
\emph{almost-sure Büchi objective} for $\target$. That is, the synthesized
strategies ensure that, with probability 1 and without resource exhaustion,
some target from $\target$ will be reached at least once or $\target$ will be
visited infinitely often.

We also present two heuristics that improve the practical utility of the
presented algorithms for planning in resource-constrained systems. In
particular, the \emph{goal-leaning} and \emph{threshold} heuristics attempt,
as a secondary objective, to reach $\target$ in a short time. Further, we
briefly describe our tool implementing these algorithms and we demonstrate
that our approach specialized to qualitative analysis of resource-constrained
systems can solve this task faster then the state-of-the-art general-purpose
probabilistic model checker \Storm~\cite{storm}.

\subsection{Current approaches to resource-constraints.}

There is a substantial body of work in the area of verification of
resource-constrained systems~\cite{CdAHS:resource-interfaces,
BFLMS:weak-upper-bound,BHR:battery, BBFLMR:energy-controller-synthesis,
WHLK:energy-aware-scheduling, SSDNLB:energy-validation,
FL:featured-weighted-automata, FZ:cost-parity-games,
BDKL:energy-utility-probabilistic-mc, BDDKK:energy-utility-quantiles}. %
A naive approach is to model such systems as finite-state systems with states
augmented by an integer variable representing the current \emph{resource
level}. The resource-constraint requires that the resource level never drops
below zero.

The well-known \emph{energy}
model~\cite{CdAHS:resource-interfaces,BFLMS:weak-upper-bound} avoids the
encoding of the resource level into state space: instead, the model uses a
integer counter, transitions are labeled by integers, and taking an
$\ell$-labelled transition results in $\ell$ being added to the counter. Thus,
negative numbers stand for resource consumption while positive ones represent
charging. Many variants of both MDP and game-based energy models have been
studied. In particular,~\cite{CHD:energy-MDPs} considers strategy synthesis
for energy MDPs with qualitative Büchi and parity objectives. The main
limitation of the energy models is that in general, they are not known to
admit strategy synthesis algorithms that work in time polynomial with respect
to the representation of the model. Indeed, already the simplest problem,
deciding whether a non-negative energy can be maintained in a two-player
energy game, is at least as hard as solving mean-payoff graph
games~\cite{BFLMS:weak-upper-bound}; the question whether the latter belongs
to $\mathsf{P}$ is a well-known open
problem~\cite{Jurdzinski:parity-to-mp}. This hardness translates also to
MDPs~\cite{CHD:energy-MDPs}, making polynomial-time strategy synthesis for
energy MDPs impossible without a theoretical breakthrough.

\subsection{Consumption MDPs}

Our work is centered around \emph{Consumption MDPs (CMDPs)} which is a model
motivated by a real-world vehicle energy consumption and inspired by
consumption games~\cite{BCKN:consumption-games}. In a CMDP, the agent has a
finite storage \emph{capacity}, each action consumes a non-negative amount of
resource, and replenishing of the resource happens only in a designated
states, called \emph{reload states}, as an atomic (instant) event. In
particular, the resource levels are kept away from the states.

Reloading as atomic events and bounded capacity are the key ingredients for
efficient analysis of CMDPs. Our qualitative strategy synthesis algorithms
work provably in time that is polynomial with respect to the representation of
the model. Moreover, they synthesize strategies with a simple structure and an
efficient representation via binary counters.

We first introduced CMDPs and presented the algorithm for the Büchi objective
in~\cite{Blaetal20}. In contrast to~\cite{Blaetal20}, this paper contains the
omitted proofs, it extends the algorithmic core with the reachability
objective and it introduces \emph{goal-leaning} and \emph{threshold}
heuristics that attempt to improve expected reachability time of targets.
Moreover, the presentation in this manuscript is based on new notation that
simplifies understanding of the merits and proofs, it uses more pictorial
examples, and finally, we provide a numerical example that uses \Storm{} as a
baseline for comparison.

\subsection{Outline}%
\Cref{sec:prelims} introduces CMDPs with the necessary notation and it is
followed by \cref{sec:counter} which discusses strategies with binary counters.
\Cref{sec:safety,sec:posreach} solve two intermediate objectives for CMDPs,
namely \emph{safety} and \emph{positive reachability}, that serve as stepping
stones for the main results. The solution for the Büchi objective is
conceptually simpler than the one for almost-sure reachability and thus is
presented first in \cref{sec:buchi}, followed by \cref{sec:as-reach} for the
latter. \Cref{sec:heuristics} defines expected reachability time and proposes
the two heuristics for its reduction. Finally, \cref{sec:experiments}
describes briefly our implementation and two numerical examples: one showing
the effectiveness of CMDPs for analysis of resource-constrained systems and
one showing the impact of the proposed heuristics on expected reachability
time. For better readability, two rather technical proofs were moved from
\cref{sec:posreach} to Appendix.
\section{Preliminaries}
\label{sec:prelims}

We denote by $\Nset$ the set of all non-negative integers and by $\extNset$
the set $\Nset \cup \{\infty\}$. For a set $I$ and a vector
$\vec{v}\in\extNset^{I}$ indexed by $I$ we use $\veccomp{\vec{v}}{i}$ for the
$i$-component of $\vec{v}$. We assume familiarity with basic notions of
probability theory.

\subsection{Consumption Markov decision processes (CMDPs)}

\begin{definition}[CMDP]
A \emph{consumption Markov decision process} (CMDP) is a tuple %
$\mdp = (\states, \actions, \trans, \cons, \reloads, \Ca)$ where $\states$ is
a finite set of  \emph{states}, %
$\actions$ is a finite set of \emph{actions}, %
$\trans\colon \states\times \actions \times \states \rightarrow [0,1]$ is a
\emph{transition function} such that for all $s\in\states$ and $a\in\actions$
we have that $\sum_{t\in\states}\trans(s,a,t) = 1$, %
$\cons\colon \states \times \actions \rightarrow \Nset$ is a \emph{consumption
function}, %
$\reloads\subseteq \states$ is a set of \emph{reload states} where the
resource can be reloaded, and %
$ \Ca $  is a \emph{resource capacity}.
\end{definition}

\textbf{Visual representation.} CMDPs are visualized as shown in
\cref{fig:example} for a CMDP $(\{r, s, t, u, v\}, \{\acta, \actb\}, \trans, \cons,
\{r, t\}, 20)$. States are circles, reload states are double circled, and
target states (used later for reachability and Büchi objectives) are
highlighted with a green background. Capacity is given in the yellow box. The
functions $\trans$ and $\cons$ are given by (possibly branching) edges in the
graph. Each edge is labeled by the name of the action and by its consumption
enclosed in brackets. Probabilities of outcomes are given by gray labels in
proximity of the respective successors. For example, the cyan branching edge
stands for $\trans(s,\actb,u)=\trans(s,\actb,v)=\frac{1}{2}$ and $\cons(s,\actb)=5$. %
To avoid clutter, we omit $\textcolor{gray}{1}$ for non-branching edges and we
merge edges that differ only in action names and otherwise are identical. As
an example, the edge from $r$ to $s$ means that $\trans(r,\acta,s) = \trans(r,\actb,s)
= 1$ and $\cons(r,\acta) = \cons(r,\actb) = 1$. The colors of edges do not carry any
special meaning are used later in text for easy identification of the
particular actions.

\begin{figure}[h]
\centering
\begin{tikzpicture}[automaton, xscale=1.5]
\tikzstyle{a} = [magenta!80!black]
\tikzstyle{b} = [cyan!80!black]
\tikzstyle{ab} = [black, thick]
\node[state] (start) at (0,0)  {$s$};
\node[state,reload,target] (t)   at (2.5,-1) {$t$};
\node[state,reload] (rel)   at (-1.5,0) {$r$};
\node[state] (mis)      at (2.5,1)  {$u$};
\node[state] (pathhome) at (1,1)  {$v$};

\path[->,auto,swap]
(start) edge[bend right=80, a]
  node[] {$\hacta$}
  node[cons, swap] {2}
(rel)
(rel) edge[bend right=80, ab]
  node[swap] {$\acta,\actb$}
  node[cons] {1}
(start)
(start)
  edge[out=0, in=120,looseness=1.1, b]
    node[cons] {5}
    node[prob,below left,pos=.8] {$\frac{1}{2}$}
  (t)
  edge[out=0, in=240,looseness=1.1, b]
    node[swap, pos=.46] {$\hactb$}
    node[prob,above left,pos=.8] {$\frac{1}{2}$}
  (mis)
(mis) edge[ab]
  node {$\acta,\actb$}
  node[cons,swap, pos=.6] {1}
(pathhome)
(pathhome) edge[ab, bend right]
  node {$\acta,\actb$}
  node[cons, swap, pos=.5] {2}
(start)
(t) edge[loop above, ab, looseness=16]
  node[] {$\acta,\actb$}
  node[cons, right, outer sep=5pt] {1}
(t)
;

\node[capacity] at (1, -1) {20};
\end{tikzpicture}
\caption{A CMDP with a target set $\{t\}$.}%
\label{fig:example}
\end{figure}
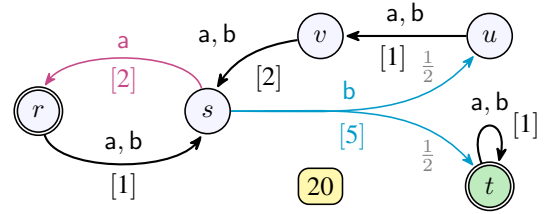

For $s\in \states$ and $a\in \actions$, we denote by $\Succ(s,a)$ the set
$\{t\mid \trans(s,a,t)>0\}$. A \emph{path} is a (finite or infinite)
state-action sequence $\hist=s_1a_1s_2a_2s_3\dots\in (\states\cdot
\actions)^\omega \cup (\states\cdot \actions)^*\cdot\states$ such that
$s_{i+1}\in \Succ(s_i,a_i)$ for all $i$. We define $\rstate[\hist]{i}=s_i$ and we say that $\apath$ is \emph{$s_1$-initiated}. We use
$\hist\pref{i}$ for the finite prefix $s_1 a_1 \dots s_i$ of $\hist$, $
\hist\suff{i} $ for the suffix $ s_i a_i\dots $, and $\hist\infix{i}{j}$ for
the infix $s_ia_i\ldots s_j$. A finite path is a \emph{cycle} if it starts and
ends in the same state and is \emph{simple} if none of its proper infixes
forms a cycle. The \emph{length} of a path $\hist$ is the number $\len{\hist}$
of actions on $\hist$, with $\len{\hist}=\infty$ if $\hist$ is infinite.

An infinite path is called a \emph{run}. We typically name runs by variants of
the symbol $\run$. A finite path is called \emph{history}. We use
$\last{\hist}$ for the last state of a history $\hist$. For a history $\hist$
with $\last{\hist}=s_1$ and for $\histpr=s_1a_1s_2a_2\ldots$ we define a
\emph{joint path} as $\hist\histconc\histpr = \hist a_1s_2a_2\ldots$.

A CMDP is \emph{decreasing} if for every cycle $ s_1 a_1s_2 \ldots
a_{k-1}s_k $ there exists $ 1 \leq i < k $ such that  $ \cons(s_i,a_i)
>0 $. Throughout this paper we consider only decreasing CMDPs. The
only place where this assumption is used are the proofs
of~\Cref{thm:safety-main} and \Cref{thm:buchi}.

\subsection{Resource: consumption and levels}%

The semantics of the consumption $\cons$, reload states $R$ and capacity $\Ca$
naturally capture evolution of levels of the resource along paths in $\mdp$.
Intuitively, each computation of $\mdp$ must start with some \emph{initial
load} of the resource, actions consume the resource, and reload states
replenish the resource level to $\Ca$. The resource is depleted if its level
drops below $0$, which we indicate by the symbol $\bot$ in the following.

Formally, let $\apath = s_1a_1s_2\ldots s_n$ (where $n$ might be $\infty$) be a
path in $\mdp$ and let $0\leq d \leq \Ca$ be an \emph{initial load}. We write
$\loadedpath\apath{d}$ to denote the fact that $\apath$ started with $d$ units
of the resource. We say that $\apath$ is \emph{loaded} with $d$ and that
$\loadedpath{\apath}{d}$ is a \emph{loaded path}. The \emph{resource levels of
$\loadedpath{\apath}{d}$} is the sequence $\reslevs[\mdp]{d}{\apath} =
r_1,r_2,\ldots,r_n$ where $r_1 = d$ and for $1 \leq i < n$ the next resource
level $r_{i+1}$ is defined inductively, using $c_i=\cons(s_i, a_i)$ for the
consumption of $a_i$, as
\[
r_{i+1} =
\begin{cases}
  r_i - c_i &
    \text{if } s_i \not\in\reloads \text{ and }%
    c_i \leq r_i \neq \bot\text{,}\\
  \Ca - c_i &
    \text{if } s_i \in \reloads \text{ and }%
    c_i\leq \Ca \text{ and }%
    r_i \neq \bot\text{,}\\
  \bot & \text{otherwise}.
\end{cases}
\]
If $\apath$ (and thus $n$) is finite, we use $\lreslev[\mdp]{d}{\apath}$ to
reference the last resource level $r_n$ of $\loadedpath{\apath}{d}$.

A loaded path $\loadedpath{\apath}{d}$ is \emph{safe} if $\bot$ is not present
in $\reslevs[\mdp]{d}{\apath}$, which we write as
$\bot\notin\reslevs[\mdp]{d}{\apath}$. Naturally, if $\loadedpath{\apath}{d}$ is
safe then $\loadedpath{\apath}{h}$ is safe for all $h \geq d$.

\begin{example}\label{ex:reslev}
Consider the CMDP in \cref{fig:example} with capacity $20$ and the run
$\run=(s\hacta r \acta)^\omega$ with the initial load $2$. We have that
$\reslevs[\mdp]{2}{\run}=2,0,19,17,19,17\ldots$ and thus
$\loadedpath{\run}{2}$ is safe. On the other hand, for the run
$\run'=(s\hactb u \acta v \acta)^\omega$ we have
$\reslevsscr[\mdp]{20}{\run}{^\prime} = 20,15,14,12,7,6,4,\bot,\bot,\ldots$
and, in fact, no initial load can make $\run'$ safe.
\end{example}

\subsection{Strategies}%
A \emph{strategy} $\sigma$ for $\mdp$ is a function assigning an action to
each loaded history. An evolution of $\mdp$ under the control of $\sigma$
starting in some initial state $s\in\states$ with an initial load $d \leq \Ca$
creates a loaded path $\loadedpath{\apath}{d}={}^ds_1a_1s_2\ldots$ as follows.
The path starts with $s_1=s$ and for $i \geq 1$ the action $a_i$ is selected
by the strategy as $a_{i}=\sigma({}^ds_1a_1s_2\ldots s_i)$, and the next state
$s_{i+1}$ is chosen randomly according to the values of $\trans(s_i, a_{i},
\cdot)$. Repeating this process \emph{ad infinitum} yields an infinite sample
run (loaded by $d$). Loaded runs created by this process are
\emph{$\sigma$-compatible}. We denote the set of all $\sigma$-compatible
$s$-initiated runs loaded by $d$ by $\lcompatible[\mdp]{\sigma}{s}{d}$.


We denote by $\loadedprobm[\mdp]{\sigma}{s}{d}(\mathsf{A})$ the probability that
a sample run from $\lcompatible[\mdp]{\sigma}{s}{d}$ belongs to a given
measurable set of loaded runs $\mathsf{A}$. For details on the formal
construction of measurable sets of runs see~\cite{Ash:book}.

\subsection{Objectives and problems}

A \emph{resource-aware objective} (or simply an \emph{objective}) is a set of
loaded runs. The objective $\safety$ (\emph{safety}) contains exactly all
loaded runs that are safe. Given a \emph{target set} $\target\subseteq
\states$ and $i \in \Nset$, the objective $\reachability[\target]^i$ (\emph{bounded
reachability}) is the set of all safe loaded runs that reach some state from
$\target$ within the first $i$ steps, which is $\reachability[\target]^i = \{
\loadedpath{\run}{d} \in \safety \mid \run_j \in \target \text{ for some } 1
\leq j \leq i + 1 \}$. The union $\reachability[\target] = \bigcup_{i \in
\Nset} \reachability[T]^i$ forms the \emph{reachability} objective. Finally,
the objective $\buchi[\target]$ (\emph{Büchi}) contains all safe loaded runs
that visit $\target$ infinitely often.%

The safety objective --- never depleting the critical resource --- is
of primary concern for agents in CMDPs. We reflect this fact in the following
definitions. Let us now fix a target set $\target\subseteq \states$, a state
$s\in \states$, an initial load $d$, a strategy $\sigma$, and an objective
$\obj$. We say that $\sigma$ loaded with $d$ in $s$
\begin{itemize}%
\item \emph{satisfies $\obj$ surely}, written as
$\sdsat[\mdp]{\sigma}{s}{d}{\obj}{}$, if and only if $\loadedpath{\run}{d}\in\obj$ holds for every $\loadedpath{\run}{d}\in\lcompatible[\mdp]{\sigma}{s}{d}$;%
\item \emph{safely satisfies $\obj$ with positive probability}, written as
$\sdsat[\mdp]{\sigma}{s}{d}{\obj}{\pos}$, if and only if
$\sdsat[\mdp]{\sigma}{s}{d}{\safety}{}$ and
$\loadedprobm[\mdp]{\sigma}{s}{d}(\obj) > 0$;
\item \emph{safely satisfies $\obj$ almost surely}, written as
$\sdsat[\mdp]{\sigma}{s}{d}{\obj}{=1}$, if and only if
$\sdsat[\mdp]{\sigma}{s}{d}{\safety}{}$ and
$\loadedprobm[\mdp]{\sigma}{s}{d}(\obj) = 1$.%
\end{itemize}

We naturally extend the satisfaction relations to strategies loaded by vectors.
Let $\vec{x}\in \extNset^\states$ be a vector of initial loads. The strategy
$\sigma$ loaded by $\vec{x}$ satisfies $\obj$, written as
$\vecsat[\mdp]{\sigma}{\vec{x}}{\obj}{}$, if and only if
$\sdsat[\mdp]{\sigma}{s}{\veccomp{\vec{x}}{s}}{\obj}{}$ holds for all
$s\in\states$ with $\veccomp{\vec{x}}{s} \neq \infty$. We extend the other two
relations analogously to $\vecsat[\mdp]{\sigma}{\vec{x}}{\obj}{\pos}$ and
$\vecsat[\mdp]{\sigma}{\vec{x}}{\obj}{=1}$.

The vector $\ml[\mdp]{\obj}{}$ is the component-wise minimal vector for which
there exists a strategy $\pi$ such that $\vecsat[\mdp]{\pi}{\ml{\obj}{}}{\obj}{}$.
We call $\pi$ the \emph{witness strategy} for $\ml[\mdp]{\obj}{}$. If
$\veccomp{\ml[\mdp]{\obj}{}}{s}=\infty$, no strategy satisfies $\obj$ from $s$
even when loaded with $\Ca$. Vectors $\ml[\mdp]{\obj}{\pos}$ and
$\ml[\mdp]{\obj}{=1}$ are defined analogously using $\vecsat{}{}{}{\pos}$ and
$\vecsat{}{}{}{=1}$, respectively.

We consider the following qualitative problems for CMDPs: \emph{Safety},
\emph{positive reachability} \emph{almost-sure Büchi}, and \emph{almost-sure
reachability} which equal to computing $\ml[\mdp]{\safety}{}$,
$\ml[\mdp]{\reachability}{\pos}$, $\ml{\buchi}{\as}$, and
$\ml[\mdp]{\reachability}{\as}$, respectively, and the corresponding witness
strategies. The solutions of the latter two problems build on top of the
first two.

\subsection{Additional notation and conventions}

For given $\reloads'\subseteq \states$, we denote by $\mdp(R')$ the CMDP that
uses $R'$ as the set of reloads and otherwise is defined as $\mdp$. Throughout
the paper, we drop the subscripts $\mdp$ and $\target$ in symbols whenever
$\mdp$ or $\target$ is known from the context.

Calligraphic font (e.g. $\mdp$) is used for names of CMDPs, sans serifs (e.g. $\safety$) is used for
objectives (set of loaded runs), and vectors are written in bold. Action names
are letters from the start of alphabet, while states of CMDPs are usually
taken from the later parts of alphabet (starting with $r$). The symbol
$\apath$ is used for both finite and infinite paths, and $\run$ is only used
for infinite paths (runs). Finally, strategies are always variants of
$\sigma$ or $\pi$.

\subsection{Strategies revisited}

A strategy $\sigma$ is \emph{memoryless} if $ \sigma(\loadedpath{\hist}{d})
= \sigma(\loadedpath{\histpr}{h}) $ whenever $ \last{\hist}=\last{\histpr} $.

\begin{example}\label{ex:resource_strategy}
The runs $\run$ and $\run'$ from \cref{ex:reslev} are sample runs created by
two different memoryless strategies: $\sigma_{\hacta}$ that always picks
$\hacta$ in $s$, and $\sigma_{\hactb}$ that always picks $\hactb$ in $s$,
respectively. As $\run$ is the only $s$-initiated run of $\sigma_{\hacta}$, we
have that $\sdsat{\sigma_{\hacta}}{s}{2}{\safety}{}$. However,
$\sigma_{\hacta}$ is not useful if we attempt to eventually reach $t$ and we
clearly have $\loadedprobm{\sigma_{\hacta}}{s}{2}(\reachability[\{t\}]) = 0$.
On the other hand, $\run'$ is the witness for the fact that $\sigma_{\hactb}$
does not even satisfy the safety objective for any initial load.
As we have no other choice in $s$, we can conclude that memoryless strategies
are not sufficient in our setting. Consider instead a strategy $\pi$ that
picks $\hactb$ in $s$ whenever the current resource level is at least $10$ and
picks $\hacta$ (and reloads in $r$) otherwise. Loaded with $2$ in $s$, $\pi$
satisfies safety and it guarantees reaching $t$ with a positive probability:
in $s$, we need at least 10 units of resource to return to $r$ in the case we
are unlucky and $\hactb$ leads us to $u$; if we are lucky, $\hactb$ leads us
directly to $t$, witnessing that
$\loadedprobm{\pi}{s}{2}(\reachability[\{t\}]) > 0$. Moreover, at
\emph{every} revisit of $r$ there is a $\frac{1}{2}$ chance of hitting $t$
during the next attempt, which shows that
$\sdsat{\pi}{s}{2}{\reachability[\{t\}]}{=1}$.
\end{example}

\textbf{Remark.} While computing the sure satisfaction relation $\models$ on a
CMDP follows similar approaches as used for solving a consumption 2-player
game~\cite{BCKN:consumption-games}, the solutions for $\sdsat{}{}{}{}{\pos}$
and $\sdsat{}{}{}{}{=1}$ differ substantially. Indeed, imagine that, in the
CMDP from \cref{fig:example}, the outcome of the action $\hactb$ from state $s$
is resolved by an adversarial player (who replaces the random resolution). The
player can always pick $u$ as the next state and then the strategy $\pi$ does
not produce any run that reaches $t$. In fact, there would be no strategy that
guarantees reaching $t$ against such a player at all.

\medskip

The strategy $\pi$ from \cref{ex:resource_strategy} uses finite memory to
track the resource level exactly. Under the standard definition, a strategy is
a \emph{finite memory} strategy, if it can be encoded by a \emph{memory
structure}, a type of finite transducer (a finite state machine with outputs).
Tracking resource levels using states in transducers is memory-inefficient.
Instead, the next section introduces resource-aware strategies that rely on
binary counters to track resource levels.

\section{Strategies with binary counters}%
\label{sec:counter}

In this section, we define a succinct representation of finite-memory
strategies suitable for CMDPs. Let us fix a CMDP $\mdp = (\states, \actions,
\trans, \cons, \reloads, \Ca)$ for the rest of this section. In our setting,
strategies need to track resource levels of histories. A non-exhausted
resource level is always a number between $0$ and $\Ca$, which can be
represented with a binary-encoded bounded counter. A binary-encoded counter
needs $\log_2 \Ca$ bits of memory to represent numbers between $0$ and $\Ca$
(the same as integer variables in computers). Representation of resource
levels using states in transducers would require $\Ca$ states.

We call strategies with such binary-encoded counters \emph{finite counter
strategies}. In addition to the counter, a finite counter strategy needs
\emph{rules} that select actions based on the current resource level, and a
\emph{rule selector} that pick the right rule for each state.

\begin{definition}[Rule]
A \emph{rule} $\selrule$ for $\mdp$ is a partial function
from the set $\{0,\ldots,\Ca\}$ to $A$. An undefined value for some
$n$ is indicated by $\selrule(n)=\bot$.
\end{definition}

We use $\dom(\selrule) = \{n \in \{0 ,\ldots,\Ca\} \mid \selrule(n)\neq \bot\}$
to denote the domain of $\selrule$ and we call the elements of $\dom(\selrule)$
\emph{border levels}. We use $\srules[\mdp]$ for the set of all rules
for $\mdp$.

A rule compactly represents a total function using intervals.
Intuitively, the selected action is the same for all values of the resource
level in the interval between two border levels. Formally, let $l$ be the
current resource level and let $n_1 < n_2 < \cdots < n_k$ be the border levels
of $\varphi$ sorted in the ascending order. Then the selection according to
rule $\selrule$ for $l$, written as $\selection{\selrule}{l}$, picks the
action $\selrule(n_i)$, where $n_i$ is the largest border level such that $n_i
\leq l$. In other words, $\selection{\selrule}{l} = \selrule(n_i)$ if the
current resource level $l$ is in $[n_i,n_{i+1})$ (putting $n_{k+1} = \Ca +
1$). We set $\selection{\selrule}{l} = a$ for some globally fixed action
$a\in\actions$ (for completeness) if $l < n_1$. In particular,
$\selection{\selrule}{\bot} = a$.

\begin{definition}[Rule selector]
A \emph{rule selector} for $\mdp$ is a function $\selector\colon
\states \rightarrow \srules$.
\end{definition}

A binary-encoded counter that tracks the resource levels of paths together
with a rule selector $\selector$ encode a strategy $\sigma_{\selector}$. Let
$\loadedpath{\hist}{d}=\loadedpath{s}{d}_1a_1s_2\ldots s_n$ be a loaded
history. We assume that we can access the value of $\lreslev{d}{\hist}$ from
the counter and we set
\[
\sigma_\selector(\loadedpath{\hist}{d}) = \selection{\selector(s_n)}{\lreslev{d}{\hist}}.
\]

A strategy $\sigma$ is a \emph{finite counter strategy} if there is a rule
selector $\selector$ such that $\sigma = \sigma_\selector$. The rule selector
can be imagined as a device that implements $\sigma$ using a table of size $ \calO(|\states|) $, where the size of each
cell
corresponds to the number of border levels times $ \calO(\log \Ca) $ (the latter representing the number of bits
required to encode a level). In particular, if the total number of border levels $ \selector $ is polynomial in the
size
of the MDP, so is the number of bits required to represent $ \selector $ (and thus, $ \sigma_\selector $). This
contrasts with the traditional representation of finite-memory strategies via
\emph{transducers}~\cite{Apt:2011:LecturesGameTheory}, since transducers would require at least $ \Theta(\Ca)$ states
to keep track of the current resource level.

\begin{example}\label{ex:selector}
Consider again the CMDP from \cref{fig:example}. Let $\selrule$ be a
rule with $\dom(\selrule) = \{0,10\}$ such that $\selrule(0) =
\hacta$ and $\selrule(10) = \hactb$, and let $\selrule'$ be a rule
with $\dom(\selrule')=\{0\}$ such that $\selrule(0)=\acta$. Finally, let
$\selector$ be a rule selector such that $\selector(s) = \selrule$ and
$\selector(s') = \selrule'$ for all $s \neq s' \in \states$. Then, the
strategy $\pi$ informally described in \cref{ex:resource_strategy} can
be formally represented by putting $\pi = \sigma_\selector$. Note that for any
$\vec{i}$ with $\vec{i}(s)\geq 2$, $\vec{i}(t)\geq 0$, $\vec{i}(u)\geq 5$,
$\vec{i}(v)\geq 4$, and $\vec{i}(r)\geq 0$ we have that
$\vecsat{\pi}{\vec{i}}{\reachability[\{t\}]}{=1}$.
\end{example}
\section{Safety}\label{sec:safety}%

In this section, we present \cref{algo:safety} that computes $\ml{\safety}{}$
and the corresponding witness strategy. Such a strategy guarantees that, given
a sufficient initial load, the resource will never be depleted regardless the
resolution of actions' outcomes. In the remainder of the section we fix an MDP
$\mdp = (\states, \actions, \trans, \cons, \reloads, \Ca)$.

A safe run loaded with $d$ has the following two properties: (i) it never
consumes more than $\Ca$ units of the resource between 2 consecutive visits of
reload states, and (ii) it consumes at most $d$ units of the resource (energy)
before it reaches the first reload state. To ensure (i), we need to identify a
maximal subset $\reloads'\subseteq \reloads$ of reload states for which there
is a strategy $\sigma$ that, starting in some $r\in \reloads'$, can always
reach $\reloads'$ again using at most $\Ca$ resource units. To ensure (ii), we
need a strategy that suitably navigates towards $\reloads'$ while not
reloading and while using at most $d$ units of resource.

In summary, for both properties (i) and (ii) we need to find a strategy that
can surely reach a set of states ($\reloads'$) without reloading and withing a
certain limit on consumption ($\Ca$ and $d$, respectively). We capture the
desired behavior of the strategies by a new objective $\nonreloading$
(\emph{non-reloading reachability}).

\subsection{Non-reloading reachability}%
\label{sec:non-reloading}%

{The problem of non-reloading reachability in CMDPs
is similar to the problem of \emph{minimum cost reachability} on regular
MDPs with non-negative costs, which was studied
before~\cite{KBBEGRZ:short-path-interdiction}. In this sub-section, we present a new iterative algorithm for this problem which fits better into our framework and is implemented in our tool.} The reachability objective
$\reachability$ is defined as a subset of $\safety$, and thus relies on
resource levels. The following definition of $\nonreloading$ follows similar
ideas as we used for $\reachability$, but (a) ignores what happens after the
first visit of the target set, and (b) it uses the cumulative consumption
instead of resource levels to ignore the effect of reload states.

Given $\target\subseteq \states$ and $i\in\Nset$, the objective
$\nonreloading[\target]^i$ (\emph{bounded non-reloading reachability}) is the
set of all (not necessary safe) loaded runs
$\loadedpath{s}{d}_1a_1s_2a_2\ldots$ such that for some $1 \leq f \leq i+1$ it
holds $s_f\in\target$ and $\sum_{j=1}^{f-1}\cons(s_j, a_j) \leq d$. The union
$\nonreloading[\target]=\bigcup_{i\in\Nset}\nonreloading[\target]^i$ forms the
\emph{non-reloading reachability} objective.

Let us now fix some $\target\subseteq \states$. In the next few paragraphs, we
discuss the solution of sure non-reloading reachability of $\target$:
computing the vector $\ml{\nonreloading[\target]}{}$ and the corresponding
witness strategy. The solution is based on backward induction (with respect to
number of steps needed to reach $\target$). The key concept here is the
\emph{value of action} $a$ in a state $s$ based on a vector
$\vec{v}\in\extNset^{\punct{\states}}$, denoted as $\AV{\vec{v}}{s}{a}$ and
defined as follows.
\[
\AV{\vec{v}}{s}{a} = \cons(s,a) + \max_{t\in \Succ(s,a)} \veccomp{\vec{v}}{t}
\]
Intuitively, $\AV{\vec{v}}{s}{a}$ is the consumption of $a$ in $s$ plus the
worst value of $\vec{v}$ among the relevant successors. Now imagine that
$\vec{v}$ is equal to $\ml{\nonreloading^{i}}{}$; that is, for each
state $s$ it contains the minimal amount of resource needed (without reloading)
to reach $\target$ in at most $i$ steps. Then, $\AVname$ for $a$ in $s$ is the
minimal amount of resource needed to reach $\target$ in $i+1$ steps when playing
$a$ in $s$.

The following functional $\mathcal{F} \colon \extNset^\states \to
\extNset^\states$ is a simple generalization of the standard Bellman
functional used for computing shortest paths in graphs. We use
$\mathcal{F}^i(\vec{v})$ for the result of $i$ applications of $\mathcal{F}$
on $\vec{v}$.
\[
\veccomp{\mathcal{F}(\vec{v})}{s} =
\begin{cases}
  0 & s \in \target\\
  \min_{a\in \actions} \AV{\vec{v}}{s}{a} & s \not \in \target\\
\end{cases}
\]
To complete our induction-based computation, we need to find the right
initialization vector $\vec{x}_\target$ for $\mathcal{F}$. As the intuition
for action value hints, $\vec{x}_\target$ should be precisely
$\ml{\nonreloading^0}{}$ and thus is defined as
$\veccomp{\vec{x}_\target}{s}=0$ for $s\in\target$ and as
$\veccomp{\vec{x}_\target}{s}=\infty$ otherwise.

\begin{lemma}
\label{lem:stepreach-cost} It holds that
$\ml{\nonreloading[\target]^i}{} = \mathcal{F}^i(\vec{x}_T)$
for every $i\geq 0$.
\end{lemma}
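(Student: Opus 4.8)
The plan is to prove $\ml{\nonreloading[\target]^i}{} = \mathcal{F}^i(\vec{x}_\target)$ by induction on $i$, which is natural given that $\mathcal{F}$ was explicitly constructed as a one-step backward-induction operator and $\nonreloading[\target]^i$ is indexed by the step bound.

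\textbf{Base case.} First I would establish $i = 0$. By definition, $\nonreloading[\target]^0$ consists of loaded runs $\loadedpath{s}{d}_1 a_1 s_2 \ldots$ with $s_f \in \target$ for some $1 \leq f \leq 1$ and $\sum_{j=1}^{f-1}\cons(s_j,a_j) \leq d$; the only possibility is $f = 1$, i.e.\ $s_1 \in \target$ (the empty sum being $0 \leq d$). Hence from a state $s$ no resource is required when $s \in \target$, and the objective is simply unsatisfiable when $s \notin \target$ (no load suffices to be already in $\target$). This gives $\veccomp{\ml{\nonreloading[\target]^0}{}}{s} = 0$ for $s \in \target$ and $= \infty$ otherwise, which is exactly $\vec{x}_\target = \mathcal{F}^0(\vec{x}_\target)$.

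\textbf{Inductive step.} Assuming $\ml{\nonreloading[\target]^i}{} = \mathcal{F}^i(\vec{x}_\target)$, I would show $\ml{\nonreloading[\target]^{i+1}}{} = \mathcal{F}(\mathcal{F}^i(\vec{x}_\target))$. For $s \in \target$ both sides are $0$ (the run is already in $\target$ at step $f=1$, and $\mathcal{F}$ outputs $0$ on $\target$ by its first case), so the interesting case is $s \notin \target$. The core claim is that the minimal load needed to surely reach $\target$ within $i+1$ steps from $s$ equals $\min_{a} \AV{\ml{\nonreloading[\target]^i}{}}{s}{a}$. For one inequality ($\leq$), given any action $a$, I would exhibit a witness strategy that plays $a$ first and thereafter follows, from each successor $t \in \Succ(s,a)$, the witness strategy for $\ml{\nonreloading[\target]^i}{}$; since consumption is additive and reaching $\target$ within $i$ further steps from each $t$ costs at most $\veccomp{\ml{\nonreloading[\target]^i}{}}{t}$, the load $\cons(s,a) + \max_{t}\veccomp{\ml{\nonreloading[\target]^i}{}}{t} = \AV{\ml{\nonreloading[\target]^i}{}}{s}{a}$ suffices. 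Minimizing over $a$ gives $\leq$. For the reverse inequality ($\geq$), I would argue that any surely-winning strategy must commit to some first action $a = \sigma(\loadedpath{s}{d})$, and because \emph{all} successors in $\Succ(s,a)$ occur in some compatible run, the strategy must tolerate the worst successor; the residual strategy restricted to histories through each $t$ must surely reach $\target$ within $i$ steps, forcing the remaining load $d - \cons(s,a)$ to be at least $\max_t \veccomp{\ml{\nonreloading[\target]^i}{}}{t}$ by the inductive hypothesis. Hence $d \geq \AV{\ml{\nonreloading[\target]^i}{}}{s}{a} \geq \min_{a'}\AV{\ml{\nonreloading[\target]^i}{}}{s}{a'}$.

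\textbf{Main obstacle.} The delicate point is the $\geq$ direction of the inductive step: making precise that a \emph{single} sure-winning strategy induces, for each successor $t$, a well-defined residual strategy that surely reaches $\target$ within $i$ steps with the claimed remaining budget, and that the ``worst successor'' bound is forced rather than merely sufficient. This requires care because the objective is defined over runs via cumulative consumption rather than resource levels (so reload states play no role here), and because successors are quantified with $\max$ (reflecting sure, adversarial-style resolution). I would handle it by passing from the strategy $\sigma$ to its residual $\sigma_t(\loadedpath{\histpr}{d'}) = \sigma(\loadedpath{s}{d} a \histpr)$ after the prefix $s\,a\,t$, observing that $\sigma$-compatible runs through $t$ correspond exactly to $\sigma_t$-compatible runs, and that additivity of $\cons$ lets the step-$(i+1)$ bound for $\sigma$ descend to a step-$i$ bound for each $\sigma_t$ with load $d - \cons(s,a)$. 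The monotonicity and well-definedness of $\mathcal{F}$ on $\extNset^\states$ (including correct handling of $\infty$ when some successor is unreachable in $i$ steps) is routine and I would not belabor it.
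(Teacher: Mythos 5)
Your proposal is correct and follows essentially the same route as the paper's proof: induction on $i$, with the inductive step split into the two inequalities obtained by (a) constructing a strategy that plays a minimizing action and then mimics a witness for $\ml{\nonreloading[\target]^i}{}$, and (b) arguing that any sure-winning strategy must budget for the worst successor of its first action. The paper compresses the residual-strategy argument you elaborate on into a single sentence, but the underlying reasoning is identical.
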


\begin{proof}
We proceed by induction on $i$. The base case for $i=0$ is trivial. Now assume
that the lemma holds for some $i\geq 0$.

From the definition of $\nonreloading^i$ we have that a loaded run $\loadedpath{\run}{d}=\loaded{s}{d}_1a_1s_2\ldots$ satisfies
surely $\nonreloading^{i+1}$ if and only if $s_1 \in \target$ or
$\loadedpath{\run}{h}\suff{2}\in\nonreloading^i$ for $h=d-\cons(s_1,a_1)$.
Therefore, given a state $s\notin \target$ and the load
$d=\veccomp{\ml{\nonreloading^{i+1}}{}}{s}$, each witness strategy $\sigma$
for $\ml{\nonreloading^{i+1}}{}$ must guarantee that if
$\sigma(\loadedpath{s}{d})=a$ then $d \geq \veccomp{\ml{\nonreloading^i}{}}{s'} +
\cons(s,a)$ for all $s'\in \Succ(s,a)$. That is,
$d \geq \AV{\ml{\nonreloading^i}{}}{s}{a} = \AV{\mathcal{F}^i(\vec{x}_T)}{s}{a}$.

On the other hand, let $a_m$ be the action with minimal $AV$ for $s$ based on
$\ml{\nonreloading^i}{}$. The strategy that plays $a_m$ in the first step and then mimics some witness strategy for $\ml{\nonreloading^i}{}$
surely satisfies $\nonreloading^{i+1}$ from $s$ loaded by $\AV{\ml{\nonreloading^i}{}}{s}{a_m}$. Therefore, $d \leq
\AV{\ml{\nonreloading^i}{}}{s}{a_m} = \AV{\mathcal{F}^i(\vec{x}_T)}{s}{a_m}$.

Together, $d = \veccomp{\ml{\nonreloading^{i+1}}{}}{s} =
\min_{a\in A}\AV{\ml{\nonreloading^i}{}}{s}{a} =
\veccomp{\mathcal{F}(\ml{\nonreloading^i}{})}{s}$ and that is by induction
hypothesis equal to $\veccomp{\mathcal{F}(\mathcal{F}^{i}(\vec{x}_\target))}{s}=\veccomp{\mathcal{F}^{i+1}(\vec{x}_\target)}{s}$.
\end{proof}

\begin{theorem}
\label{thm:minreach-main} Denote by $n$ the length of the longest simple path
in $\mdp$. Iterating $\mathcal{F}$ on $\vec{x}_T$ yields a fixed point in at most
$n$ steps and this fixed point equals $\ml{\nonreloading[\target]}{}$.
\end{theorem}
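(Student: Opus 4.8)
The plan is to combine the characterization from \cref{lem:stepreach-cost}, which identifies $\mathcal{F}^i(\vec{x}_\target)$ with $\ml{\nonreloading[\target]^i}{}$, with a cycle-elimination argument that exploits the \emph{decreasing} assumption on $\mdp$. First I would establish that the iteration produces a non-increasing chain $\mathcal{F}^0(\vec{x}_\target) \geq \mathcal{F}^1(\vec{x}_\target) \geq \cdots$ in the component-wise order on $\extNset^\states$. This follows purely functionally: $\mathcal{F}$ is monotone (since $\AV{\vec{v}}{s}{a}$ is monotone in $\vec{v}$ and both $\min$ and $\max$ preserve order), and $\mathcal{F}(\vec{x}_\target) \leq \vec{x}_\target$ because $\veccomp{\vec{x}_\target}{s} = \infty$ dominates every value for $s \notin \target$ while the target components stay $0$; monotonicity then propagates the inequality along the chain. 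Since each component lives in $\Nset \cup \{\infty\}$ and is bounded below by $0$, the chain stabilizes: once $\mathcal{F}^{k}(\vec{x}_\target) = \mathcal{F}^{k+1}(\vec{x}_\target)$, the common value $\vec{v}^*$ is a fixed point of $\mathcal{F}$.

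The core of the argument is to bound the number of iterations by $n$, and for this I would extract a memoryless strategy $\sigma^*$ from the fixed point: in every $s \notin \target$ let $\sigma^*$ play an action $a_s$ attaining $\veccomp{\vec{v}^*}{s} = \AV{\vec{v}^*}{s}{a_s}$. The defining equality gives, for every successor $t \in \Succ(s, a_s)$, the drop inequality $\veccomp{\vec{v}^*}{t} \leq \veccomp{\vec{v}^*}{s} - \cons(s, a_s)$, so $\vec{v}^*$ is non-increasing along every $\sigma^*$-path and strictly decreases across any action of positive consumption. The key step is to show that no $\sigma^*$-path can revisit a state before reaching $\target$: a repeated state would close a cycle, and by the decreasing assumption that cycle contains at least one action of positive consumption, forcing a strict decrease of $\vec{v}^*$ around a loop whose endpoints carry equal value --- a contradiction. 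Hence every $\sigma^*$-path reaches $\target$ along a simple path, of length at most $n$, and telescoping the drop inequalities shows its cumulative consumption until $\target$ is at most $\veccomp{\vec{v}^*}{s}$. Consequently $\sigma^*$ loaded with $\veccomp{\vec{v}^*}{s}$ surely satisfies $\nonreloading[\target]^n$ from $s$, giving $\mathcal{F}^n(\vec{x}_\target) = \ml{\nonreloading[\target]^n}{} \leq \vec{v}^*$; together with $\vec{v}^* \leq \mathcal{F}^n(\vec{x}_\target)$ from monotonicity this yields $\mathcal{F}^n(\vec{x}_\target) = \vec{v}^*$, so the fixed point is reached within $n$ steps.

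It remains to identify $\vec{v}^*$ with $\ml{\nonreloading[\target]}{}$. The inequality $\ml{\nonreloading[\target]}{} \leq \vec{v}^*$ is immediate, since the witness $\sigma^*$ above already satisfies $\nonreloading[\target]^n \subseteq \nonreloading[\target]$. For the converse lower bound I would run an adversarial potential argument against an arbitrary strategy $\sigma$ that surely satisfies $\nonreloading[\target]$ from $s$ loaded by some $d$: resolving every random choice towards a successor maximizing $\veccomp{\vec{v}^*}{\cdot}$, the fixed-point inequality $\max_{t \in \Succ(s,a)} \veccomp{\vec{v}^*}{t} \geq \veccomp{\vec{v}^*}{s} - \cons(s,a)$ guarantees that the sum of the consumption accumulated so far and the value $\veccomp{\vec{v}^*}{\cdot}$ of the current state never decreases along the resulting compatible run. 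Since $\sigma$ surely reaches $\target$, this run hits a target state, where the $\vec{v}^*$-term is $0$; the invariant then forces the accumulated consumption --- which is at most $d$ --- to be at least $\veccomp{\vec{v}^*}{s}$, i.e. $d \geq \veccomp{\vec{v}^*}{s}$. Taking $d = \veccomp{\ml{\nonreloading[\target]}{}}{s}$ closes the loop.

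I would expect the cycle-elimination step to be the main obstacle: it is the only place the decreasing assumption enters, and one has to argue cleanly that $\sigma^*$-paths cannot loop even through zero-consumption cycles, which is exactly what the guaranteed positive-consumption action on every cycle rules out. The monotonicity and the potential bookkeeping are routine by comparison.
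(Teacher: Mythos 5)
Your proof is correct, but it takes a genuinely different route from the paper's. The paper argues by contradiction on the iteration itself: assuming some component still improves at step $n+1$, it takes a witness strategy for $\ml{\nonreloading^{n+1}}{}$, constructs a ``tight'' run of length $n+1$ along which the values of $\ml{\nonreloading^{i}}{}$ drop exactly by the consumption, finds a repeated state by pigeonhole on the longest simple path, and uses the decreasing assumption to contradict the monotonicity $\nonreloading^{n+1-f}\supseteq\nonreloading^{n+1-l}$; it then identifies the fixed point with $\ml{\nonreloading[\target]}{}$ by appealing to \cref{lem:stepreach-cost} (implicitly passing from the bounded objectives to their union). You instead work forward from the fixed point $\vec{v}^*$, whose existence you get from monotone convergence: you extract a memoryless strategy attaining the minima, use the decreasing assumption to show its paths cannot cycle before hitting $\target$, and telescope the drop inequalities to conclude $\ml{\nonreloading^{n}}{}\leq\vec{v}^*\leq\mathcal{F}^n(\vec{x}_\target)$; you then add an independent potential argument against arbitrary strategies for the lower bound $\ml{\nonreloading[\target]}{}\geq\vec{v}^*$. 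Both proofs hinge on the decreasing assumption to eliminate cycles, but apply it to different objects (the fixed-point strategy's paths versus a tight run of the $(n{+}1)$-step witness). Your version buys two things: it produces the memoryless witness strategy as a by-product (the paper states this separately after the theorem), and your potential argument handles the unbounded objective $\nonreloading[\target]$ directly, avoiding the implicit passage from $\bigcup_i\nonreloading^i$ to a uniform step bound that the paper's final sentence glosses over; the price is a somewhat longer argument, and you do still rely on \cref{lem:stepreach-cost} for the step $\mathcal{F}^n(\vec{x}_\target)=\ml{\nonreloading^{n}}{}$.
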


\begin{proof}
For the sake of contradiction, suppose that $\mathcal{F}$ does not yield a
fixed point after $n$ steps. Then there exists a state $s$ such that $d =
\veccomp{\mathcal{F}^{n+1}(\vec{x}_\target)}{s} <
\veccomp{\mathcal{F}^{n}(\vec{x}_\target)}{s}$. Let $\sigma$ be a witness
strategy for $\mathcal{F}^{n+1}(\vec{x}_\target) =
\ml{\nonreloading^{n+1}}{}$. Now let $\loadedpath{\run}{d} =
\loadedpath{s}{d}_1a_1s_2\ldots$ be a loaded run from
$\lcompatible{\sigma}{s}{d}$ such that $s_i\notin \target$ for all $i \leq
n+1$ and $s_{n+2}\in \target$, and such that for all $1 \leq k \leq n+1$ it
holds $d-c_k = \veccomp{\ml{\nonreloading^{n+1-k}}{}}{s_{k+1}}$ where $c_k =
\sum_{j=1}^{k} \cons(s_{j}, a_{j})$ is the consumption of the first $k$
actions of $\run$. Such a run must exist, otherwise some
$\ml{\nonreloading^i}{}$ can be improved.

As $n$ is the length of the longest simple path in $\mdp$, we can conclude
that there are two indices $f < l \leq n+1$ such that $s_f = s_l = t$. But
since $\mdp$ is decreasing, we have that $c_f < c_l$ and thus
$\veccomp{\ml{\nonreloading^{n+1-f}}{}}{t} = d - c_f > d - c_l =
\veccomp{\ml{\nonreloading^{n+1-l}}{}}{t}$. As $n+1-f > n+1-l$, we reached a
contradiction with the fact that $\nonreloading^{n+1-f} \supseteq
\nonreloading^{n+1-l}$.

By \cref{lem:stepreach-cost} we have that $\mathcal{F}^n(\vec{x}_\target) =
\ml{\nonreloading}{}$.
\end{proof}

\textbf{Witness strategy for $\ml{\nonreloading}{}$.} Any memoryless strategy
$\sigma$ that picks for each history ending with a state $s$ some action $a_s$
such that $\AV{\ml{\nonreloading}{}}{s}{a_s} =
\veccomp{\ml{\nonreloading}{}}{s}$ is clearly a witness strategy for
$\ml{\nonreloading}{}$, which is,
$\vecsat{\sigma}{\ml{\nonreloading}{}}{\nonreloading}{}$.

\subsection{Safely reaching reloads from reloads}%
The objective $\nonreloading$ is sufficient for the property (ii) with
$T=\reloads'$. But we cannot use it off-the-shelf to guarantee the property
(i) at most $\Ca$ units of resource are consumed between two consecutive
visits of $\reloads\punct{'}$. For that, we need to solve the problem of
\emph{reachability within at least $1$ steps} (starting in $\target$ alone
does not count as reaching $\target$ here). We define
$\nonreloading[+\target]^i$ in the same way as $\nonreloading[\target]$ but we
enforce that $f > 1$ and we set
$\nonreloading[+\target]=\bigcup_{i\in\Nset}\nonreloading[+\target]$. To
compute $\ml{\nonreloading[+\target]}{}$, we slightly alter $\mathcal{F}$
using the following truncation operator.

\[
\veccomp{\strunc[\target]{\vec{v}}}{s} =
\begin{cases}
  \veccomp{\vec{v}}{s} &
    \text{if } s \not \in \target, \\
  0 & \text{if }s \in \target.\\
\end{cases}\]

The new functional $\mathcal{G}$ applied to $\vec{v}$ computes the new value
in the same way for all states (including states from $\target$), but treats
$\veccomp{\vec{v}}{t}$ as $0$ for $t \in \target$.

\[
\veccomp{\mathcal{G}(\vec{v})}{s} =
\min\nolimits_{a\in \actions} \AV{\strunc[\target]{\vec{v}}}{s}{a}
\]

Let $\infvec^\states\in \extNset^{\states}$ denote the vector with all components equal to
$\infty$. Clearly, $\strunc[\target]{\infvec^\states} = \vec{x}_\target$, and thus it
is easy to see that for all $s\notin\target$ and $i\in\Nset$ we have that
$\veccomp{\mathcal{G}^i(\infvec^\states)}{s} =
\veccomp{\mathcal{F}^i(\vec{x}_\target)}{s}$. Moreover,
$\strunc[\target]{\mathcal{G}^i(\infvec^\states)} = \mathcal{F}^i(\vec{x}_\target)$. A
slight modification of arguments used to prove \cref{lem:stepreach-cost} and
\cref{thm:minreach-main} shows that $\mathcal{G}$ indeed computes
$\ml{\nonreloading[+\target]}{}$ and we need at most $n+1$ iterations for the
desired fixed point. \Cref{algo:mininitcons_iterative} iteratively applies $\mathcal{G}$ on $\infvec^\states$ until a fixed point is reached.

\begin{algorithm}[ht]
\KwIn{CMDP $\mdp=(\states, \actions, \trans, \cons, \reloads, \Ca)$ and $\target\subseteq \states$}
\KwOut{The vector $\ml[\mdp]{\nonreloading[+\target]}{}$}
$\vec{v} \leftarrow \infvec^\states$\;
\Repeat{$\,\vec{v}_{\mathit{old}} = \vec{v}$}{
  $ \vec{v}_\old \leftarrow \vec{v} $\;
  \ForEach{$ s \in \states $}{
    $c \leftarrow
      \min_{a\in \actions}\AV{\strunc[\target]{\vec{v}_\old}}{s}{a}$\;
    \If{$c < \vec{v}(s)$}{
      $ \vec{v}(s) \leftarrow c$\;
    }
  }
}
\Return{$\vec{v}$}
\caption{Computing $\ml{\nonreloading[+\target]}{}$.}
\label{algo:mininitcons_iterative}
\end{algorithm}

\begin{theorem}
\label{thm:mininitconst-main} Given a CMDP $\mdp$ and a set of target states
$\target$, \cref{algo:mininitcons_iterative} computes the vector
$\ml[\mdp]{\nonreloading[+\target]}{}$. Moreover, the repeat-loop terminates
after at most $|\states|$ iterations.
\end{theorem}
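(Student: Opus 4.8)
The plan is to reduce the theorem to the two facts about $\mathcal{G}$ already announced in the text---that $\mathcal{G}^i(\infvec^\states) = \ml[\mdp]{\nonreloading[+\target]^i}{}$ and that iterating $\mathcal{G}$ from $\infvec^\states$ reaches $\ml[\mdp]{\nonreloading[+\target]}{}$ as a fixed point (the $\mathcal{G}$-analogues of \cref{lem:stepreach-cost,thm:minreach-main})---and then argue that \cref{algo:mininitcons_iterative} faithfully realizes this iteration and halts exactly when that fixed point is detected. The first thing I would establish is that $\mathcal{G}$ is monotone: since $\AV{\vec{v}}{s}{a} = \cons(s,a) + \max_{t\in\Succ(s,a)}\veccomp{\vec{v}}{t}$ is a constant plus a maximum, and the truncation $\strunc[\target]{\cdot}$ only replaces entries on $\target$ by $0$, both are order-preserving in $\vec{v}$, and the outer $\min_a$ preserves order too. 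Because $\mathcal{G}(\infvec^\states) \leq \infvec^\states$ holds trivially, monotonicity yields by induction that $\infvec^\states \geq \mathcal{G}(\infvec^\states) \geq \mathcal{G}^2(\infvec^\states) \geq \cdots$ is non-increasing, and that every iterate dominates the eventual fixed point.

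Next I would prove, by induction on the loop counter $i$, that after $i$ passes of the repeat-loop the variable $\vec{v}$ equals $\mathcal{G}^i(\infvec^\states)$. The only subtle point is the conditional update that overwrites $\vec{v}(s)$ only when $c<\vec{v}(s)$: since $c = \min_a \AV{\strunc[\target]{\vec{v}_\old}}{s}{a} = \veccomp{\mathcal{G}(\vec{v}_\old)}{s}$ is computed from the frozen vector $\vec{v}_\old$, the order in which states are processed within a pass is irrelevant, and the non-increasing property guarantees $c \leq \veccomp{\vec{v}_\old}{s} = \vec{v}(s)$ at the moment of the test; hence the update leaves $\vec{v}(s)$ equal to $\veccomp{\mathcal{G}(\vec{v}_\old)}{s}$ whether or not the strict inequality holds. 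Thus one pass is exactly one application of $\mathcal{G}$, and $\vec{v} = \mathcal{G}^i(\infvec^\states) = \ml[\mdp]{\nonreloading[+\target]^i}{}$ after pass $i$. The loop exits precisely when $\vec{v}_\old = \vec{v}$, i.e. when $\mathcal{G}^{i-1}(\infvec^\states) = \mathcal{G}^{i}(\infvec^\states)$, which is a fixed point of $\mathcal{G}$ and therefore equals $\ml[\mdp]{\nonreloading[+\target]}{}$, giving correctness.

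For the iteration bound I would reuse the contradiction argument of \cref{thm:minreach-main}, adapted to $\mathcal{G}$. A component that has attained its limit value can never change again (the sequence is non-increasing and dominated by the fixed point), so the number of passes is controlled by the longest horizon over which some component still strictly improves. Using that $\mdp$ is decreasing, any optimal witnessing path for $\ml[\mdp]{\nonreloading[+\target]}{}$ must be simple: if it revisited a state, excising the intervening cycle would strictly lower the cumulative consumption, since a cycle consumes a positive amount, contradicting optimality. A simple witnessing path meets at most $|\states|$ distinct states, so the values stabilize after at most $|\states|$ applications of $\mathcal{G}$, which together with the fixed-point detection bounds the number of passes as claimed. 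The hard part is exactly this stabilization estimate: one must rerun the argument of \cref{thm:minreach-main} in the ``$+$'' variant, where a state in $\target$ must still take at least one step before reaching $\target$ again (this is what produces the extra step relative to the plain $\nonreloading$ bound), and then verify that the guard-based termination test detects the fixed point within the stated number of passes rather than one pass later.
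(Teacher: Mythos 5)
Your proposal is correct and follows essentially the same route as the paper: the paper's proof likewise observes that each pass of the repeat-loop performs one application of $\mathcal{G}$ on $\vec{v}_{\mathit{old}}$ and then invokes the fixed-point facts for $\mathcal{G}$ (stated in the text as "a slight modification" of \cref{lem:stepreach-cost} and \cref{thm:minreach-main}), with the bound $n+1\leq|\states|$ where $n$ is the longest simple path. You merely supply details the paper leaves implicit — monotonicity of $\mathcal{G}$, the harmlessness of the conditional update given the non-increasing iterates, and the adaptation of the simple-path argument to the ``$+$'' variant — all of which are sound.
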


\begin{proof}
Each iteration of the repeat-loop computes an application of $\mathcal{G}$ on
the value of $\vec{v}$ from line $3$ (stored in $\vec{v}_\old$) and stores the
resulting values in $\vec{v}$ on line $7$. Iterating $\mathcal{G}$ on
$\infvec^\states$ yields a fixed point in at most $n+1$ iterations where $n$ is
the length of the longest simple path in $\mdp$. As $n+1 \leq |\states|$, the
test on line 8 becomes true after at most $|\states|$ iterations and $\vec{v}$
on line $8$ contains the result of $\mathcal{G}^i(\infvec^\states)$ where $i$
is the actual number of iterations. Thus, the value of $\vec{v}$ on line $9$
is equal to $\ml[\mdp]{\nonreloading[+\target]}{}$ and is computed in at most
$|\states|$ iterations.
\end{proof}

Now with \cref{algo:mininitcons_iterative} we can compute
$\ml[\mdp]{\nonreloading[+\reloads]}{}$ and see which reload states should be
avoided by safe runs: the reloads that need more than $\Ca$ units
of resource to surely reach $\reloads$ again. We call such reload states
\emph{unusable in $\mdp$}.	w

\subsection{Detecting useful reloads and solving the safety problem}

Using \cref{algo:mininitcons_iterative}, we can identify reload states that
are unusable in $\mdp$. However, it does not automatically mean that the rest
of the reload states form the desired set $\reloads'$ for property (i).
Consider the CMDP $\mathcal{D}$ in \cref{fig:nonreloading-not-enough}. The
only reload state that is unusable is $w$
($\veccomp{\ml{\nonreloading[+\reloads]}{}}{w} = \infty$). But clearly, all
runs that avoid $w$ must avoid $v$ and $x$ as well. This intuition is backed
up by the fact that $\veccomp{\ml{\nonreloading[+\reloads_1]}{}}{v} =
\veccomp{\ml{\nonreloading[+\reloads_1]}{}}{x} = \infty$ for $\reloads_1 =
\reloads \smallsetminus \{w\} = \{u, v, x\}$, see
\cref{fig:nonreloading-not-enough-ii}. The property (i) indeed translates to
$\veccomp{\ml{\nonreloading[+\reloads']}{}}{r} \leq \Ca$ for all
$r\in\reloads'$; naturally, we want to identify the maximal
$\reloads'\subseteq \reloads$ for which this holds. \Cref{algo:safety} finds
the desired $\reloads'$ by iteratively removing unusable
reloads from the current candidate set $\reloads'$ until there is no unusable
reload in $\reloads'$ (lines 3-7).

\begin{figure}[ht]
\begin{center}
\begin{tikzpicture}[automaton]
\begin{scope}[every node/.append style={state}]
\node (s0) {$t$};
\node (s1) [reload, right of=s0] {$u$};
\node (s2) [reload, right of=s1] {$v$};
\node (s3) [reload, right of=s2] {$w$};
\node (s4) [reload, above right=.2 and 1.3 of s3] {$x$};
\node (s5) [below right=.2 and 1.3 of s3] {$y$};
\end{scope}

\begin{scope}[every node/.append style={valbox}]
\node[nonreloading] at (s0) {$3$};
\node[nonreloading] at (s1) {$1$};
\node[nonreloading] at (s2) {$1$};
\node[nonreloading] at (s3) {$\infty$};
\node[nonreloading, above] at (s4) {$2$};
\node[nonreloading] at (s5) {$\infty$};
\end{scope}

\path[->, auto]
(s0) edge[bend right] node[cons, swap] {$3$} (s1)
(s1) edge[bend right] node[cons, swap] {$2$} (s0)
(s1) edge node[cons] {1} (s2)
(s2) edge node[cons] {1} (s3)
(s3) edge[out=0, in=240, looseness=.9]
  node[above left, pos=.8, prob] {$\frac{9}{10}$}
(s4.240)
(s3) edge[out=0, in=120, looseness=.9]
  node[below left, pos=.8, prob] {$\frac{1}{10}$}
  node[cons, swap] {1}
(s5.120)
(s4) edge[out=160, in=60] node[cons, above] {2} (s3)
(s5) edge[loop right] node[cons] {1} (s5)
;

\node[capacity, below of=s1, node distance=1.3cm, overlay] {10};

\end{tikzpicture}
\end{center}
\caption{A CMDP $\mathcal{D}$ with values of $\ml{\nonreloading[+\reloads]}{}$. For a state $s$, the value $\veccomp{\ml{\nonreloading[+\reloads]}{}}{s}$ is pictured in the orange box below $s$.}
\label{fig:nonreloading-not-enough}
\end{figure}

\begin{figure}[ht]
\begin{center}
\begin{tikzpicture}[automaton]
\begin{scope}[every node/.append style={state}]
\node (s0) {$t$};
\node (s1) [reload, right of=s0] {$u$};
\node (s2) [reload, right of=s1] {$v$};
\node (s3) [right of=s2] {$w$};
\node (s4) [reload, above right=.2 and 1.3 of s3] {$x$};
\node (s5) [below right=.2 and 1.3 of s3] {$y$};
\end{scope}

\begin{scope}[every node/.append style={valbox}]
\node[nonreloading] at (s0) {$3$};
\node[nonreloading] at (s1) {$5$};
\node[nonreloading] at (s2) {$\infty$};
\node[nonreloading] at (s3) {$\infty$};
\node[nonreloading, above] at (s4) {$\infty$};
\node[nonreloading] at (s5) {$\infty$};
\end{scope}

\path[->, auto]
(s0) edge[bend right] node[cons, swap] {$3$} (s1)
(s1) edge[bend right] node[cons, swap] {$2$} (s0)
(s1) edge node[cons] {1} (s2)
(s2) edge node[cons] {1} (s3)
(s3) edge[out=0, in=240, looseness=.9]
  node[above left, pos=.8, prob] {$\frac{9}{10}$}
(s4.240)
(s3) edge[out=0, in=120, looseness=.9]
  node[below left, pos=.8, prob] {$\frac{1}{10}$}
  node[cons, swap] {1}
(s5.120)
(s4) edge[out=160, in=60] node[cons, above] {2} (s3)
(s5) edge[loop right] node[cons] {1} (s5)
;

\node[capacity, below of=s1, node distance=1.3cm, overlay] {10};
\end{tikzpicture}
\end{center}
\caption{The CMDP $\mathcal{D}(\{u, v, x\})$ with values of
$\ml{\nonreloading[+\{u, v, x\}]}{}$ in orange boxes. Note the difference in the set of reload states in comparison to \cref{fig:nonreloading-not-enough}.}
\label{fig:nonreloading-not-enough-ii}
\end{figure}
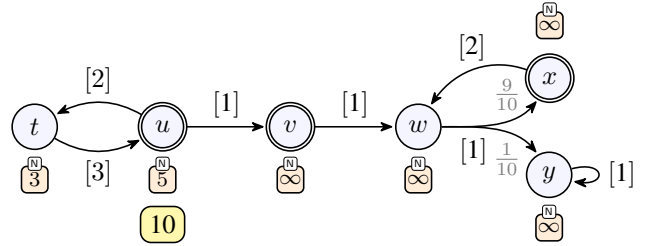

With the right set $\reloads'$ in hand, we can move on to the property (ii) of
safe runs: navigating safely towards reloads in $\reloads'$, which equals to
the objective $\nonreloading[\reloads']$ from \cref{sec:non-reloading}. We can
reuse \cref{algo:mininitcons_iterative} for it as $\ml{\nonreloading}{} =
\strunc{\ml{\nonreloading[+]}{}}$ regardless the target set. Based on
properties (i) and (ii), we claim that $\ml{\safety}{} =
\strunc[\reloads']{\ml{\nonreloading[+\reloads']}{}}$. Indeed, we need at most
$\Ca$ units of resource to move between reloads of $\reloads'$, and we need at
most $\veccomp{\strunc[\reloads']{\ml{\nonreloading[+\reloads']}{}}}{s}$ units
of resource to reach $\reloads'$ from $s$.

Whenever $\veccomp{\ml{\safety}{}}{s} > \Ca$ for some state $s$, the exact
value is not important for us; the meaning is still that there is no strategy
$\sigma$ and no initial load $d\leq \Ca$ such that
$\sdsat{\sigma}{s}{d}{\safety}{}$. Hence, we can set
$\veccomp{\ml{\safety}{}}{s} = \infty$ in all such cases. To achieve this, we
extend the operator $\strunc[\target]{\cdot}$ into
$\trunc[\target]{\cdot}^\Ca$ as follows.
\[
\veccomp{\trunc[\target]{\vec{x}}^\Ca}{s} =
\begin{cases}
  \infty &
    \text{if }\veccomp{\vec{x}}{s} > \Ca\\
  \vec{x}(s) &
    \text{if }
    \veccomp{\vec{x}}{s} \leq \Ca\text{ and } s \not \in \target\\
  0 & \text{if }\veccomp{\vec{x}}{s} \leq \Ca\text{ and }s \in \target\\
\end{cases}
\]

\begin{algorithm}[ht]
\KwIn{CMDP $\mdp=(\states, \actions, \trans, \cons, \reloads, \Ca)$}
\KwOut{The vector $\ml[\mdp]{\safety}{}$}
$\varRel \leftarrow \reloads $\;
$\varToRemove \leftarrow \emptyset$\;
\Repeat{$\varToRemove = \emptyset$}{
  $ \varRel \leftarrow \varRel\smallsetminus \varToRemove $\;
  $ \vec{n} \leftarrow \ml[\mdp]{\nonreloading[+\varRel]}{}$\label{aline:mcon}\tcc*[r]{\scriptsize\Cref{algo:mininitcons_iterative} with $\target = \varRel$}
  $ \varToRemove \leftarrow
    \{ r \in \varRel \mid \veccomp{\vec{n}}{r} > \Ca \}$\;
}
\Return{$\trunc[\varRel]{\vec{n}}^\Ca$}\;
\caption{Computing $\ml{\safety}{}$.}
\label{algo:safety}
\end{algorithm}

\begin{theorem}%
\label{thm:safety-main}%
Algorithm~\ref{algo:safety} computes the vector $\ml[\mdp]{\safety}{}$ in
time polynomial with respect to the representation of $\mdp$.
\end{theorem}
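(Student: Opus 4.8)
The plan is to prove the theorem in two parts: correctness of the output vector and polynomial running time. For correctness, I would establish that Algorithm~\ref{algo:safety} returns $\trunc[\varRel]{\vec{n}}^\Ca$ where $\varRel$ is the final candidate set, and argue this equals $\ml[\mdp]{\safety}{}$ by verifying two things: (a) the final $\varRel$ is exactly the maximal subset of $\reloads$ such that $\veccomp{\ml[\mdp]{\nonreloading[+\varRel]}{}}{r} \leq \Ca$ for every $r\in\varRel$ (this captures property (i), the ability to move between reload states using at most $\Ca$ resource), and (b) given this $\varRel$, the vector $\trunc[\varRel]{\ml[\mdp]{\nonreloading[+\varRel]}{}}^\Ca$ correctly captures the minimal load needed to reach $\varRel$ without depletion (property (ii)).

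\textbf{Correctness via the two properties.} For the direction that $\trunc[\varRel]{\vec{n}}^\Ca \geq \ml[\mdp]{\safety}{}$ (i.e.\ the computed load suffices), I would exhibit a witness strategy that combines the memoryless witness for $\ml{\nonreloading[+\varRel]}{}$ with a reloading discipline: from any state $s$, navigate towards $\varRel$ along the non-reloading witness; upon reaching a reload state in $\varRel$, reset to full capacity $\Ca$ and repeat. Since every $r\in\varRel$ satisfies $\veccomp{\vec{n}}{r}\leq\Ca$, a full reload always suffices to reach the next reload in $\varRel$, so no run ever depletes. For the reverse inequality $\ml[\mdp]{\safety}{}\geq \trunc[\varRel]{\vec{n}}^\Ca$ (i.e.\ no smaller load works), I would argue that any safe run must infinitely often visit reloads of the final $\varRel$ (reloads outside $\varRel$ are unusable, meaning even with full capacity one cannot guarantee returning to $\reloads$, so they cannot be relied upon), and between consecutive such visits it is playing a non-reloading segment reaching $\varRel$, which requires at least the $\ml{\nonreloading[+\varRel]}{}$-value of resource; the initial segment similarly requires at least the non-reloading value. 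This is where the \emph{decreasing} assumption enters, via \cref{thm:minreach-main}: it guarantees the non-reloading fixed-point computation terminates and that a safe run cannot loop forever without consuming resource, forcing it to genuinely reach $\varRel$.

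\textbf{Maximality of $\varRel$ and monotonicity.} The key structural lemma I would isolate is that the repeat-loop computes the maximal $\reloads'$ with the property $\veccomp{\ml{\nonreloading[+\reloads']}{}}{r}\leq\Ca$ for all $r\in\reloads'$. The main subtlety, illustrated by the figures, is that removing an unusable reload can turn previously-usable reloads unusable (since fewer reload targets remain, the non-reloading values can only increase). I would formalize monotonicity: if $\reloads''\subseteq\reloads'$ then $\veccomp{\ml{\nonreloading[+\reloads'']}{}}{s}\geq\veccomp{\ml{\nonreloading[+\reloads']}{}}{s}$ for all $s$, because a smaller target set makes non-reloading reachability harder. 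From this, any reload removed during the loop could never belong to a valid $\reloads'$ — an inductive argument shows each removed reload is unusable relative to \emph{every} candidate set containing it — establishing that the algorithm never removes a reload that should be kept, and terminates precisely at the maximal valid set.

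\textbf{Polynomial running time.} Finally, the complexity bound is routine given the machinery: the repeat-loop in \cref{algo:safety} removes at least one reload per iteration (or else $\varToRemove=\emptyset$ and it halts), so it runs at most $|\reloads|\leq|\states|$ times. Each iteration invokes \cref{algo:mininitcons_iterative}, which by \cref{thm:mininitconst-main} terminates in at most $|\states|$ iterations, each a single pass over all states and actions computing $\AVname$ values. Since every $\AVname$ computation is a maximum over successors plus a consumption value, and all intermediate quantities are bounded by $\Ca$ (or $\infty$), each arithmetic operation is polynomial in the bit-size of the input. Multiplying these factors yields a total running time polynomial in the representation of $\mdp$. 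The main obstacle I anticipate is the correctness argument for maximality of $\varRel$ — specifically proving that greedily removing unusable reloads does not over-remove — which requires the monotonicity lemma together with careful handling of the interaction between the outer reload-pruning loop and the inner non-reloading fixed-point, and which is precisely where the decreasing assumption must be invoked.
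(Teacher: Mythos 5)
Your proposal is correct and follows essentially the same route as the paper: the necessity direction via an induction over the pruning iterations showing that every removed reload has infinite safety value (with the decreasing assumption forcing safe runs to eventually reach a usable reload), the sufficiency direction via the explicit navigate-to-$\varRel$-then-recycle witness strategy, and the complexity bound from at most $|\reloads|$ outer iterations each costing one polynomial call to \cref{algo:mininitcons_iterative}. One small imprecision: your parenthetical claiming that reloads outside the final $\varRel$ cannot return to $\reloads$ even with full capacity is false for reloads removed in later iterations (they may well reach $\reloads$, just not its surviving part, as the states $v$ and $x$ in \cref{fig:nonreloading-not-enough} show), but your subsequent inductive formulation relative to each candidate set $\varRel_j$ is the correct one and is exactly the argument the paper uses.
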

\begin{proof}
\textbf{Complexity.} The algorithm clearly terminates. Computing
$\ml{\nonreloading[+\reloads']}{}$ on line~\ref{aline:mcon} takes a
polynomial number of steps per call (\cref{thm:mininitconst-main}). Since the
repeat loop performs at most $|\reloads|$ iterations, the complexity follows.

\textbf{Correctness.} We first prove that upon termination
$\veccomp{\trunc[\varRel]{\vec{n}}^\Ca}{s} \leq
\veccomp{\ml[\mdp]{\safety}{}}{s}$ for each $s\in\states$ whenever the latter
value is finite. This is implied by the fact that
$\ml{\nonreloading[+\varRel]}{} \leq \ml{\safety}{}$ is the invariant of the
algorithm. To see that, it suffices to show that at every point of execution,
$\veccomp{\ml{\safety}{}}{t} = \infty $ for each $ t\in \reloads\smallsetminus
\varRel$: if this holds, each strategy that satisfies $\safety$ must avoid
states in $\reloads \smallsetminus \varRel$ (due to property (i) of safe runs)
and thus the first reload on runs compatible with such a strategy must be from
$\varRel$.

Let $\varRel_i$ denote the contents of $\varRel$ after the $i$-th iteration.
We prove, by induction on $i$, that $\veccomp{\ml{\safety}{}}{t} = \infty$ for
all $t\in \reloads \smallsetminus \varRel$. For $ i = 0 $ we have $\reloads =
\varRel_0$, so the statement holds. For $i>0$ , let $t \in \reloads
\smallsetminus \varRel_{i}$, then it must exist some $j<i$ such that
$\veccomp{\vec{n}}{t} = \veccomp{\ml{\nonreloading[+\varRel_{j}]}{}}{t} >
\Ca$, hence no strategy can safely reach $\varRel_j$ from $t$ and by induction
hypothesis, the reload states from $\reloads \smallsetminus \varRel_j$ must be
avoided by strategies that satisfy $\safety$. Together, as $\mdp$ is
decreasing, there is no strategy $\sigma$ such that
$\sdsat{\sigma}{t}{\Ca}{\safety}{}$ and hence $\veccomp{\ml{\safety}{}}{t} =
\infty$.

Finally, we need to prove that upon termination, $\trunc[\varRel]{\vec{n}}^\Ca
\geq \ml{\safety}{}$. As $\vec{n} = \ml{\nonreloading[+\varRel]}{}$ and
$\veccomp{\vec{n}}{r} \leq \Ca$ for each $r\in \varRel$, then, for each $s$
with $d = \veccomp{\trunc[\varRel]{\vec{n}}^\Ca}{s} \leq \Ca$ there exists a
strategy that can reach $\varRel \subseteq \reloads$ consuming at most $d$
units of resource, and, once in $\varRel$, $\sigma$ can always return to
$\varRel$ within $\Ca$ units of resource. Thus, all runs in
$\lcompatible{\sigma}{s}{d}$ are safe and $d$ is enough for $\safety$ in
$s$.
\end{proof}

\subsection{Safe strategies}

\begin{definition}
\label{def:safeact} Let $s\in\states$ be a state and let $0\leq d\leq\Ca$ be a
resource level. We call an action $a$ \emph{safe} in $s$ with $d$ if (1)
$v=\AV{\ml{\safety}{}}{s}{a} \leq d$, or if (2) $v\leq \Ca$ and
$s\in\reloads$, or if (3) $\veccomp{\ml{\safety}{}}{s} > \Ca$. Further, $a$ is
\emph{\minsafe{}} in $s$ if it is safe in $s$ with
$d=\veccomp{\ml{\safety}{}}{s}$. We call a strategy $\sigma$ \emph{safe} if it
picks an action that is safe in the current state with the current resource
level whenever possible.
\end{definition}

\textbf{Remarks.} By definition, no action is safe in $s$ for all $r <
\veccomp{\ml{\safety}{}}{s} < \infty$ (otherwise $\veccomp{\ml{\safety}{}}{s}$
is at most $r$). On the other hand, there is always at least one action that
is \minsafe{} for each state $s$ and, in particular, all actions are safe and
\minsafe{} in $s$ with $\veccomp{\ml{\safety}{}}{s} = \infty$.

\begin{lemma}%
\label{lem:safe-act}%
Let $\sigma$ be a safe strategy. Then
$\vecsat{\sigma}{\ml{\safety}{}}{\safety}{}$.
\end{lemma}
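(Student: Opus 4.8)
The plan is to unfold the definition of $\vecsat{\sigma}{\ml{\safety}{}}{\safety}{}$ and establish safety run-by-run, level-by-level. It suffices to fix a state $s$ with $d := \veccomp{\ml{\safety}{}}{s} < \infty$ (states with $\veccomp{\ml{\safety}{}}{s}=\infty$ carry no obligation), take an arbitrary $\loadedpath{\run}{d} = \loadedpath{s}{d}_1a_1s_2a_2\ldots \in \lcompatible{\sigma}{s}{d}$ with resource levels $r_1, r_2, \ldots$, and show $r_i \neq \bot$ for every $i$; this is exactly $\loadedpath{\run}{d} \in \safety$.

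The heart of the argument is the invariant $r_i \geq \veccomp{\ml{\safety}{}}{s_i}$, which I would prove by induction on $i$. The base case holds because $r_1 = d = \veccomp{\ml{\safety}{}}{s_1}$. For the inductive step, the hypothesis $r_i \geq \veccomp{\ml{\safety}{}}{s_i} \geq 0$ already yields $r_i \neq \bot$; since a non-$\bot$ level never exceeds $\Ca$, we get $\veccomp{\ml{\safety}{}}{s_i} \leq r_i \leq \Ca$, so this value is finite (recall that finite entries of $\ml{\safety}{}$ are exactly those $\leq \Ca$). Next I would argue that $\sigma$ genuinely picks a safe action at $s_i$: by the remark after \Cref{def:safeact} the \minsafe{} action is safe in $s_i$ with load $\veccomp{\ml{\safety}{}}{s_i}$, and conditions (1)--(3) of \Cref{def:safeact} are monotone in the load, so it remains safe with the larger load $r_i$; hence a safe action exists and the action $a_i$ chosen by $\sigma$ is one.

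It remains to verify that the invariant is preserved. Because $\veccomp{\ml{\safety}{}}{s_i} \leq \Ca$, condition (3) is false, so $a_i$ is safe through (1) or (2), and I would split on whether $s_i$ is a reload. If $s_i \notin \reloads$, only (1) can apply, giving $\cons(s_i,a_i) \leq \AV{\ml{\safety}{}}{s_i}{a_i} \leq r_i$; thus the non-reload branch fires and $r_{i+1} = r_i - \cons(s_i,a_i) \geq \AV{\ml{\safety}{}}{s_i}{a_i} - \cons(s_i,a_i) = \max_{t\in\Succ(s_i,a_i)}\veccomp{\ml{\safety}{}}{t} \geq \veccomp{\ml{\safety}{}}{s_{i+1}}$. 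If $s_i \in \reloads$, both (1) and (2) force $\AV{\ml{\safety}{}}{s_i}{a_i} \leq \Ca$, hence $\cons(s_i,a_i) \leq \Ca$, so the reload branch fires and $r_{i+1} = \Ca - \cons(s_i,a_i) \geq \AV{\ml{\safety}{}}{s_i}{a_i} - \cons(s_i,a_i) = \max_{t}\veccomp{\ml{\safety}{}}{t} \geq \veccomp{\ml{\safety}{}}{s_{i+1}}$. In both cases $r_{i+1} \neq \bot$ and the invariant carries over. Running the induction to completion gives $r_i \geq \veccomp{\ml{\safety}{}}{s_i} \geq 0$ for all $i$, so $\bot \notin \reslevs{d}{\run}$ and the run is safe; as $s$ and the run were arbitrary, $\vecsat{\sigma}{\ml{\safety}{}}{\safety}{}$ follows.

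The step I expect to be most delicate is the bookkeeping across the two resource-update branches---in particular confirming that whenever $s_i \in \reloads$ the chosen safe action really satisfies $\AV{\ml{\safety}{}}{s_i}{a_i} \leq \Ca$, so that the reload branch produces a level $\geq \veccomp{\ml{\safety}{}}{s_{i+1}}$---together with the auxiliary observation that a safe action is always available under the invariant, which rests on the monotonicity of the safety conditions in the load.
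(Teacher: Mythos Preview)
Your proof is correct and follows essentially the same approach as the paper: both establish the invariant $r_i \geq \veccomp{\ml{\safety}{}}{s_i}$ along any $\sigma$-compatible path by induction on the path length, using the definition of safe actions and the action-value bound $\AV{\ml{\safety}{}}{s_i}{a_i}$. Your version is in fact more explicit than the paper's in handling the reload/non-reload case split and in arguing that a safe action is always available under the invariant, but the underlying idea is identical.
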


\begin{proof}
We need to prove that, given a state $s$ with $\veccomp{\ml{\safety}{}}{s}
\leq \Ca$ and an initial load $d$ such that $\veccomp{\ml{\safety}{}}{s} \leq
d \leq \Ca$, all runs in $\lcompatible{\sigma}{s}{d}$ are safe. To do this, we
show that all $s$-initiated $d$-loaded paths $\loadedpath{\apath}{d}$ created
by $\sigma$ are safe. By simple induction with respect to the length of the
path we prove that $\lreslev{d}{\apath} \geq \veccomp{\ml{\safety}{}}	{t} \neq
\bot$ where $t = \last{\apath}$. For $\loaded{s}{d}$ this clearly holds. Now
assume that $\lreslev{d}{\apath} \geq \veccomp{\ml{\safety}{}}{t} \neq \bot$
for some $s$-initiated $\apath$ with $t=\last{\apath}$ and all $d \geq
\veccomp{\ml{\safety}{}}{s}$. Now consider a $d'$-loaded path
$\loadedpath{s}{d'}'as\histconc\apath$ created by $\sigma$; we have
$\lreslev{d'}{s'as} = d' - \cons(s', a) \geq \veccomp{\ml{\safety}{}}{s}$ by
definition of action value and safe actions, and thus by the induction
hypothesis we have that $\lreslev{d'}{s'as\histconc\apath} \geq
\veccomp{\ml{\safety}{}}{t}$.
\end{proof}

\begin{theorem}
\label{thm:safety-strat}%
In each consumption MDP $\mdp$ there is a memoryless strategy $\sigma$ such
that $\vecsat[\mdp]{\sigma}{\ml[\mdp]{\safety}{}}{\safety}{}$. Moreover,
$\sigma$ can be computed in time polynomial with respect to the representation
of $\mdp$.
\end{theorem}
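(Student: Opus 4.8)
The plan is to exhibit an explicit memoryless witness and then lean on the already-established \cref{lem:safe-act}. Since \cref{thm:safety-main} yields $\ml[\mdp]{\safety}{}$ in polynomial time, it remains to construct a memoryless \emph{safe} strategy (in the sense of \cref{def:safeact}) out of this vector and to argue that it can be extracted efficiently.

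First I would define $\sigma$ memorylessly by choosing, for each state $s$, some action $a_s$ that is \minsafe{} in $s$; the remark following \cref{def:safeact} guarantees that at least one such action always exists, so $\sigma$ is well defined. Computing $a_s$ amounts to evaluating $\AV{\ml{\safety}{}}{s}{a}$ for each $a$ (a single $\max$ over the successors) and testing the three conditions of \cref{def:safeact} against $d = \veccomp{\ml{\safety}{}}{s}$, which is clearly polynomial; together with the complexity bound of \cref{thm:safety-main} this settles the computability claim.

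The key step is to verify that this memoryless $\sigma$ really is a safe strategy, i.e. that it selects a safe action whenever one exists, even though the notion of a safe action depends on the \emph{current} resource level while $\sigma$ ignores it. I would argue by monotonicity: if $a_s$ is safe in $s$ with $d=\veccomp{\ml{\safety}{}}{s}$ via condition (1), then $\AV{\ml{\safety}{}}{s}{a_s} \leq \veccomp{\ml{\safety}{}}{s} \leq d'$ for every larger level $d'$, so $a_s$ remains safe with $d'$; conditions (2) and (3) are independent of the level and hence persist. Thus $a_s$ is safe in $s$ with \emph{every} level $d' \geq \veccomp{\ml{\safety}{}}{s}$. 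Conversely, by the same remark no action at all is safe in $s$ when the level is strictly below $\veccomp{\ml{\safety}{}}{s} < \infty$, so the ``whenever possible'' clause of \cref{def:safeact} imposes nothing on those histories. Hence $\sigma$ satisfies the definition of a safe strategy at every loaded history it can produce.

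With $\sigma$ shown to be safe, \cref{lem:safe-act} immediately gives $\vecsat[\mdp]{\sigma}{\ml[\mdp]{\safety}{}}{\safety}{}$, completing the existence claim. I expect the only genuinely delicate point to be the memorylessness argument above: one must ensure that a single action chosen at the threshold level $\veccomp{\ml{\safety}{}}{s}$ stays valid at all higher levels, which is exactly what the monotonicity of condition (1) in $d$ provides. This is precisely the feature that lets the invariant established in the proof of \cref{lem:safe-act} (that the resource level at each visited state $t$ remains $\geq \veccomp{\ml{\safety}{}}{t}$) apply to the fixed choices made by the memoryless $\sigma$.
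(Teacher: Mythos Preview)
Your proposal is correct and follows exactly the approach of the paper: fix one \minsafe{} action per state, observe that this yields a safe strategy, invoke \cref{lem:safe-act}, and get the complexity from \cref{thm:safety-main}. The paper's proof is a two-sentence version of yours; your added monotonicity argument (that a \minsafe{} action remains safe at all higher levels, while no action is safe below $\veccomp{\ml{\safety}{}}{s}$) simply spells out what the paper leaves implicit in the phrase ``a strategy that fixes one \minsafe{} action in each state is safe.''
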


\begin{proof}
Using \cref{lem:safe-act}, the existence of a memoryless strategy
follows from the fact that a strategy that fixes one \minsafe{} action in each
state is safe. The complexity follows from \cref{thm:safety-main}.
\end{proof}

\begin{example}\label{ex:safety}
\Cref{fig:example+vectors} shows again the CMDP from \cref{fig:example} and
includes also values computed by
\cref{algo:mininitcons_iterative,algo:safety}. \Cref{algo:safety} stores the
values of $\ml{\nonreloading}{}$ into $\vec{n}$ and, because no value is
$\infty$, returns just $\trunc{\vec{n}}_{\reloads}^{\Ca}$. The strategy
$\sigma_{\hacta}$ from \cref{ex:resource_strategy} is a witness strategy for $\ml{\safety}{}$. As
$\veccomp{\ml{\safety}{}}{s}=2$, no strategy would be safe from $s$ with
initial load $1$.
\end{example}

\begin{figure}[h]
\centering
\begin{tikzpicture}[automaton, xscale=1.5]
\tikzstyle{pvect} = [fill=green!70!black!5]
\tikzstyle{a} = [magenta]
\tikzstyle{b} = [cyan]
\tikzstyle{ab} = [black, thick]
\node[state] (start) at (0,0)  {$s$};
\node[state,reload,target] (t)   at (2.5,-1) {$t$};
\node[state,reload] (rel)   at (-1.5,0) {$r$};
\node[state] (mis)      at (2.5,1)  {$u$};
\node[state] (pathhome) at (1,1)  {$v$};

\begin{scope}[every node/.append style={valbox, above}]
\node[nonreloading, xshift=-6pt, yshift=5pt] at (start) {$2$};
\node[nonreloading, xshift=0pt, right, yshift=7pt] at (t) {$1$};
\node[nonreloading, xshift=-6pt, yshift=5pt] at (rel) {$3$};
\node[nonreloading, xshift=-6pt] at (mis) {$5$};
\node[nonreloading, xshift=-6pt] at (pathhome) {$4$};
\end{scope}

\begin{scope}[every node/.append style={valbox, above}]
\node[safety, xshift=6pt, yshift=5pt] at (start) {$2$};
\node[safety, xshift=12pt, right, yshift=7pt] at (t) {$0$};
\node[safety, xshift=6pt, yshift=5pt] at (rel) {$0$};
\node[safety, xshift=6pt] at (mis) {$5$};
\node[safety, xshift=6pt] at (pathhome) {$4$};
\end{scope}

\begin{scope}[every node/.append style={valbox, below}]
\node[pvect, objective={$\vec{p1}$}, xshift=-6pt, yshift=-5pt] at (start) {$10$};
\node[pvect, objective={$\vec{p1}$}, xshift=-6pt] at (t) {$0$};
\node[pvect, objective={$\vec{p1}$}, xshift=-6pt, yshift=-5pt] at (rel) {$\infty$};
\node[pvect, objective={$\vec{p1}$}, xshift= 0pt, right, yshift=-5pt] at (mis) {$\infty$};
\node[pvect, objective={$\vec{p1}$}, xshift=-6pt] at (pathhome) {$\infty$};
\end{scope}

\begin{scope}[every node/.append style={valbox, below}]
\node[pos reach, xshift=6pt, yshift=-5pt] at (start) {$2$};
\node[pos reach, xshift=6pt] at (t) {$0$};
\node[pos reach, xshift=6pt, yshift=-5pt] at (rel) {$0$};
\node[pos reach, xshift=12pt, right, yshift=-5pt] at (mis) {$5$};
\node[pos reach, xshift=6pt] at (pathhome) {$4$};
\end{scope}

\path[->,auto,swap]
(start) edge[bend right=80, a]
  node[] {$\hacta$}
  node[cons, swap] {2}
(rel)
(rel) edge[bend right=80, ab]
  node[swap] {$\acta,\actb$}
  node[cons] {1}
(start)
(start)
  edge[out=0, in=120,looseness=1.1, b]
    node[cons] {5}
    node[prob,below left,pos=.8] {$\frac{1}{2}$}
  (t)
  edge[out=0, in=240,looseness=1.1, b]
    node[swap, pos=.56] {$\hactb$}
    node[prob,above left,pos=.8] {$\frac{1}{2}$}
  (mis)
(mis) edge[ab]
  node {$\acta,\actb$}
  node[cons,swap, pos=.6] {1}
(pathhome)
(pathhome) edge[ab, bend right]
  node[pos=.1] {$\acta,\actb$}
  node[cons, swap, pos=.5] {2}
(start)
(t) edge[loop above, ab, looseness=16]
  node[] {$\acta,\actb$}
  node[cons, right, outer sep=5pt] {1}
(t)
;

\node[capacity] at (1, -1) {20};
\end{tikzpicture}
\caption{The CMDP from \cref{fig:example} with vectors for \cref{ex:safety,ex:posreach-iter}. The values of
$\ml{\nonreloading[\reloads]}{}$ and $\ml{\safety}{}$ (referenced in \cref{ex:safety}) are indicated by the orange
(left) and blue (right) boxes above states, respectively. Values of $\vec{p}1$ (left) and $\ml{\reachability}{\pos}$ (right) that are referenced in \cref{ex:posreach-iter} are indicated by the green boxes below states.}%
\label{fig:example+vectors} \end{figure}
\section{Positive reachability}%
\label{sec:posreach} In this section, we present the solution of the problem
called positive reachability. We focus on strategies that, given a set
$\target \subseteq \states$ of target states, safely satisfy
$\reachability[\target] \subseteq \safety$ with positive probability. The main
contribution of this section is Algorithm~\ref{algo:posreach} that computes
$\ml{\reachability}{\pos}$ and the corresponding witness strategy. As before,
for the rest of this section we fix a CMDP $\mdp=(\states, \actions, \trans,
\cons, \reloads, \Ca)$ and also a set $\target \subseteq \states$.

Let $s\in\states\smallsetminus\target$ be a state, let $d$ be an initial load
and let $\sigma$ be a strategy such that
$\sdsat{\sigma}{s}{d}{\reachability[\target]}{\pos}$. Intuitively, as
$\sdsat{\sigma}{s}{d}{\reachability}{\pos}$ implies
$\sdsat{\sigma}{s}{d}{\safety}{}$, the strategy is limited to safe actions.
For the reachability part, $\sigma$ must start with an action
$a=\sigma(\loadedpath{s}{d})$ such that for at least successor $s'$ of $a$ in
$s$ it holds that $\sdsat{\sigma}{s'}{d'}{\reachability}{\pos}$ with $d' = d -
\cons(s,a)$. It must then continue in a similar fashion from $s'$ until either
$\target$ is reached (and $\sigma$ produces the desired run from
$\reachability[\target]$) or until there is no such action.

To formalize the intuition, we define two auxiliary functions. Let us fix a
state $s$, and action $a$, a successor $s'\in\Succ(s,a)$, and a vector
$\vec{x}\in\extNset^\states$. We define the \emph{hope value} of
$s'$ for $a$ in $s$ based on $\vec{x}$, denoted by $\shope{\vec{x}}{s}{a}{s'}$, and the \emph{safe value} of $a$ in $s$ based on $\vec{x}$, denoted by $\SV{\vec{x}}{s}{a}$, as follows.

\[
\begin{aligned}
\shope{\vec{x}}{s}{a}{s'} &= %
\max_{\substack{%
  t \in \Succ(s,a)\\
  t \neq s'
}} \{\veccomp{\vec{x}}{s'}, \veccomp{\ml{\safety}{}}{t}\}\\
\SV{\vec{x}}{s}{a} &= \cons(s, a) + \min_{s'\in\Succ(s, a)} \shope{\vec{x}}{s}{a}{s'}
\end{aligned}
\]

The hope value of $s'$ for $a$ in $s$ represents the lowest level of resource
that the agent needs to have after playing $a$ in order to (i) have at least
$\veccomp{\vec{x}}{s'}$ units of resource if the outcome of $a$ is $s'$, and
(ii) to survive otherwise.

We again use functionals to iteratively compute $\ml{\reachability^i}{\pos}$,
with a fixed point equal to $\ml{\reachability}{\pos}$. The main operator
$\mathcal{B}$ just applies $\trunc{\cdot}_\reloads^\Ca$ on the result of the
auxiliary functional $\mathcal{A}$. The application of
$\trunc{\cdot}_\reloads^\Ca$ ensures that whenever the result is higher than
$\Ca$, it set to $\infty$, and that in reload states the value is either $0$
or $\infty$, which is in line what is expected from $\ml{\reachability}{\pos}$.

\begin{align*}
\veccomp{\mathcal{A}(\vec{x})}{s} &=
  \begin{dcases}
    \veccomp{\ml{\safety}{}}{s} & \text{if } s \in \target \\
    \min_{a\in A} \SV{\vec{x}}{s}{a}&
      \text{otherwise};
  \end{dcases}\\
\mathcal{B}(\vec{x}) &= \trunc{\mathcal{A}(\vec{x})}_\reloads^\Ca
\end{align*}

By $\vec{y}_\target$ we denote a vector such that
\[
\veccomp{\vec{y}_\target}{s} =
\begin{cases}
  \veccomp{\ml{\safety}{}}{s} & \text{if } s \in \target \\
  \infty & \text{if } s \not\in \target.
\end{cases}
\]

The following two lemmata relate $\mathcal{B}$ to $\ml{\reachability}{\pos}$
and show that $\mathcal{B}$ applied iteratively on $\vec{y}_\target$ reaches
fixed point in a number of iterations that is polynomial with respect to the representation of $\mdp$. Their proofs are quite
straightforward but technical, hence we moved them to Appendix.

\begin{lemma}
\label{lem:posreach-iterate}%
Consider the iteration of $\mathcal{B}$ on the initial vector
$\vec{y}_\target$. Then for each $ i \geq 0 $ it holds that
$\mathcal{B}^i(\vec{y}_\target) = \ml{\reachability^i}{\pos}$.
\end{lemma}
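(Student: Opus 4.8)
The plan is to prove the equality by induction on $i$, exploiting throughout that $\ml{\reachability^j}{\pos}$ is, by definition, the component-wise minimal load vector admitting a safe strategy that reaches $\target$ within $j$ steps with positive probability. For the base case $i=0$, note that a safe loaded run lies in $\reachability^0$ exactly when its initial state already belongs to $\target$; hence $\veccomp{\ml{\reachability^0}{\pos}}{s}=\veccomp{\ml{\safety}{}}{s}$ for $s\in\target$ and $\veccomp{\ml{\reachability^0}{\pos}}{s}=\infty$ otherwise, which is precisely $\vec{y}_\target=\mathcal{B}^0(\vec{y}_\target)$.

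For the inductive step I would assume $\mathcal{B}^i(\vec{y}_\target)=\ml{\reachability^i}{\pos}$ and prove the single-step identity $\mathcal{B}(\ml{\reachability^i}{\pos})=\ml{\reachability^{i+1}}{\pos}$; composing with the hypothesis then gives the claim for $i+1$. I would verify this component-wise. For $s\in\target$ the target is reached already at step $1$, so positive-probability reachability is equivalent to mere safety and $\veccomp{\ml{\reachability^{i+1}}{\pos}}{s}=\veccomp{\ml{\safety}{}}{s}$; since $\veccomp{\mathcal{A}(\ml{\reachability^i}{\pos})}{s}=\veccomp{\ml{\safety}{}}{s}$ by definition of $\mathcal{A}$, and $\trunc{\cdot}_\reloads^\Ca$ (sending values above $\Ca$ to $\infty$ and zeroing reload states whose value is at most $\Ca$) acts idempotently on the already-truncated vector $\ml{\safety}{}$, both sides coincide. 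The substantive case is $s\notin\target$, which I would settle by two inequalities.

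For the upper bound $\veccomp{\mathcal{B}(\ml{\reachability^i}{\pos})}{s}\le\veccomp{\ml{\reachability^{i+1}}{\pos}}{s}$ I would construct a witness strategy: writing $a$ for a minimizing action and $s'$ for a minimizing successor in $\veccomp{\mathcal{A}(\ml{\reachability^i}{\pos})}{s}=\SV{\ml{\reachability^i}{\pos}}{s}{a}$, the strategy plays $a$ first and, on outcome $s'$, follows a witness strategy for $\ml{\reachability^i}{\pos}$, while on every other outcome it follows a safe strategy (\cref{thm:safety-strat}). The hope value $\shope{\ml{\reachability^i}{\pos}}{s}{a}{s'}$ is exactly the post-action level guaranteeing $\veccomp{\ml{\reachability^i}{\pos}}{s'}$ resource at $s'$ and survival elsewhere, so the branch through $s'$ witnesses positive probability while all branches stay safe; the initial load $\SV{\ml{\reachability^i}{\pos}}{s}{a}$ suffices, and the reload reset inside $\trunc{\cdot}_\reloads^\Ca$ lowers it to $0$ when $s\in\reloads$. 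For the lower bound I would take any $\sigma$ with $\sdsat{\sigma}{s}{d}{\reachability^{i+1}}{\pos}$ and $d\le\Ca$, set $a=\sigma(\loadedpath{s}{d})$, and argue that positivity forces at least one successor $s'$ from which the tail strategy safely reaches $\target$ within $i$ steps with positive probability (so the post-action level is at least $\veccomp{\ml{\reachability^i}{\pos}}{s'}$ by minimality), while safety forces the post-action level to be at least $\veccomp{\ml{\safety}{}}{t}$ for every other successor $t$. These combine to $d\ge\SV{\ml{\reachability^i}{\pos}}{s}{a}\ge\veccomp{\mathcal{A}(\ml{\reachability^i}{\pos})}{s}$, and folding in $d\le\Ca$ together with the reload reset (the post-action level being $\Ca-\cons(s,a)$ when $s\in\reloads$) recovers the truncated value $\veccomp{\mathcal{B}(\ml{\reachability^i}{\pos})}{s}$.

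The main obstacle I anticipate is the bookkeeping in the lower bound: after the first transition I must pass from $\sigma$ to well-defined tail strategies on each branch, track how the resource level updates — in particular the reset to $\Ca$ at reload states — and argue that the positive-probability branch of the whole run decomposes into a first step followed by a positive-probability sub-run, so that minimality of $\ml{\reachability^i}{\pos}$ applies to exactly that branch while safety is invoked uniformly on the others. The arguments are conceptually routine but require care to keep the \emph{good} successor (governed by reachability) cleanly separated from the remaining successors (governed by safety), and to apply the truncation operator consistently on both sides of the equality.
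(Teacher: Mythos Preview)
Your plan matches the paper's proof: induction on $i$, with the inductive step for $s\notin\target$ handled by two inequalities, one via constructing a witness strategy (play a minimizing action, then follow a witness for $\ml{\reachability^i}{\pos}$ on the hopeful successor and a safe strategy elsewhere) and one via decomposing an arbitrary witness for $\ml{\reachability^{i+1}}{\pos}$ after its first action. One slip to fix when you write it out: your two inequality labels are reversed---constructing a strategy that works from load $\veccomp{\mathcal{B}(\ml{\reachability^i}{\pos})}{s}$ shows $\veccomp{\ml{\reachability^{i+1}}{\pos}}{s}\le\veccomp{\mathcal{B}(\ml{\reachability^i}{\pos})}{s}$ (since $\ml{\reachability^{i+1}}{\pos}$ is by definition the minimum sufficient load), while the arbitrary-strategy decomposition yields the opposite inequality.
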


\begin{lemma}%
\label{lem:posreach-bound}%
Let $ K = |\reloads| + (|\reloads|+1)\cdot(|\states|-|\reloads|+1)$. Then
$\mathcal{B}^{K}(\vec{y}_T) = \ml{\reachability}{\pos}$.
\end{lemma}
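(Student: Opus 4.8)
The plan is to mirror the structure of the proof of \cref{thm:minreach-main}: I will assume that the iteration has \emph{not} reached a fixed point after $K$ steps and derive a contradiction by exhibiting a witness path that is too long to be optimal. By \cref{lem:posreach-iterate} the iterates $\mathcal{B}^i(\vec{y}_\target)=\ml{\reachability^i}{\pos}$ form a componentwise non-increasing sequence of vectors over $\{0,\dots,\Ca\}\cup\{\infty\}$, and since $\reachability=\bigcup_i\reachability^i$ is an increasing union, the pointwise limit equals $\ml{\reachability}{\pos}$. Hence it suffices to show that the sequence stabilises by index $K$, i.e. that $\mathcal{B}^{K+1}(\vec{y}_\target)=\mathcal{B}^{K}(\vec{y}_\target)$; monotonicity then keeps it fixed forever and pins the common value to $\ml{\reachability}{\pos}$.

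Suppose not: some state $s$ satisfies $\veccomp{\mathcal{B}^{K+1}(\vec{y}_\target)}{s}<\veccomp{\mathcal{B}^{K}(\vec{y}_\target)}{s}$. As in \cref{thm:minreach-main} I would extract a tight witness ``lucky'' path $\run=s_1a_1s_2\cdots s_{K+2}$ starting in $s=s_1$ and ending in $\target$, along which each proper suffix tightly realises the corresponding bounded positive-reachability value (the hopeful successor of each $\SVname$-minimising action is the next state on $\run$, while the remaining successors stay safe). This path uses $K+1$ edges, and the core combinatorial claim is that such a path can always be shortened, contradicting tightness.

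To obtain the length bound I would decompose $\run$ at its reload states into maximal reload-free segments and rule out two kinds of repetition. First, no reload state may occur twice: if a reload $r^\ast$ occurs at positions $p<q$, then immediately after either visit the agent effectively has the full capacity $\Ca$ available, so replaying the suffix from $q$ already at $p$ yields a strictly shorter path reaching $\target$ with the same resource situation, contradicting tightness. Second, within a single reload-free segment no non-reload state may repeat: a repetition would close a reload-free cycle, which is strictly consuming because $\mdp$ is decreasing, so the second occurrence carries strictly less resource than the first; splicing in the suffix from the second occurrence at the first one is then feasible with a surplus of resource (using that safety, and hence safe positive reachability, is monotone in the initial load) and again shortens the path. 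Ruling out both repetitions forces at most $|\reloads|$ reload occurrences and at most $|\reloads|+1$ simple reload-free segments, each spanning at most $|\states|-|\reloads|$ distinct non-reload states; a direct count then shows any repetition-free path has at most $K$ edges, so a path of length $K+1$ must contain one of the two forbidden repetitions --- the desired contradiction.

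I expect the second repetition case to be the main obstacle, exactly as it was the crux of \cref{thm:minreach-main}: I must verify that splicing the lucky path at a repeated non-reload state preserves not only reachability of $\target$ on the lucky branch but also safety on every other branch, and that the resource surplus created by deleting the strictly-consuming cycle keeps the spliced action safe at the reconnection point. The reload case is conceptually easier, since the reload resets the resource and thereby decouples the two occurrences entirely. The bookkeeping that turns ``no repetitions'' into the explicit constant $K=|\reloads|+(|\reloads|+1)\cdot(|\states|-|\reloads|+1)$ is routine once the decomposition into at most $|\reloads|+1$ simple segments separated by at most $|\reloads|$ reloads is in place, and the slack in $K$ leaves ample room for a looser count.
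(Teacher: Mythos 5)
Your proposal is correct and matches the paper's proof in all essential respects: the same decomposition of a too-long witness path into at most $|\reloads|+1$ reload-free segments separated by at most $|\reloads|$ reload visits, the same two repetition cases (a repeated reload state, and a repeated non-reload state with no intermediate reload), and the same splicing arguments justified by the reload resetting the resource level and by monotonicity of safety in the initial load. The only cosmetic difference is the framing --- you extract a witness path from the value iteration in the style of \cref{thm:minreach-main}, whereas the paper fixes a witness strategy of minimal \emph{reachability index} and performs the surgery on one of its histories to contradict that minimality --- but the combinatorial core and the count yielding $K$ are identical.
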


\begin{algorithm}[t]
  \KwIn{CMDP $\mdp=(\states, \actions, \trans, \cons, \reloads, \Ca)$ and
  $\target\subseteq \states$}%
  \KwOut{The vector $\ml[\mdp]{\reachability[\target]}{\pos}$, rule selector
  $\selector$}%
  compute $\ml{\safety}{}$ \tcc*[r]{\scriptsize\Cref{algo:safety}}%
  \label{aline:prinitb}%
  \ForEach{$s\in \states$\label{aline:pr-sel-init-b}}%
  {
    $\selector(s)(\veccomp{\ml{\safety}{}}{s}) \leftarrow \text{arbitrary \minsafe{} action of }s$
  }\label{aline:pr-sel-init-e}%
  $\vec{p} \leftarrow \{\infty\}^S  $\;\label{aline:pr-init-yt-i}%
  \lForEach{$t\in \target$}%
  {$\veccomp{\vec{p}}{t} \leftarrow \veccomp{\ml{\safety}{}}{t}$}%
  \label{aline:pr-init-yt-ii}%
  \Repeat{$\,\vec{p}_{\mathit{old}} = \vec{p}$}{
    \label{aline:pr-loop-b}%
    $\vec{p}_\old \leftarrow \vec{p}$\;%
    \ForEach{$s \in \states\smallsetminus \target$}{%
      $\veccomp{\vec{a}}{s} \leftarrow%
        \arg\min_{a\in \actions} \SV{\vec{p}_\old}{s}{a}$\;\label{aline:pr-act-sel}%
      $\veccomp{\vec{p}}{s} \leftarrow%
        \min_{a\in \actions} \SV{\vec{p}_\old}{s}{a}$\;\label{aline:pr-apply-A}%
    }%
    $\vec{p} \leftarrow \trunc{\vec{p}}_{\reloads}^\Ca$\;\label{aline:pr-trunc}%
    \ForEach{$ s\in \states\smallsetminus \target $\label{aline:pr-sel-up-b}}{%
      \If{$ \veccomp{\vec{p}}{s} < \veccomp{\vec{p}_{\old}}{s}$}{%
       $\selector(s)(\veccomp{\vec{p}}{s}) \leftarrow%
       \veccomp{\vec{a}}{s} $\;\label{aline:pr-insert}\label{aline:pr-sel-up-e}%
      }%
    }%
  }\label{aline:pr-loop-e}%
  \Return{$\vec{p},\selector$}\;%
  \caption{Computing $\ml{\reachability[\target]}{\pos}$ and a corresponding witness rule selector.} \label{algo:posreach}
\end{algorithm}

\Cref{algo:posreach} computes $\ml{\reachability}{\pos}$ and a corresponding
witness rule selector $\selector$. On lines~\ref{aline:pr-init-yt-i} and
\ref{aline:pr-init-yt-ii} $\vec{p}$ is initialized to $\vec{y}_\target$. The
repeat-loop on \crefrange{aline:pr-loop-b}{aline:pr-loop-e} iterates
$\mathcal{B}$ on $\vec{p}$ (and $\vec{p}_\old$) and builds the witness
selector gradually. In particular, the \cref{aline:pr-apply-A} stores the
application of $\mathcal{A}$ on $\vec{p}_\old$ in $\vec{p}$, the
\cref{aline:pr-trunc} performs $\mathcal{B}$, and the condition on
\cref{aline:pr-loop-e} checks for the fixed point. Finally,
lines~\ref{aline:pr-act-sel} and
\ref{aline:pr-sel-up-b}-\ref{aline:pr-sel-up-e} update $\selector$ accordingly.
The correctness and complexity of the algorithm are stated formally in theorems \ref{thm:posreach-values-main} ($\vec{p}=\ml{\reachability}{\pos}$) and \ref{thm:posreach-strat-main} (correctness of $\selector$).

\begin{example}%
\label{ex:posreach-iter}%
Consider again \cref{fig:example+vectors} which shows the values of
$\vec{p1}$: the vector $\vec{p}$ after one iteration of the repeat loop of
\cref{algo:posreach}. In this iteration, the algorithm set $\selector$ to play
$\hactb$ in $s$ with resource level $10$ or more. The final values of $\vec{p}
= \ml{\reachability}{\pos}$ computed by \cref{algo:posreach} are equal to
$\ml{\safety}{}$ for this example (as we can safely reach $t$ from all
reloads). In the iteration, where $\veccomp{\vec{p}}{s} = 2$ for the first
time, the selector is updated to play $\hacta$ in $s$ with resource level
between $2$ and $10$ (excluded). Note that the computed $\selector$ exactly
matches the one mentioned in \cref{ex:selector}.
\end{example}

\begin{theorem}%
\label{thm:posreach-values-main}%
\Cref{algo:posreach} always terminates after a number of steps that is
polynomial with respect to the representation of $\mdp$, and upon termination,
$\vec{p} = \ml{\reachability}{\pos}$.
\end{theorem}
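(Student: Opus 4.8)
The plan is to treat this theorem as the bookkeeping wrapper that glues \cref{algo:posreach} to the two preceding lemmas: essentially all of the mathematical content already lives in \cref{lem:posreach-iterate,lem:posreach-bound}, so the task is to verify that the algorithm faithfully iterates the operator $\mathcal{B}$ on $\vec{y}_\target$ and then to read the termination time and the running time off the code. First I would check the initialization: \cref{aline:pr-init-yt-i,aline:pr-init-yt-ii} set $\vec{p}$ to $\{\infty\}^\states$ and then reset each target component $\veccomp{\vec{p}}{t}$ to $\veccomp{\ml{\safety}{}}{t}$, which is exactly $\vec{y}_\target$ (the call to \cref{algo:safety} on the first line supplies the vector $\ml{\safety}{}$ needed here).

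Next I would argue that one pass of the repeat-loop (\crefrange{aline:pr-loop-b}{aline:pr-loop-e}) computes exactly one application of $\mathcal{B} = \trunc{\mathcal{A}(\cdot)}_\reloads^\Ca$ to the current $\vec{p}$. The loop on \cref{aline:pr-apply-A} writes $\min_{a\in\actions}\SV{\vec{p}_\old}{s}{a}$ into $\veccomp{\vec{p}}{s}$ for every non-target $s$ and leaves the target components untouched, which reproduces $\mathcal{A}$ provided the invariant $\veccomp{\vec{p}}{t}=\veccomp{\ml{\safety}{}}{t}$ holds for $t\in\target$; \cref{aline:pr-trunc} then applies $\trunc{\cdot}_\reloads^\Ca$, completing $\mathcal{B}(\vec{p}_\old)$. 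I would also remark that the selector-building block (the $\arg\min$ on \cref{aline:pr-act-sel} together with the guarded insertions) only writes into $\selector$ and never alters $\vec{p}$, so it is invisible to the value computation and is deferred to \cref{thm:posreach-strat-main}. With this established, an immediate induction yields that after $i$ iterations $\vec{p}=\mathcal{B}^i(\vec{y}_\target)$, which by \cref{lem:posreach-iterate} equals $\ml{\reachability^i}{\pos}$.

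For termination and for the value returned, I would use that the sequence $\mathcal{B}^i(\vec{y}_\target)=\ml{\reachability^i}{\pos}$ is non-increasing, since $\reachability^i\subseteq\reachability^{i+1}$ forces $\ml{\reachability^{i+1}}{\pos}\leq\ml{\reachability^i}{\pos}$; hence the first time two consecutive iterates coincide the common value is a genuine fixed point of $\mathcal{B}$ and remains constant, so the test $\vec{p}_\old=\vec{p}$ detects convergence rather than a spurious coincidence. By \cref{lem:posreach-bound} the fixed point $\ml{\reachability}{\pos}$ is reached after at most $K=|\reloads|+(|\reloads|+1)\cdot(|\states|-|\reloads|+1)$ applications of $\mathcal{B}$, so the loop halts within $K+1$ iterations and returns $\vec{p}=\ml{\reachability}{\pos}$. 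For the complexity bound I would note that $K$ is polynomial in $|\states|$, that a single iteration performs only polynomial work (each $\SV{\vec{p}_\old}{s}{a}$ is a min over successors of a max over successors, summed over all state--action pairs, followed by a linear truncation), and that the preliminary call to \cref{algo:safety} runs in polynomial time by \cref{thm:safety-main}; combining these gives the claimed polynomial running time.

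The step I expect to be the main obstacle is not conceptual but the careful line-by-line identification of the loop body with $\mathcal{B}$ --- in particular establishing the invariant $\veccomp{\vec{p}}{t}=\veccomp{\ml{\safety}{}}{t}$ for target states across initialization, the $\mathcal{A}$-update, and the $\reloads$-based truncation (which behaves consistently on states lying in $\target\cap\reloads$ only because $\veccomp{\ml{\safety}{}}{r}\in\{0,\infty\}$ on reloads), and confirming that the interleaved selector updates do not perturb $\vec{p}$ --- since any mismatch there would break the clean appeal to \cref{lem:posreach-iterate,lem:posreach-bound}.
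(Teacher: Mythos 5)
Your proposal is correct and follows essentially the same route as the paper, which simply observes that the algorithm iterates $\mathcal{B}$ on $\vec{y}_\target$ until a fixed point and then invokes \cref{lem:posreach-bound} for both correctness and the iteration bound; your version merely spells out the line-by-line identification of the loop body with $\mathcal{B}$ (including the $\target\cap\reloads$ truncation detail) that the paper leaves implicit.
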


\begin{proof}
The complexity part follows from \cref{lem:posreach-bound} and the fact that
each iteration takes only linear number of steps. The correctness part is an
immediate corollary of \cref{lem:posreach-bound} and the fact that
\cref{algo:posreach} iterates $\mathcal{B}$ on $\vec{y}_\target$ until a
fixed point.
\end{proof}

\begin{theorem}%
\label{thm:posreach-strat-main}%
Upon termination of \cref{algo:posreach}, the computed rule selector
$\selector$ encodes a strategy $\sigma_{\selector}$ such that
$\vecsat{\sigma_\selector}{\vec{v}}{\reachability}{\pos}$ for
$\vec{v}=\ml{\reachability}{\pos}$. As a consequence, a polynomial-size
finite counter strategy for the positive reachability problem can be computed in
time polynomial with respect to representation of $\mdp$.
\end{theorem}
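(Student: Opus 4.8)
The plan is to establish, for every state $s$ with $d=\veccomp{\ml{\reachability}{\pos}}{s}<\infty$, the two conjuncts hidden in $\sdsat{\sigma_\selector}{s}{d}{\reachability}{\pos}$: that $\sigma_\selector$ loaded with $d$ in $s$ is surely safe, and that it reaches $\target$ from $s$ loaded with $d$ with positive probability. Because $\sigma_\selector$ is a single fixed finite counter strategy and the vector satisfaction relation only quantifies over finite-valued states, proving these two facts state by state yields $\vecsat{\sigma_\selector}{\vec{v}}{\reachability}{\pos}$ for $\vec{v}=\ml{\reachability}{\pos}$; the complexity consequence is then read off the algorithm.

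First I would handle safety. The crucial inequality is $\SV{\vec{x}}{s}{a}\geq\AV{\ml{\safety}{}}{s}{a}$ for every $\vec{x}\geq\ml{\safety}{}$: since $\veccomp{\vec{x}}{s'}\geq\veccomp{\ml{\safety}{}}{s'}$, each hope value $\shope{\vec{x}}{s}{a}{s'}$ is at least $\max_{t\in\Succ(s,a)}\veccomp{\ml{\safety}{}}{t}$, and adding $\cons(s,a)$ gives the claim. As every iterate of $\mathcal{B}$ stays above $\ml{\safety}{}$ (positive reachability requires safety, so in particular $\ml{\reachability}{\pos}\geq\ml{\safety}{}$), every pair $(b,a)$ that \cref{algo:posreach} inserts into $\selector(s)$ satisfies $\AV{\ml{\safety}{}}{s}{a}\leq b$ for non-reload $s$, while for $s\in\reloads$ the inserted border is $0$ but the untruncated safe value is $\leq\Ca$, so that $\AV{\ml{\safety}{}}{s}{a}\leq\Ca$. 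Together with the \minsafe{} action placed at the border $\veccomp{\ml{\safety}{}}{s}$ during initialization, this shows that whenever $\sigma_\selector$ fires an action at a level at or above the corresponding border, that action is safe in the sense of \cref{def:safeact}; hence $\sigma_\selector$ is a safe strategy and \cref{lem:safe-act} gives sure safety from $\ml{\safety}{}$, and a fortiori from the pointwise-larger $\ml{\reachability}{\pos}$.

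The core is positive reachability, which I would prove by constructing a single safe $\sigma_\selector$-compatible run that reaches $\target$; as each of its transitions has positive probability and the run is finite, this gives $\loadedprobm{\sigma_\selector}{s}{d}(\reachability)>0$. I would assign to each finite-valued state its \emph{\reachindex}, the least $i$ with $\veccomp{\mathcal{B}^i(\vec{y}_\target)}{s}=\veccomp{\ml{\reachability}{\pos}}{s}$, which by \cref{lem:posreach-iterate} equals $0$ exactly on $\target$. The border $\veccomp{\ml{\reachability}{\pos}}{s}$ was installed together with an action $a=\veccomp{\vec{a}}{s}$ minimizing $\SV{\vec{p}_\old}{s}{\cdot}$ at some iteration $i$; unfolding $\SV$ and $\shope$ exhibits a distinguished successor $s'\in\Succ(s,a)$ whose value at iteration $i-1$ — hence whose \reachindex{} — is strictly smaller, and for which the resource level after playing $a$ is at least $\veccomp{\ml{\reachability}{\pos}}{s'}$ (for $s\in\reloads$ the reset to $\Ca$ combined with $\SV{\vec{p}_\old}{s}{a}\leq\Ca$ supplies this). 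Starting from $s$ with load $d$ I would follow these distinguished successors, maintaining the invariant that the current level is at least the current state's value while the \reachindex{} strictly decreases; after finitely many steps the run reaches a state of index $0$, i.e. a target.

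The point needing the most care — and the main obstacle — is that $\sigma_\selector$ does not play the action attached to the final border directly but selects via $\selection{\selector(s)}{l}$, so the fired action depends on the current level $l$ and need not be the one attached to $\veccomp{\ml{\reachability}{\pos}}{s}$. I would dispose of this by observing that all reachability borders of $s$ lie at or above $\veccomp{\ml{\safety}{}}{s}$, so that while the invariant keeps $l\geq\veccomp{\ml{\reachability}{\pos}}{s}$ the selected border is always a reachability border (never the bare \minsafe{} action), and that each reachability border carries its own lower-index distinguished successor together with the required level guarantee; thus the witness run can route to that successor whichever border happens to be active, reload states being handled as above. Finally, the complexity claim follows from \cref{thm:posreach-values-main}: \cref{algo:posreach} runs in polynomial time and inserts at most one border per strict decrease of $\vec{p}$, so by \cref{lem:posreach-bound} each state carries polynomially many borders, each a number in $[0,\Ca]$ encodable in $\calO(\log\Ca)$ bits; by the accounting of \cref{sec:counter}, $\sigma_\selector$ is therefore a polynomial-size finite counter strategy computable in polynomial time.
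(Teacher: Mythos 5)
Your overall architecture is the paper's: prove $\sigma_\selector$ safe via the inequality $\SV{\vec{x}}{s}{a}\geq\AV{\ml{\safety}{}}{s}{a}$ for $\vec{x}\geq\ml{\safety}{}$ together with \cref{lem:safe-act}, then exhibit a single finite $\sigma_\selector$-compatible path to $\target$ whose resource levels stay above $\vec{p}$. The safety half and the level-maintenance half of your argument are sound, as is the complexity accounting. The gap is in the termination measure for the witness path. You define the \reachindex{} of a \emph{state} $s$ as the least $i$ with $\veccomp{\mathcal{B}^i(\vec{y}_\target)}{s}=\veccomp{\ml{\reachability}{\pos}}{s}$ and claim it strictly decreases along distinguished successors. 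That inference fails: the distinguished successor $s'$ of the border installed at iteration $i$ minimizes $\shope{\vec{p}_{\old}}{s}{a}{\cdot}$ with $\vec{p}_{\old}=\mathcal{B}^{i-1}(\vec{y}_\target)$, so all you learn is that $\veccomp{\mathcal{B}^{i-1}(\vec{y}_\target)}{s'}$ is finite. That iterate may still sit strictly above $\veccomp{\ml{\reachability}{\pos}}{s'}$, in which case $s'$ converges only at some iteration $\geq i$ and its \reachindex{} does not drop. Without a decreasing measure, nothing stops a ``follow the final border'' walk from cycling: once a reload state $r$ acquires value $0$, a neighbour $u$ can later lower its value by routing toward $r$, so the final borders of $r$ and $u$ may designate each other as distinguished successors, and your level invariant alone does not exclude the resulting infinite $r,u,r,u,\dots$ walk (the reload resets defeat the decreasing-consumption argument).

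The repair --- and what the paper actually does --- is to anchor the induction to the algorithm's iteration counter, equivalently to the insertion iteration of the \emph{currently active border} rather than to the state. The invariant is that after iteration $i$, for every $s$ and every load $\veccomp{\vec{p}}{s}\leq d\leq\Ca$ there is a compatible witness path; when a border is inserted at iteration $i$, its path moves to the distinguished successor $t$, arriving with level at least $\veccomp{\vec{p}_{\old}}{t}=\veccomp{\mathcal{B}^{i-1}(\vec{y}_\target)}{t}$, and then continues with the path that the induction hypothesis already supplies for $t$ \emph{at iteration $i-1$} --- i.e., for the possibly higher, earlier-inserted border that is active at that level. The active border's insertion iteration strictly decreases along the path, which gives finiteness. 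Your closing sentence (``each reachability border carries its own lower-index distinguished successor \dots whichever border happens to be active'') gestures at exactly this, but with ``index'' pinned to the state's convergence iteration the decrease claim is false; re-anchor the measure to the border and the proof closes.
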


\begin{proof}
The complexity follows from \Cref{thm:posreach-values-main}. Indeed, since the
algorithm has a polynomial complexity, also the size of $ \selector $ is
polynomial. The correctness proof is based on the following invariant of the
main repeat-loop. The vector $\vec{p}$ and the finite counter strategy $\pi =
\sigma_\selector$ have these properties:
\begin{enumerate}[(a)]
\item It holds that $\vec{p} \geq \ml{\safety}{}$.\label{item:pr-safe}%
\item Strategy $\pi$ is safe.\label{item:pr-pi-safe}%
\item For each $s\in\states$ with $d$ such that $\veccomp{\vec{p}}{s} \leq d
\leq \Ca$, there is a finite $\pi$-compatible path
$\loadedpath{\hist}{d} = \loadedpath{s}{d}_1a_1s_2\ldots s_n$ with $s_1=s$ and
$s_n\in\target$ such that $\reslevs{d}{\hist} = r_1, r_2, \ldots, r_n$ never
drops below $\vec{p}$, which is $r_i \geq \veccomp{\vec{p}}{s_i}$ for all $1
\leq i \leq n$.%
\label{item:pr-pi-reach}%
\end{enumerate}

\noindent The theorem then follows from (\ref{item:pr-pi-safe}) and
(\ref{item:pr-pi-reach}) of this invariant and from
\cref{thm:posreach-values-main}.

Clearly, all parts of the invariants hold after the initialization on
\crefrange{aline:pr-sel-init-b}{aline:pr-init-yt-ii}. The first item of the
invariant follows from the definition of $\SVname$ and $\shopeName$. In
particular, if $\vec{p_\old} \geq \ml{\safety}{}$, then
$\SV{\vec{p_\old}}{s}{a} \geq \AV{\ml{\safety}{}}{s}{a} \geq
\veccomp{\ml{\safety}{}}{s}$ for all $s$ and $a$. The part
(\ref{item:pr-pi-safe}) follows from (\ref{item:pr-safe}), as the action
assigned to $\selector$ on \cref{aline:pr-sel-up-e} is safe for $s$ with
$\veccomp{\vec{p}}{s}$ units of resource (again, due to $\veccomp{\vec{p}}{s}
= \SV{\vec{p_\old}}{s}{a} \geq \AV{\ml{\safety}{}}{s}{a}$); hence, only
actions that are safe for the corresponding state and resource level are
assigned to $\selector$. By \cref{lem:safe-act}, $\pi$ is safe.

The proof of (\ref{item:pr-pi-reach}) is more involved. Assume that an
iteration of the main repeat loop was performed. Denote by $ \pi_{\old} $  the
strategy encoded by $\vec{p}$ and $\selector$ from the previous iteration. Let
$s$ be any state such that $\veccomp{\vec{p}}{s} \leq \Ca$. If
$\veccomp{\vec{p}}{s} = \veccomp{\vec{p}_{\old}}{s}$, then
(\ref{item:pr-pi-reach}) follows directly from the induction hypothesis: for
each state $q$, $\selector(q)$ was only redefined for values smaller then
$\veccomp{\vec{p}_\old}{q}$ and thus the history witnessing
(\ref{item:pr-pi-reach}) for $\pi_\old$ is also $\pi$-compatible.

The case where $\veccomp{\vec{p}}{s} < \veccomp{\vec{p}_{\old}}{s}$ is treated
similarly. We denote by $a$ the action $\veccomp{\vec{a}}{s}$ selected on
\cref{aline:pr-act-sel} and assigned to $\selector(s)$ for
$\veccomp{\vec{p}}{s}$ on line \ref{aline:pr-insert}. By definition of
$\SVname$, there must be $t\in\Succ(s,a)$ such that
$\shope{\vec{p_\old}}{s}{a}{t} + \cons(s,a) = \SV{\vec{p_\old}}{s}{a}$ (which
is equal to $\veccomp{\vec{p}}{s}$ before the truncation on
\cref{aline:pr-trunc}). In particular, it holds that $l =
\lreslev{\veccomp{\vec{p}}{s}}{sat} \geq \veccomp{\vec{p_\old}}{t}$ (even
after the truncation). Then, by the induction hypothesis, there is a
$t$-initiated finite path $\histpr$ witnessing (\ref{item:pr-pi-reach}) for
$\pi_\old$. Then, the loaded history
$\loadedpath{\hist}{\veccomp{\vec{p}}{s}}$ with $\hist=sat\histconc\histpr$ is
(i) compatible with $\pi$ and, moreover, (ii) we have that
$\reslevs{\veccomp{\vec{p_\old}}{s}}{\hist}$ never drops below $\vec{p}$.
Indeed, (i) $\selector(s)(\veccomp{\vec{p}}{s}) = a$ (see
\cref{aline:pr-sel-up-e}), and (ii) $\selector$ was only redefined for values
lower than $\vec{p}_\old$ and thus $\pi$ mimics $\pi_{\old}$ from $t$ onward.
For the initial load $\veccomp{\vec{p}}{s} < d' \leq \Ca$ the same arguments
apply. This finishes the proof of the invariant and thus also the proof of
\Cref{thm:posreach-strat-main}.
\end{proof}

\section{Büchi: visiting targets repeatedly}
\label{sec:buchi}

This section solves the almost-sure Büchi problem. As before, for the rest of
this section we fix a CMDP $\mdp=(\states, \actions, \trans, \cons, \reloads,
\Ca)$ and a set $\target \subseteq \states$.

The solution builds on the positive reachability problem similarly to how the
safety problem builds on the nonreloading reachability problem. In particular,
we identify a largest set $\reloads'\subseteq\reloads$ such that from each
$r\in\reloads'$ we can safely reach $\reloads'$ again (in at least one step)
while restricting ourselves only to safe strategies that (i) avoid
$\reloads\setminus\reloads'$ and (ii) guarantee positive reachability of
$\target$ in $\mdp(\reloads')$ from all $r\in\reloads'$.

Intuitively, at each visit of $\reloads'$, such a strategy can attempt to
reach $\target$. With an infinite number of attempts, we reach $\target$
infinitely often with probability 1 (almost surely). Formally, we show that
for a suitable $\reloads'$ we have that $\ml[\mdp]{\buchi[\target]}{\as} =
\ml[\mdp(\reloads')]{\reachability[\target]}{\pos}$ (where $\mdp(\reloads')$
denotes the CMDP defined as $\mdp$ with the exception that $\reloads'$ is the
set of reload states).

Algorithm~\ref{algo:buchi-rel} identifies the suitable set $\reloads'$ using
\cref{algo:posreach} in a similar fashion as \cref{algo:safety} handled safety
using \cref{algo:mininitcons_iterative}. In each iteration, we declare as
non-reload states all states of $\reloads$ from which positive reachability of
$\target$ within $\mdp(\varRel)$ cannot be guaranteed. This is repeated until
we reach a fixed point. The number of iterations is clearly bounded by
$|\reloads|$.

\begin{algorithm}[bt]
\KwIn{CMDP $ \mdp = (\states, \actions, \trans, \cons, \reloads,
\Ca)$ and $\target\subseteq S$}%
\KwOut{The vector $\ml[\mdp]{\buchi[\target]}{\as}$, rule selector
  $\selector$}%
$\varRel \leftarrow \reloads $; $\varToRemove \leftarrow \emptyset$\;%
\Repeat{$\varToRemove = \emptyset$}%
{%
  $\varRel \leftarrow \varRel\smallsetminus \varToRemove $\;
  \tcc{The next 2 lines use \cref{algo:posreach} on $\mdp(\varRel)$ and $\target$.}
  $\vec{b} \leftarrow \ml[\mdp(\varRel)]{\reachability[\target]}{\pos}$\label{aline:b-posreach-sub-val}\;
  $\selector \leftarrow$ witness selector for $\ml[\mdp(\varRel)]{\reachability[\target]}{\pos}$\label{aline:b-posreach-sub-sel}\;
  $\varToRemove \leftarrow \{ r \in \varRel \mid
  \veccomp{\vec{b}}{r} > \Ca \}$\; }%
\Return{$\vec{b},\selector$}
\caption{Computing $\ml{\buchi[\target]}{\as}$ and a corresponding witness rule selector.}\label{algo:buchi-rel}%
\end{algorithm}

\begin{theorem}\label{thm:buchi}
Upon termination of \cref{algo:buchi-rel} we have that for the strategy
$\sigma_\selector$ encoded by $\Phi$ it holds
$\vecsat[\mdp]{\sigma_\selector}{\vec{b}}{\buchi}{\as}$. Moreover, $\vec{b} =
\ml[\mdp]{\buchi}{\as}$. As a consequence, a polynomial-size finite counter
strategy for the almost-sure Büchi problem can be computed in time polynomial
with respect to the representation of $\mdp$.
\end{theorem}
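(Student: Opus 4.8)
The plan is to prove the theorem in two parts: correctness of the computed vector ($\vec{b} = \ml{\buchi}{\as}$) and correctness of the strategy ($\sigma_\selector$ witnesses it). The key structural insight is the claimed identity $\ml[\mdp]{\buchi[\target]}{\as} = \ml[\mdp(\reloads')]{\reachability[\target]}{\pos}$ for the set $\reloads'$ that \cref{algo:buchi-rel} stabilizes upon. I would first argue that the repeat-loop correctly computes this $\reloads'$: the loop removes exactly those reload states from which positive reachability of $\target$ cannot be guaranteed in the current sub-CMDP $\mdp(\varRel)$, so by an induction on iterations (mirroring the correctness argument for \cref{algo:safety} in \cref{thm:safety-main}) one shows that every removed reload $r$ genuinely satisfies $\veccomp{\ml[\mdp]{\buchi}{\as}}{r} = \infty$, establishing that the surviving $\reloads'$ is the maximal set with the required safe-return-and-positive-reach property.

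Let me describe the two directions of the value identity. For the upper bound ($\vec{b} \geq \ml{\buchi}{\as}$, i.e. the values suffice), I would take the witness selector $\selector$ for $\ml[\mdp(\reloads')]{\reachability[\target]}{\pos}$ provided by \cref{algo:posreach} and argue that $\sigma_\selector$, when run in $\mdp$, almost-surely visits $\target$ infinitely often. The argument is a Borel--Cantelli style repetition: by \cref{thm:posreach-strat-main}, from any state with at least $\veccomp{\vec{b}}{s}$ resource there is a $\sigma_\selector$-compatible safe path reaching $\target$ whose resource level never drops below $\vec{b}$; in particular from each $r \in \reloads'$ (loaded to $\Ca$ after reloading) there is a uniformly positive probability $p > 0$ of reaching $\target$ before the next mandatory return to $\reloads'$. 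Since safety guarantees the run returns to $\reloads'$ infinitely often (using that $\mdp$ is decreasing, so runs cannot avoid reloads forever), each visit is an independent-enough trial with success probability bounded below by $p$, and hence $\target$ is hit infinitely often with probability $1$.

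For the lower bound ($\vec{b} \leq \ml{\buchi}{\as}$, i.e. the values are necessary) I would show that any strategy satisfying $\buchi$ almost surely must, with probability $1$, confine its reloads to $\reloads'$ and must repeatedly achieve positive reachability of $\target$ between reloads; consequently it needs at least $\veccomp{\ml[\mdp(\reloads')]{\reachability}{\pos}}{s}$ units in $s$. Concretely, if a strategy used a reload outside $\reloads'$ with positive probability, the definition of $\reloads'$ (no safe positive-reach-and-return from there) would let me extract a positive-probability set of runs that either deplete the resource or visit $\target$ only finitely often, contradicting almost-sure Büchi. This pins the effective behaviour down to $\mdp(\reloads')$, and the positive-reachability value of that sub-CMDP is a lower bound on the required initial load because reaching $\target$ even once already demands that much resource.

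The main obstacle I expect is the probabilistic repetition argument in the upper bound, specifically making rigorous the claim that $\target$ is hit infinitely often almost surely. The subtlety is that the ``trials'' at successive visits to $\reloads'$ are not literally independent and the resource level (hence which reload state one lands in) varies; I would handle this by lower-bounding the per-visit success probability by a single constant $p>0$ uniformly over $\reloads'$ (using that after reloading one always has the full capacity $\Ca \geq \veccomp{\vec{b}}{r}$), and then invoking the second Borel--Cantelli lemma, or a standard martingale/renewal argument on the sequence of $\reloads'$-visits, to conclude that infinitely many successes occur with probability $1$. The decreasing-CMDP assumption is exactly what guarantees that almost every safe run visits $\reloads'$ infinitely often, supplying the infinite sequence of trials; this is the one place, together with the lower-bound extraction, where that hypothesis is essential. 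The final complexity and finite-counter claims then follow immediately: at most $|\reloads|$ outer iterations, each a polynomial call to \cref{algo:posreach} (\cref{thm:posreach-values-main,thm:posreach-strat-main}), and the returned $\selector$ is a polynomial-size finite counter strategy by construction.
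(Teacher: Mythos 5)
Your proposal is correct and follows essentially the same route as the paper: the loop stabilizes on the right $\reloads'$, the witness selector for $\ml[\mdp(\reloads')]{\reachability[\target]}{\pos}$ yields almost-sure Büchi via a uniformly lower-bounded per-visit success probability at the infinitely many $\reloads'$-visits forced by the decreasing assumption, and the lower bound follows by extracting a contradiction from any history that touches a removed reload. Your explicit care about the non-independence of the trials (conditional Borel--Cantelli / martingale argument) is if anything slightly more rigorous than the paper's informal ``repeat the argument'' step.
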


\begin{proof}
The complexity part follows from the fact that the number of iterations of the
repeat-loop is bounded by $|\reloads|$ and from
theorems~\ref{thm:posreach-values-main}~and~\ref{thm:posreach-strat-main}.

For the correctness part, we first prove that
$\vecsat[\mdp(\reloads')]{\sigma_\selector}{\vec{b}}{\buchi}{\as}$. Then we
argue that the same holds also for $\mdp$. Finally, we show that $\vec{b} \leq
\ml[\mdp]{\buchi}{\as}$; the converse inequality follows from
$\vecsat[\mdp]{\sigma_\selector}{\vec{b}}{\buchi}{\as}$.

Strategy $\sigma_\selector$ has finite memory, also
$\vecsat[\mdp(\reloads')]{\sigma_\selector}{\vec{b}}{\reachability}{\pos}$, and
upon termination, $\veccomp{\vec{b}}{r} = 0$ for all $r\in\reloads'$.
Therefore, there is $\theta > 0$ such that upon every visit of some state $r
\in \reloads'$ we have that $\loadedprobm[\mdp(\reloads')]{\sigma_\selector}{r}{0}(\reachability)
\geq \theta$.

As $\mdp(\reloads^\prime)$ is decreasing, every safe infinite run created by
$\sigma_\selector$ in $\mdp(\reloads')$ must visit $\reloads^\prime$
infinitely many times. Hence, with probability 1 we reach $\target$ at least
once. The argument can then be repeated from the first point of visit of
$\target$ to show that with probability 1 we visit $\target$ at least twice,
three times, etc. \emph{ad infinitum.} By the  monotonicity of probability, we
get $\loadedprobm[\mdp(\reloads')]{\sigma_\selector}{s}{d}(\buchi) = 1$ for
all $s$ with $d = \veccomp{\vec{b}}{s} \leq \Ca$ and
$\vecsat[\mdp(\reloads']{\sigma_\selector}{\vec{b}}{\buchi}{\as}$.

Now we show that also $\vecsat[\mdp]{\sigma_\selector}{\vec{b}}{\buchi}{\as}$.
Let $s$ be a state such that $\veccomp{\vec{b}}{s} \leq \Ca$. Clearly, all
$s$-initiated runs loaded by $d \geq \veccomp{\vec{b}}{s}$ that are compatible
with $\sigma_\selector$ in $\mdp(\reloads')$ avoid
$\reloads\smallsetminus\reloads'$. Therefore,
$\lcompatible[\mdp]{\sigma_\Phi}{s}{d} =
\lcompatible[\mdp(\reloads')]{\sigma_\Phi}{s}{d}$ and we also get
$\sdsat[\mdp]{\sigma_\Phi}{s}{d}{\buchi}{\as}$.

It remains to show that $ \vec{b}\leq \ml[\mdp]{\buchi}{\as}$. Assume for the
sake of contradiction that there is a state $s\in\states$ and a strategy
$\sigma$ such that $\sdsat[\mdp]{\sigma}{s}{d}{\buchi}{\as}$ for some
$d<\veccomp{\vec{b}}{s} =
\veccomp{\ml[\mdp(\reloads')]{\reachability}{\pos}}{s}$. Then there must be at
least one $\loadedpath{\hist}{d}$ created by $\sigma$ such that
$\loadedpath{\hist}{d}$ visits $r\in\reloads\setminus\reloads^\prime$ before
reaching $\target$ (otherwise $d\geq \veccomp{\vec{b}}{s}$). Then either (a)
$\veccomp{\ml[\mdp]{\reachability}{\pos}}{r} = \infty$, in which case any $
\sigma$-compatible extension of $\loadedpath{\hist}{d}$ avoids $\target$; or
(b) since $\veccomp{\ml[\mdp(\reloads')]{\reachability}{\pos}}{r} > \Ca$,
there must be an extension of $\hist$ that visits, between the visit of $r$
and $\target$, another $r' \in \reloads\setminus\reloads'$ such
that $r' \neq r$. We can then repeat the argument, eventually reaching
the case (a) or running out of the resource, a contradiction with
$\sdsat[\mdp]{\sigma}{s}{d}{\safety}{}$.%
\end{proof}
\section{Almost-sure reachability}%
\label{sec:as-reach}

In this section, we solve the \emph{almost-sure reachability} problem, which
is computing the vector $\ml{\reachability[\target]}{\as}$ and the
corresponding witness strategy for a given set of target states
$\target\subseteq\states$.

\subsection{Reduction to B\"uchi}

In the absence of the resource constraints, reachability can be viewed as a special case of B\"uchi: we can simply modify the MDP so that playing any action in some target state $ t \in \target $ results into looping in $ t $, thus replacing reachability with an equivalent B\"uchi condition. In consumption MDPs, the transformation is slightly more involved, due to the need to ``survive'' after reaching $ \target $. Hence, for every CMDP $ \mdp $ and a target set $ \target $ we define a new CMDP $ \altmdp{\mdp}{\target} $ so that solving $ \altmdp{\mdp}{\target} $ w.r.t. the B\"uchi objective entails solving $ \mdp $ w.r.t. the reachability objective. Formally, for $ \mdp = (\states, \actions, \trans, \cons, \reloads, \Ca) $ we have $ \altmdp{\mdp}{\target} = (\states', \actions, \trans', \cons', \reloads', \Ca)$, where the differing components are defined as follows:

\begin{itemize}
\item $ \states' = \states \cup \{\sink\} $, where $ \sink \not\in \states $ is a new sink state, i.e. $ \trans'(\sink,a,\sink) = 1 $ for each $ a\in \actions $;
\item we have $ \reloads' = \reloads \cup \{\sink\} $;
\item for each $ t \in \target $ and $ a\in \actions $ we have $ \trans'(t,a,\sink) = 1$ and $\trans'(t,a,s)=0$ for all $s\in\states$;
\item for each $ t \in \target $ and $a \in \actions$ we have $ \cons'(t,a) = \ml[\mdp]{\safety}{}(t) $;
\item we have $ \cons'(\sink,a) = 1 $ for each $ a \in \actions $; and
\item we have $ \trans'(s,a,t) = \trans(s,a,t) $ and $ \cons'(s,a) = \cons(s,a) $ for every $ s \in \states\setminus \target$, every $ a\in \actions $, and every $t\in\states$.
\end{itemize}

\noindent
We can easily prove the following:

\begin{lemma}
\label{lem:as-reduction} For every $ s \in \states$ it holds that $
\ml[\mdp]{\reachability[\target]}{\as}(s) =
\ml[\altmdp{\mdp}{\target}]{\buchi[\{\sink\}]}{\as}(s) $. Moreover, from a
witness strategy for $ \ml[\mdp]{\reachability[\target]}{\as} $ we can
extract, in time polynomial with respect to the representation of $\mdp$, the
witness strategy for $ \ml[\altmdp{\mdp}{\target}]{\buchi[\{\sink\}]}{\as} $
and vice versa.
\end{lemma}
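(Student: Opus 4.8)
The plan is to establish the per-state equality by proving the two inequalities separately, each by an explicit strategy transformation, and then to observe that both transformations preserve finite-counter representability and are computable in polynomial time. The conceptual heart of the argument is that the modified consumption $\cons'(t,a) = \veccomp{\ml[\mdp]{\safety}{}}{t}$ for target states $t\in\target$ encodes exactly the ``survive after reaching $\target$'' requirement: reaching the new sink safely in $\altmdp{\mdp}{\target}$ will correspond to reaching $\target$ in $\mdp$ with enough resource left to continue safely.

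First I would prove $\ml[\altmdp{\mdp}{\target}]{\buchi[\{\sink\}]}{\as}(s) \leq \ml[\mdp]{\reachability[\target]}{\as}(s)$. Let $\sigma$ be a witness strategy for $\ml[\mdp]{\reachability[\target]}{\as}$ and fix $s$ with $d = \veccomp{\ml[\mdp]{\reachability[\target]}{\as}}{s} \leq \Ca$. Since $\reachability[\target]\subseteq\safety$, every $\sigma$-compatible run loaded with $d$ from $s$ is safe over its entire infinite length and reaches $\target$ with probability $1$. The key observation is that, because such a run must remain safe after its first visit of some $t\in\target$, the resource level at that first visit is necessarily at least $\veccomp{\ml[\mdp]{\safety}{}}{t}$, by minimality of $\ml[\mdp]{\safety}{}$. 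I would then let $\sigma'$ mimic $\sigma$ until the first visit of $\target$ and play arbitrarily afterwards. As the transition and consumption functions of $\altmdp{\mdp}{\target}$ agree with those of $\mdp$ on all non-target states, the resource profiles coincide up to that first visit; at $t$ the chosen action consumes $\cons'(t,a)=\veccomp{\ml[\mdp]{\safety}{}}{t}$, which is covered, so the step to $\sink$ is safe, and once in $\sink$ (a reload state with unit self-loop) the run is safe forever and visits $\sink$ infinitely often. Hence $\sigma'$ satisfies $\buchi[\{\sink\}]$ almost surely from $s$ loaded with $d$.

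For the converse, I would start from a witness $\sigma'$ for $\ml[\altmdp{\mdp}{\target}]{\buchi[\{\sink\}]}{\as}$ and fix $s$ with $d' = \veccomp{\ml[\altmdp{\mdp}{\target}]{\buchi[\{\sink\}]}{\as}}{s} \leq \Ca$. Since $\sink$ has only a self-loop, visiting it infinitely often is equivalent to reaching it once, and $\sink$ is entered only through a target $t\in\target$ via an action consuming $\veccomp{\ml[\mdp]{\safety}{}}{t}$; safety of the $\sigma'$-compatible runs therefore forces the resource level at the first visit of $t$ to be at least $\veccomp{\ml[\mdp]{\safety}{}}{t}$. I would then define $\sigma$ for $\mdp$ to mimic $\sigma'$ until the first visit of $\target$ and afterwards switch to a memoryless safe strategy $\pi$, which exists by \cref{thm:safety-strat}. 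Because the resource upon reaching $t$ is at least $\veccomp{\ml[\mdp]{\safety}{}}{t}$, \cref{lem:safe-act} guarantees that $\pi$ keeps every continuation safe; and since the prefix behaviour matches that in $\altmdp{\mdp}{\target}$, the runs are safe and reach $\target$ with probability $1$. Thus $\sigma$ satisfies $\reachability[\target]$ almost surely from $s$ loaded with $d'$, giving $\veccomp{\ml[\mdp]{\reachability[\target]}{\as}}{s} \leq d'$.

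Both transformations add only bounded bookkeeping (detecting the first target visit, and splicing in the polynomial-time-computable $\pi$ in the second direction), so the witness strategies can be extracted in polynomial time, which yields the ``moreover'' part. I expect the main obstacle to be the resource-level bookkeeping: one must argue carefully that corresponding runs in $\mdp$ and $\altmdp{\mdp}{\target}$ have identical resource profiles up to the first visit of $\target$, and that $\cons'(t,a)=\veccomp{\ml[\mdp]{\safety}{}}{t}$ faithfully captures survival after reaching $\target$ --- precisely, that $\veccomp{\ml[\mdp]{\safety}{}}{t}$ is both necessary (minimality of $\ml[\mdp]{\safety}{}$, used in the first direction) and sufficient (\cref{lem:safe-act}, used in the second) for a safe continuation from $t$.
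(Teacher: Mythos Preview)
Your proposal is correct and follows essentially the same approach as the paper: both directions are proved by mimicking the given witness strategy up to the first visit of $\target$ and then switching (to the trivial sink behaviour in $\altmdp{\mdp}{\target}$, or to a memoryless safe strategy in $\mdp$), with the consumption $\cons'(t,a)=\veccomp{\ml[\mdp]{\safety}{}}{t}$ serving as the bridge between ``safe transition to $\sink$'' and ``sufficient resource to survive in $\mdp$ after reaching $t$.'' The paper spells out only the direction $\ml[\altmdp{\mdp}{\target}]{\buchi[\{\sink\}]}{\as}(s) \geq \ml[\mdp]{\reachability[\target]}{\as}(s)$ and declares the converse ``similar,'' whereas you give both explicitly; otherwise the arguments coincide.
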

\begin{proof}
Let $ \sigma $ be a witness strategy for $ \ml[\altmdp{\mdp}{\target}]{\buchi[\{\sink\}]}{\as} $ in $ \altmdp{\mdp}{\target} $. Consider a strategy $ \pi $ in $ \mdp $ which, starting in some state $ s $, mimics $ \sigma $ until some $ t \in \target $ is reached (this is possible since $ \altmdp{\mdp}{\target} $ only differs from $ \mdp $ on $ \target\cup\{\sink\} $), and upon reaching $ \target $ switches to mimicking an arbitrary safe strategy. Since $ \sigma $ reaches $ \sink $ and thus also $ \target $ almost-surely, so does $ \pi $. Moreover, since $ \sigma $ is safe, upon reaching a $ t\in \target $ the current resource level is at least $ \ml[{\mdp}{}]{\safety}{}(t) $, since consuming this amount is enforced in the next step. This is sufficient for $ \pi $ to prevent resource exhaustion after switching to a safe strategy.

It follows that $ \ml[\altmdp{\mdp}{\target}]{\buchi[\{\sink\}]}{\as}(s) \geq \ml[\mdp]{\reachability[\target]}{\as}(s) $ for all $ s \in \states. $ The converse inequality can be proved similarly, by defining a straightforward conversion of a witness strategy for $ \ml[\mdp]{\reachability[\target]}{\as} $ into a witness strategy for $ \ml[\altmdp{\mdp}{\target}]{\buchi[\{\sink\}]}{\as}(s)$. The conversion can be clearly performed in polynomial time, with the help of Algorithm~\ref{algo:safety}.
\end{proof}

Hence, we can solve almost-sure reachability for $ \mdp $ by constructing $
\altmdp{\mdp}{\target} $ and solving the latter for almost-sure B\"uchi via
Algorithm~\ref{algo:buchi-rel}. The construction of $ \altmdp{\mdp}{\target} $
can be clearly performed in time polynomial in the representation of $\mdp$
(using Algorithm~\ref{algo:safety} to compute $ \ml[\mdp]{\safety}{} $), hence
also almost-sure reachability can be solved in polynomial time. %
%

\subsection{Almost-sure reachability without model modification}%
In practice, building $\altmdp{\mdp}{\target}$ and translating the synthesized
strategy back to $\mdp$ is inconvenient. Hence, we also present an algorithm
to solve almost-sure reachability directly on $\mdp$. The algorithm consists
of a minor modification of the already presented algorithms.

To argue the correctness of the algorithm, we need a slight generalization of the MDP modification. We call a vector $\vec{v}\in\extNset^\states$ a \emph{sink vector} for $ \mdp $ if and only
if $0 \leq \veccomp{\vec{v}}{s} \leq \veccomp{\ml[\mdp]{\safety}{\as}}{s}$ or
$\veccomp{\vec{v}}{s} = \infty$ for all $s \in \states$. By
$\sinkstates{\vec{v}}$ we denote the set $\{s \in \states \mid
\veccomp{\vec{v}}{s} < \infty\}$ of states with finite value of $\vec{v}$ and
we call each member of this set a \emph{sink entry}. We say that $ \vec{v} $ is a sink vector \emph{for $ T $} if $ \sinkstates{\vec{v}} = T $. Given a CMDP $ \mdp $, target set $ \target $, and sink vector $ \vec{v} $ for $ T $, we define a new CMDP $ \altmdp[\vec{v}]{\mdp}{\target} $ in exactly the same way as $ \altmdp{\mdp}{\target} $, except for the fourth point: for every $ t \in \target$ we put $ \gamma'(t,a) = \vec{v}(t) $ for all $ a\in \actions $. Note that $ \altmdp{\mdp}{\target} =\altmdp[{\ml[\mdp]{\safety}{}}]{\mdp}{\target} $.

\begin{algorithm}[ht]
\KwIn{CMDP $\mdp=(\states, \actions, \trans, \cons, \reloads, \Ca)$, sets of states $ \target,\altarget \subseteq \states,$ a sink vector $\vec{v}\in\extNset^\states$ for $ T $
}
\KwOut{$ \ml[{\altmdp[{\vec{v}}]{\mdp}{\target}}]{\nonreloading[+\altarget\cup\{\sink\}]}{} $ projected to $ S $}
$\vec{x} \leftarrow \vec{\infty}^\states$\;
\Repeat{$\,\vec{x}_{\mathit{old}} = \vec{x}$}{
  $ \vec{x}_\old \leftarrow \vec{x}$\;
  \ForEach{$s \in \states$}{
    %
    \If{$s\in \sinkstates{\vec{v}}$}{
          $\veccomp{\vec{x}}{s} \leftarrow \veccomp{\vec{v}}{s}$\;
        }
    \Else{
      $c \leftarrow%
        \min_{a\in \actions}\AV{\strunc[\altarget]{\vec{x}_\old}}{s}{a}$\;
      \If{$c < \veccomp{\vec{x}}{s}$}{
        $ \veccomp{\vec{x}}{s} \leftarrow c$\;
      }
    }
  }
}
\Return{$\vec{x}$}
\caption{Modified safe sure reachability.}
\label{algo:non-reloading-sinks}
\end{algorithm}

 Given a CMDP $ \mdp $, subsets of states $ \target, \altarget $ of $ \mdp $, and a sink vector $ \vec{v} $ for $ \target $, Algorithm~\ref{algo:non-reloading-sinks} computes the $ S $-components of the vector $ \ml[{\altmdp[{\vec{v}}]{\mdp}{\target}}]{\nonreloading[+\altarget']}{} $, where $ \altarget' = \altarget\cup\{\sink\} $. To see this, denote by $ \vec{x}_i $ the contents of the variable $ \vec{x} $ in the $ i $-th iteration of Algorithm~\ref{algo:non-reloading-sinks} on the input $ \mdp $, $ \target $, $ \altarget $, $ \vec{v} $; and by $ \vec{v}_i $ the contents of variable $ \vec{v} $ in the $ i $-th iteration of an execution of Algorithm~\ref{algo:mininitcons_iterative} on CMDP $ {\altmdp[{\vec{v}}]{\mdp}{\target}} $ with target set $ \altarget \cup \{\sink\} $. A straightforward induction shows that $ \vec{v}_i(s) = \vec{x}_i(s) $ for all $ s\in \states $ and all $ i $.


Then, the vector $\ml[{\altmdp[{\vec{v}}]{\mdp}{\target}}]{\safety{}}{}$ can be computed using a
slight modification of \cref{algo:safety}: on \cref{aline:mcon} use
$\ml[{\altmdp[{\vec{v}}]{\mdp}{\target}}]{\nonreloading[+R'\cup\{\sink\}]}{}$ (projected to $ S $) computed by
\cref{algo:non-reloading-sinks} instead of $\ml{\nonreloading[+\varRel]}{}$ computed by \cref{algo:mininitcons_iterative}. Then, run the modified algorithm on $ \mdp $. The correctness can be argued similarly as for Algorithm~\ref{algo:non-reloading-sinks}: let $ R'_i $ be the contents of $ R' $ in the $ i $-th iteration of Algorithm~\ref{algo:safety} on $ \altmdp[{\vec{v}}]{\mdp}{\target} $; and let $ \tilde{R}'_i $ be the contents of $ R' $ in the $ i $-th iteration of the modified algorithm executed on $ \mdp $. Clearly, $ \sink \in \tilde{R}'_i $ for all $ i $. An induction on $ i $ shows that for all $ i $ we have $ R' = \tilde{R}' \setminus\{\sink\} $, so both algorithms terminate in the same iteration. The correctness follows from the correctness of Algorithm~\ref{algo:safety}.

Now we can proceed to solve almost-sure reachability. Algorithm~\ref{algo:asreach} combines (slightly modified) algorithms~\ref{algo:posreach} and~\ref{algo:buchi-rel} to mimic the solving of B\"uchi objective in $ \altmdp{\mdp}{\target} $ with a single B\"uchi accepting state $ \{\sink\} $.

Lines \ref{aline:posreach-start}-\ref{aline:as-loop-e} correspond to the computation of Algorithm~\ref{algo:posreach}
on $ \altmdp{\mdp}{\target}(\reloads'\cup\{\sink\}) $. To see this, note that $
\altmdp{\mdp}{\target}(\reloads'\cup\{\sink\}) = \altmdp[\vec{v}]{\mdp(\reloads')}{\target}$ for $ \vec{v} =
\ml[\mdp]{\safety}{} $. Hence, line~\ref{aline:prinitb} in Algorithm 3 is replaced by line~\ref{aline:as-mod-safety} in
\Cref{algo:asreach}. Also, on lines~\ref{aline:as:act-sel} and~\ref{aline:as:apply-A} (which correspond to
lines~\ref{aline:pr-act-sel} and~\ref{aline:pr-apply-A} in Algorithm~\ref{algo:posreach}), we use generalized versions
of $\shopeName$ and $\SVname$ that use
alternative \emph{survival values} $\vec{s}$:


\[
\begin{aligned}
\shopeSinks{\vec{x}}{s}{a}{s'}{\vec{s}} &= %
\max_{\substack{%
  t \in \Succ(s,a)\\
  t \neq s'
}} \{\veccomp{\vec{x}}{s'}, \veccomp{\vec{s}}{t}\}\\
\SVSinks{\vec{x}}{s}{a}{\vec{s}} &= \cons(s, a) + \min_{s'\in\Succ(s, a)} \shopeSinks{\vec{x}}{s}{a}{s'}{\vec{s}}
\end{aligned}
\]

\noindent
Note that $\shopeName =
\shopeName[\ml{\safety}{}]$ and $\SVname = \SVname[\ml{\safety}{}]$. These generalized operators are used because
\Cref{algo:asreach} works on $ \mdp $,
but lines \ref{aline:as:act-sel} and~\ref{aline:as:apply-A} should emulate the computation of lines
\ref{aline:pr-act-sel} and~\ref{aline:pr-apply-A} on $ \altmdp{\mdp}{\target}(\reloads'\cup\{\sink\}) $, so the vector
$ \ml[\mdp]{\safety}{} $ in the definition of the hope value $ \shopeName $ has to be substituted for
$ \ml[{\altmdp{\mdp}{\target}(\reloads'\cup\{\sink\})}]{\safety}{}  =
\ml[{\altmdp[\vec{v}]{\mdp(\reloads')}{\target}}]{\safety}{}$ where $ \vec{v} =  \ml[\mdp]{\safety}{} $.

\begin{algorithm}[t]
  \KwIn{CMDP $\mdp=(\states, \actions, \trans, \cons, \reloads, \Ca)$ and
  $\target\subseteq \states$}%
  \KwOut{The vector $\ml[\mdp]{\reachability[\target]}{\as}$, rule selector
  $\selector$}%
  $\vec{o} \leftarrow \ml[\mdp]{\safety}{}$\tcc*[r]{\scriptsize\cref{algo:safety}}%
  \label{aline:as:orig-safety}%
  $\vec{v} \leftarrow \{\infty\}^S  $\;\label{aline:as:init-vT-b}%
  \lForEach{$t\in \target$}%
  {$\veccomp{\vec{v}}{t} \leftarrow \veccomp{\vec{o}}{t}$}%
  $\reloads' \leftarrow \reloads$; $\varToRemove \leftarrow \emptyset$\;
  \Repeat{$\varToRemove = \emptyset$}%
  {%
    $\varRel \leftarrow \varRel\smallsetminus \varToRemove $\;
    $\selector \leftarrow \text{ an empty selector}$\label{aline:posreach-start}\;
    \ForEach{$s\in \states$\label{aline:as:sel-init-b}}%
      {
        $\selector(s)(\veccomp{\vec{o}}{s}) \leftarrow \text{arbitrary \minsafe{} action of }s$
      }\label{aline:as-sel-init-e}%
      $\vec{p} \leftarrow \vec{v}$\;%

      $\vec{s} \leftarrow \ml[{\altmdp[{\vec{v}}]{\mdp(R')}{\target}}]{\safety}{}$\tcc*[r]{\scriptsize modified
      alg.~\ref{algo:safety}} \label{aline:as-mod-safety}
      \Repeat{$\,\vec{p}_{\mathit{old}} = \vec{p}$\label{aline:as:loop-e}}{
        \label{aline:as:loop-b}%
        $\vec{p}_\old \leftarrow \vec{p}$\;%
        \ForEach{$s \in \states\smallsetminus \target$}{%
          $\veccomp{\vec{a}}{s} \leftarrow%
            \arg\min_{a\in \actions} \SVSinks{\vec{p}_\old}{s}{a}{\vec{s}}$\;\label{aline:as:act-sel}%
          $\veccomp{\vec{p}}{s} \leftarrow%
            \min_{a\in \actions} \SVSinks{\vec{p}_\old}{s}{a}{\vec{s}}$\;\label{aline:as:apply-A}%
        }%
        $\vec{p} \leftarrow \trunc{\vec{p}}_{\reloads'}^{\Ca}$\;\label{aline:as:trunc}%
        \ForEach{$ s\in \states\smallsetminus \target $\label{aline:as:sel-up-b}}{%
          \If{$ \veccomp{\vec{p}}{s} < \veccomp{\vec{p}_{\old}}{s}$}{%
           $\selector(s)(\veccomp{\vec{p}}{s}) \leftarrow%
           \veccomp{\vec{a}}{s} $\;\label{aline:as-insert}\label{aline:as-sel-up-e}%
          }%
        }%
      }\label{aline:as-loop-e}%
    $\varToRemove \leftarrow \{ r \in \varRel \mid
    \veccomp{\vec{p}}{r} > \Ca \}$\; }%
  \Return{$\vec{p},\selector$}
  \caption{Computing $\ml{\reachability[\target]}{\as}$ and a corresponding witness rule selector.} \label{algo:asreach}
\end{algorithm}

Hence, the respective lines indeed emulate the computation of~\Cref{algo:posreach} on
$\altmdp{\mdp}{\target}(\reloads'\cup\{\sink\})$. It remains to show that the whole repeat loop emulates the
computation of~\Cref{algo:buchi-rel} on $ \altmdp{\mdp}{\target} $. But this follows immediately from the fact that in
the latter computation, $ \sink $ always stays in $ R' $.

\section{Improving Expected Reachability Time}
\label{sec:heuristics}

The number of steps that a strategy needs on average to reach the target set
$\target$ (\emph{expected reachability time (ERT)}) is a property of practical
importance. %
For example, we expect that a patrolling unmanned vehicle visits all the
checkpoints in a reasonable amount of time. %
The presented approach is purely qualitative (ensures reachability with
probability 1) and thus does not consider the number of steps at all. In this
section, we introduce two heuristics that can improve ERT: the
\emph{goal-leaning} heuristic and the \emph{threshold} heuristic. These slight
modifications of the algorithms produce strategies that can often hit
$\target$ sooner than the strategies produced by the unmodified algorithms.

\subsection{Expected reachability time}

To formally define ERT, we introduce a new objective: $\minreach[\target]{i}$
(\emph{reachability first in $i$ steps}) as %
$\minreach[\target]{i} = \{%
\loadedpath{\run}{d} \in \reachability[\target]^i \mid%
\loadedpath{\run}{d} \notin \reachability[\target]^j%
\text{ for all } 0 \leq j < i\}$ %
(the set of all safe loaded runs $\loadedpath{\run}{d}$ such that the minimum
$j$ such that $\run_j\in\target$ is equal to $i$). Finally, the \emph{expected
reachability time} for a a strategy $\sigma$, an initial state $s\in\states$,
an initial load $d\leq \Ca$, and a target set $\target\subseteq\states$ is
defined as follows.
\[
\ERT[\mdp]{\sigma}{s}{d}{\target} = \sum_{i \in \Nset} i \cdot \loadedprobm[\mdp]{\sigma}{s}{d}(\minreach[\target]{i})
\]

The running example for this section is the CMDP in \cref{fig:heurex-i} with
capacity $\geq 3$ and the almost-sure satisfaction of the reachability
objective for $\target=\{t\}$. Consider the following two memoryless
strategies that differ only in the action played in $s$: $\sigma_{\hacta}$
always plays $\hacta$ in $s$ and $\sigma_{\hactb}$ always plays $\hactb$.
Loaded with $2$ units of resource in $s$, we have
$\ERT{\sigma_{\hacta}}{s}{2}{\{t\}}=2$ and
$\ERT{\sigma_{\hactb}}{s}{2}{\{t\}}=20$. Indeed, $\sigma_{\hacta}$ surely
reaches $t$ in $2$ steps while $\sigma_{\hactb}$ needs $10$ trials on average
before reaching $t$ via $v$, each trial needing $2$ steps before coming back
to $s$.

\begin{figure}[h]
\centering
\begin{tikzpicture}[automaton]
\tikzstyle{a} = [magenta]
\tikzstyle{b} = [cyan]
\tikzstyle{ab} = [black, thick]
\node[state, initial, initial angle=135] (start)    at (0,0)  {$s$};
\node[state,reload,target] (t)   at (4.5,0) {$t$};
\node[state] (u)      at (2.25,1)  {$u$};
\node[state, reload] (rel) at (-2.25,0) {$ r $};
\node[state] (v) at (2.25,-1)  {$v$};

\path[->,auto, swap]
(start) edge[bend left=25, a]
  node[swap] {$\acta$}
  node[cons] {1}
(u)
(v) edge[bend right=25, ab]
  node[swap] {$\acta,\actb$}
  node[cons] {0}
(t)
(u) edge[bend left=25, ab]
  node[swap] {$\acta,\actb$}
  node[cons] {1}
(t)
(start) edge[out = -30, in = 180, looseness = 1.1, b]
  node[above, pos=0.18] {$\actb$}
  node[prob, overlay] {$ \frac{1}{10} $}
(v)
(start) edge[out = -30, in = 290, looseness = 1.1, b]
  node[prob,swap,pos=.7] {$ \frac{9}{10} $}
  node[cons, swap, pos=0.3,] {2}
(rel)
(rel) edge[ab]
  node[swap] {$\acta,\actb$}
  node[cons] {1}
(start)
(t) edge[loop right, ab]
  node[above, outer sep=3pt] {$\acta,\actb$}
  node[cons, below, outer sep=3pt] {1}
(t)
;

\begin{scope}[every node/.append style={valbox, above}]
\node[as reach] at (start) {$2$};
\node[as reach] at (t) {$0$};
\node[as reach] at (u) {$1$};
\node[as reach] at (v) {$0$};
\node[as reach] at (rel) {$0$};
\end{scope}

\end{tikzpicture}
\caption{Example CMDP for the goal-leaning heuristic. The green boxes above states indicate $\ml{\reachability[\{t\}]}{\as}$.}
\label{fig:heurex-i}
\end{figure}

\subsection{Goal-leaning heuristic}

Actions to play in certain states with a particular amount of resource are
selected on \cref{aline:pr-act-sel} of \cref{algo:posreach} (and on
\cref{aline:as:act-sel} of \cref{algo:asreach}) based on the actions'
\emph{save values $\SVname$}. This value in $s$ based on
$\ml{\reachability}{\as}$ is equal to 2 for both actions $\hacta$ and $\hactb$.
Thus, \cref{algo:posreach} (and also \cref{algo:asreach}) returns
$\sigma_{\hacta}$ or $\sigma_{\hactb}$ randomly based on the resolution of the
$\arg\min$ operator on \cref{aline:pr-act-sel} (\cref{aline:as:act-sel} in
\cref{algo:asreach}) for $s$. The \emph{goal-leaning heuristic} fixes the
resolution of the $\arg \min$ operator to always pick $\hacta$ in this example.

The ordinary $\arg\min$ operator selects randomly an action from the pool of
actions with the minimal value $v_{\min}$ of the function $\SVname$ for $s$
(and the current values of $\vec{p_\old}$). Loosely speaking, the goal-leaning
$\arg\min$ operator chooses, instead, the action whose chance to reach the
desired successor used to obtain $v_{\min}$ is maximal among actions in this
pool.

The value $\SVname$ is computed using successors' hope values ($\shopeName$),
see \cref{sec:posreach}. The goal-leaning $\arg\min$ operator records, when
computing the $\shopeName$ values, also the transition probabilities of the
desired successors. Let $s$ be a state, let $a$ be an action, and let
$t\in\Succ(s,a)$ be the successor of $a$ in $s$ that minimizes
$\shope{\vec{p_\old}}{s}{a}{t}$ \emph{and} maximizes $\trans(s,a,t)$ (in this
order). We denote by $p_{s,a}$ the value $\trans(s,a,t)$. The goal-leaning
$\arg\min$ operator chooses the action $a$ in $s$ that minimizes
$\SV{\vec{p_\old}}{s}{a}$ \emph{and} maximizes $p_{s,a}$.

In the example from \cref{fig:heurex-i} we have that $p_{s,\hacta}=1$ and
$p_{s,\hactb}=\frac{1}{10}$ (as $v$ is the desired successor) in the second
iteration of the repeat-loop on \crefrange{aline:pr-loop-b}{aline:pr-loop-e}.
In the last iteration, $p_{s,\hacta}$ remains $1$ and $p_{s,\hactb}$ changes to
$\frac{9}{10}$ as the desired successor changes to $r$. In both cases, $\hacta$
is chosen by the goal-leaning $\arg\min$ operator as $p_{s,\hacta} >
p_{s,\hactb}$.

\textbf{Correctness.} We have only changed the behavior of the $\arg\min$
operator when multiple candidates could be used. The correctness of our
algorithms does not depend on this choice and thus the proofs apply also to
the variant with the goal-leaning operator.

\medskip

While the goal-leaning heuristic is simple, it has a great effect in practical
benchmarks; see \cref{sec:experiments}. However, there are scenarios where it
still fails. Consider now the CMDP in \cref{fig:heurex-ii} with capacity at
least $3$. Note that now $\cons(s,\hactb)$ equals to $1$ instead of $2$. In
this case, even the goal-leaning heuristic prefers $\hactb$ to $\hacta$ in $s$
whenever the current resource level is at least $1$. The reason for this
choice is that $\SV{\vec{p_\old}}{s}{\hactb} = 1 < 2 =
\SV{\vec{p_\old}}{s}{\hacta}$ from the second iteration of the repeat-loop
onward.

\begin{figure}[hb]
\centering
\begin{tikzpicture}[automaton]
\tikzstyle{a} = [magenta]
\tikzstyle{b} = [cyan]
\tikzstyle{ab} = [black, thick]
\node[state, initial, initial angle=135] (start)    at (0,0)  {$s$};
\node[state,reload,target] (t)   at (4.5,0) {$t$};
\node[state] (u)      at (2.25,1)  {$u$};
\node[state, reload] (rel) at (-2.25,0) {$ r $};
\node[state] (v) at (2.25,-1)  {$v$};

\path[->,auto, swap]
(start) edge[bend left=25, a]
  node[swap] {$\acta$}
  node[cons] {1}
(u)
(v) edge[bend right=25, ab]
  node[swap] {$\acta,\actb$}
  node[cons] {0}
(t)
(u) edge[bend left=25, ab]
  node[swap] {$\acta,\actb$}
  node[cons] {1}
(t)
(start) edge[out = -30, in = 180, looseness = 1.1, b]
  node[above, pos=0.18] {$\actb$}
  node[prob, overlay] {$ \frac{1}{10} $}
(v)
(start) edge[out = -30, in = 290, looseness = 1.1, b]
  node[prob,swap,pos=.7] {$ \frac{9}{10} $}
  node[cons, swap, pos=0.3,] {1}
(rel)
(rel) edge[ab]
  node[swap] {$\acta,\actb$}
  node[cons] {1}
(start)
(t) edge[loop right, ab]
  node[above, outer sep=3pt] {$\acta,\actb$}
  node[cons, below, outer sep=3pt] {1}
(t)
;

\begin{scope}[every node/.append style={valbox, above}]
\node[as reach] at (start) {$1$};
\node[as reach] at (t) {$0$};
\node[as reach] at (u) {$1$};
\node[as reach] at (v) {$0$};
\node[as reach] at (rel) {$0$};
\end{scope}

\end{tikzpicture}
\caption{Example CMDP for the threshold heuristic. In comparison to
\cref{fig:heurex-i}, the consumption of $\hactb$ is $1$ instead of $2$.}
\label{fig:heurex-ii}
\end{figure}
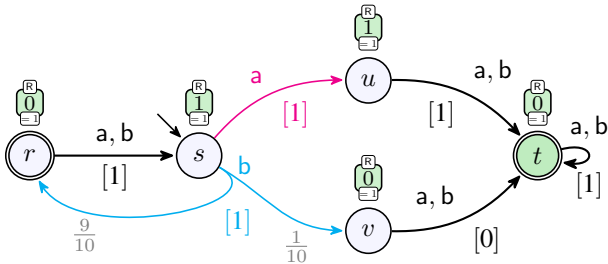

Note also that the strategy $\sigma_{\hacta}$ that always plays $\hacta$ in
$s$ is not a witness strategy for $\ml{\reachability}{\as}$ as
$\sigma_{\hacta}$ needs at least $2$ units of resource in $s$. The desired
strategy $\pi$ should behave in $s$ as follows: play $\hacta$ if the current
resource is at least $2$ and otherwise play $\hactb$. We have that
$\ERT{\pi}{s}{2}{\{t\}} = 2$ and $\ERT{\pi}{s}{1}{\{t\}} = 3.8$. In the next
section, we extend the goal-leaning heuristic to produce $\pi$ for the
(updated) running example.

\subsection{Threshold heuristic}

The threshold heuristic is parametrized by a probability threshold $0 \leq
\theta \leq 1$. Intuitively, when we compute the value of $\SVname$ for
$\hactb$ in $s$, we ignore the hope values of successors $t\in
\Succ(s,\hactb)$ such that $\trans(s,\hactb,t) < \theta$. With $\theta = 0.2$,
$v$ in our example is no longer considered as a valid outcome for $\hactb$ in
$s$ in the second iteration and $\SV{\vec{p_\old}}{s}{\hactb}=\infty$.
Therefore $\hacta$ is picked with $\SV{\vec{p_\old}}{s}{\hacta}=2$. It happens
only in the fourth iteration that action $\hactb$ is considered from $s$. In
this iteration, $\veccomp{\vec{p_\old}}{r}$ is $0$ (it is a reload state) and
with $\trans(s,\hactb,r)=0.9$, $r$ passes the threshold and we finally have
that $\SV{\vec{p_\old}}{s}{\hactb}=1$. The resulting finite counter strategy is
exactly the desired strategy $\pi$ from above.

Formally, we parametrize the function $\SVname$ by $\theta$ as follows where
we assume $\min$ of the empty set is equal to $\infty$ (changes to definition
of $\SVname$ are highlighted in \textcolor{red!70!black}{red}).
\[
  \SVThres{\vec{x}}{s}{a}{\textcolor{red!70!black}{\theta}} = \cons(s,a) +
    \min_{\substack{
      s'\in\Succ(s,a)\\
      \textcolor{red!70!black}{\trans(s,a,s') \geq \theta}
    }} \shope{\vec{x}}{s}{a}{s'}
\]

The new function $\SVname_{\!\theta}$ is a generalization of $\SVname =
\SVname_{\!0}$. To implement this heuristic, we need, in addition to the
goal-leaning $\arg\min$ operator, to use $\SVname_{\!\theta}$ instead of
$\SVname$ in \cref{algo:posreach,algo:asreach}.

There is, however, still one caveat introduced by the threshold. By ignoring
some outcomes, the threshold heuristic might compute only over-approximations
of $\ml{\reachability}{\pos}$. As a consequence, the strategy $\sigma$
computed by the heuristic might be incomplete; it might be undefined for a
resource level from which the objective is still satisfiable.

In order to make $\sigma$ complete and to compute $\ml{\reachability}{\pos}$
precisely, we continue with the iterations, but now using $\SVname_{\!0}$
instead of $\SVname_{\!\theta}$. To be more precise, we include
\crefrange{aline:pr-loop-b}{aline:pr-loop-e} in \cref{algo:posreach} twice
(and analogously for \cref{algo:asreach}), once with $\SVname_{\!\theta}$ and
once with $\SVname_{\!0}$ (in this order).

This extra fixed-point iteration can complete $\sigma$ and improve $\vec{p}$ to
match $\ml{\reachability}{\pos}$ using the rare outcomes ignored by the
threshold. As a result, $\sigma$ behaves according to the threshold heuristic
for sufficiently high resource levels and, at the same time, it achieves the
objective from every state-level pair where this is possible.

\textbf{Correctness.} The function $\SVname_{\!\theta}$ clearly
over-approximates $\SVname$ as we restrict the domain of the $\min$ operator
only. The invariant of the repeat-loop from the proof of
\cref{thm:posreach-strat-main} still holds even when using
$\SVname_{\!\theta}$ instead of $\SVname$ (it also obviously holds in the
second loop with $\SVname_{\!0}$). The extra repeat-loop with $\SVname_{\!0}$
converges to the correct fixed point due to the monotonicity of $\vec{p}$ over
iterations. Thus, \cref{thm:posreach-values-main,thm:posreach-strat-main} hold
even when using the threshold heuristics.

\subsection{Limitations}

The suggested heuristics naturally do not always produce strategies with the
least ERT possible for given CMDP, state, and initial load. Consider the CMDP
in \cref{fig:heurex-iii} with capacity at least $2$. Both heuristics prefer
(regardless $\theta$) $\hactb$ in $s$ since $\trans(s,\hactb,v) >
\trans(s,\hacta,v) = \trans(s,\hacta,u)$. Such strategy yields ERT from $s$
equal to $2\frac{2}{3}$, while the strategy that plays $\hacta$ in $s$ comes
with ERT equal to $2$.

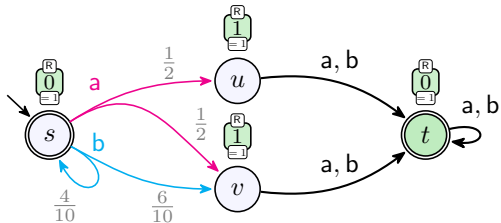
\begin{figure}[h!]
\centering
\begin{tikzpicture}[automaton, yscale=.9]
\tikzstyle{a} = [magenta]
\tikzstyle{b} = [cyan]
\tikzstyle{ab} = [black, thick]
\node[state, initial, initial angle=135, reload] (start) at (0,0)  {$s$};
\node[state,reload,target] (t)   at (5,0) {$t$};
\node[state] (u)      at (2.5,.8)  {$u$};
\node[state] (v) at (2.5,-.8)  {$v$};

\path[->,auto, swap]
(start) edge[out=35, in=180, a]
  node[swap, pos=.3] {$\acta$}
  node[prob, swap] {$\frac{1}{2}$}
(u)
(start) edge[out=35, a, looseness=1.3]
  node[prob, swap] {$\frac{1}{2}$}
(v)

(v) edge[bend right=25, ab]
  node[above] {$\acta,\actb$}
(t)
(u) edge[bend left=25, ab]
  node[above] {$\acta,\actb$}
(t)
(start) edge[out = -30, in = 180, looseness = 1.1, b]
  node[above, pos=0.18] {$\actb$}
  node[prob, overlay] {$ \frac{6}{10} $}
(v)
(start) edge[out = -30, in = 290, looseness = 10, b]
  node[prob,swap,pos=.7] {$ \frac{4}{10} $}
(start)
(t) edge[loop right, ab]
  node[above, outer sep=3pt] {$\acta,\actb$}
(t)
;

\begin{scope}[every node/.append style={valbox, above}]
\node[as reach] at (start) {$0$};
\node[as reach] at (t) {$0$};
\node[as reach] at (u) {$1$};
\node[as reach] at (v) {$1$};
\end{scope}
\end{tikzpicture}
\caption{CMDP illustrating limitations of the goal-leaning heuristics. All
actions consume $1$ unit of resource.}%
\label{fig:heurex-iii}
\end{figure}

This non-optimality must be expected as the presented algorithm is purely
qualitative and does not convey any quantitative analysis that is required to
compute the precise ERT of strategies. However, there is no known polynomial
(with respect to the CMDP representation) algorithm for quantitative analysis
of CMDPs that we could use here instead of our approach.

While other, perhaps more involved heuristics might be invented to solve some
particular cases, qualitative algorithms which do not track precise values of
ERT, naturally cannot guarantee optimality with respect to ERT. The presented
heuristics are designed to be simple (both in principle and computation
overhead) and to work well on systems with rare undesired events.

The threshold heuristic relies on a well-chosen threshold $\theta$. This
threshold needs to be supplied by the user. Moreover, different thresholds
work well for different models. Typically, $\theta$ should be chosen to be
higher than the probability of the most common rare events in the model, to
work well. As the presented algorithms rely on the fact that the whole model
is known, a suitable threshold might be automatically inferred from the model.

Despite these limitations, we show the utility of the presented heuristics on
a case study in the next section.
\section{Implementation and evaluation}
\label{sec:experiments}

We have implemented \crefrange{algo:mininitcons_iterative}{algo:asreach},
including the proposed heuristics in a tool called \FiMDP{} (Fuel in MDP). The
rest of this section presents two numerical examples that demonstrate utility
of \FiMDP{} on realistic environments. In particular, we first compare the
speed of strategy synthesis via CMDPs performed by \FiMDP{} to the speed of
strategy synthesis via regular MDPs with energy constraints encoded in states,
performed by \Storm. The second example shows the impact of heuristics from
\cref{sec:heuristics} on expected reachability time. Jupyter notebooks at
\url{https://github.com/FiMDP/FiMDP-evaluation/tree/tac} contain (not only)
scripts and instructions needed to reproduce the presented results.

\subsection{Tools, examples, and evaluation setting}

\FiMDP{} is an open-source library for CMDPs. It is written in Python and is
well integrated with interactive Jupyter notebooks \cite{Kluyver:2016aa} for
visualization of CMDPs and algorithms. With \Storm~\cite{storm} and \Stormpy{}
installed, \FiMDP{} can read models in \Prism{} \cite{prism} or \Jani{}
\cite{jani} languages.

\Storm{} is an open-source, state-of-the-art probabilistic model checker
designed to be efficient in terms of time and memory. \Storm{} is written in
C++ and \Stormpy{} is its Python interface. The examples are based on models
generated by \FiMDPEnv{} --- a library of simulation environments for real-world
resource-constrained problems that can be solved via CMDPs.
\Cref{tab:versions} lists the homepages of these tools and versions used to
create the presented results.

We demonstrate the utility of CMDPs and \FiMDP{} on high-level planning tasks
for unmanned underwater vehicles (UUVs) operating in ocean with stochastic
currents. \FiMDPEnv{} models this scenario based on~\cite{al2012extending}. The
model discretizes the area of interest into a 2D grid-world. A grid-world of
size $n$ consists of $n\times n$ cells, see \cref{fig:gridworld} (left). Each
cell in the grid-world forms one state in the corresponding CMDP, some of them
are reload states, and some of them form the set of targets $\target$. The set
of actions consists of two classes of actions: \emph{weak actions} consume
less energy but have stochastic outcomes whereas \emph{strong actions} have
deterministic outcomes with the downside of significantly higher resource
consumption. For each class, the environment offers up to 8 directions (east,
north-east, north, north-west, west, south-west, south, and south-east), see \cref{fig:gridworld} (right).

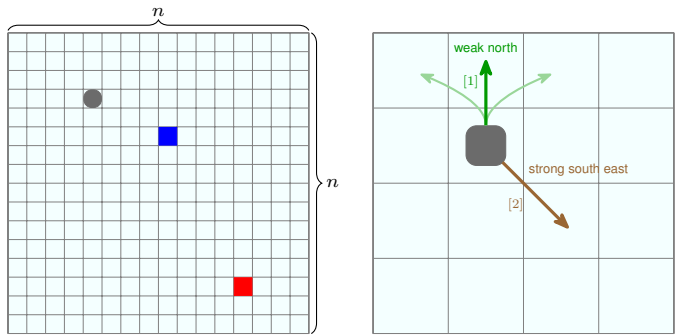
\begin{figure}[hb]
\def\gridsize{4}
\tikzstyle{strong} = [blue]
\tikzstyle{strong} = []

\begin{tikzpicture}
\tikzstyle{brace} = [decorate,
        decoration={brace, amplitude=4pt, raise=0.2ex}]
\draw[gridworld, fill=background!30, step=.25] (0,0) grid (\gridsize,\gridsize) rectangle (0,0);

\tikzstyle{target} = [fill, draw, red]
\tikzstyle{reload} = [fill, draw, blue]
\tikzstyle{state} = [minimum size=.25, xshift=.1255cm, yshift=.1245cm]

\node[agent, minimum size=4pt, state, rounded corners=3pt] at (1, 3) (agent) {};

\node[state, reload] at (2,2.5) {};
\node[state, target] at (3, .5) {};

\draw[brace, overlay] (0, \gridsize) -- node[above=3pt, font=\scriptsize] {$n$} (\gridsize,\gridsize);
\draw[brace, overlay] (\gridsize,\gridsize) -- node[right=3pt, font=\scriptsize] {$n$} (\gridsize, 0);

\end{tikzpicture}
\hfill
\begin{tikzpicture}
\tikzstyle{strong} = [brown!80!black]

\draw[gridworld, fill=background!30, step=1] (0,0) grid (\gridsize,\gridsize) rectangle (0,0);

\node[agent] at (1.5, 2.5) (agent) {};

\path[->, action edge]
(agent)
  edge[out=90, in=200, side effect] +(1,1)
  edge[out=90, in=-20, side effect] +(-1,1)
  edge[main]
  node[action name, left, pos=.8] {$[1]$}
  +(0,1)
  edge[main, strong, shorten <=-2pt]
    node [action name, above right, outer sep=0pt, pos=.3] {\textsf{strong south east}} node[action name, below left, pos=.45] {$[2]$} +(1,-1)
;
\node[main, action name] at (1.5, 3.8) {\textsf{weak north}};
\end{tikzpicture}
\caption{Grid-world of size $n$ with an \textcolor{black!70}{agent} (UUV), a \textcolor{blue}{reload state}, and a \textcolor{red}{target} (left), and illustration of \textcolor{green!60!black}{weak}, and \textcolor{brown!80!black}{strong} actions (right)}
\label{fig:gridworld}
\end{figure}

All experiments were performed on a PC with Intel Core i7-8700 3.20GHz 12 core
processor and with 16 GB RAM running Ubuntu 18.04 LTS. \Cref{tab:versions} lists tools and versions used to obtain the presented results.

\begin{table}
\setlength{\tabcolsep}{2pt}
\centering
\caption{Versions of tools used in numerical examples.}%
\label{tab:versions}
\begin{tabular}{lcl}
\toprule
tool & version & homepage\\
\midrule
\FiMDP    & 2.0.0& \url{https://github.com/FiMDP/FiMDP}\\
\FiMDPEnv & 1.0.4& \url{https://github.com/FiMDP/FiMDPEnv}\\
\Storm    & 1.6.2& \url{https://stormchecker.org}\\
\Stormpy  & 1.6.2& \url{https://moves-rwth.github.io/stormpy}\\
\bottomrule
\end{tabular}
\end{table}

\subsection{Strategy synthesis for CMDPs in \FiMDP{} and \Storm}

We use the UUV environment from \FiMDPEnv{} to generate 15 strategy synthesis
tasks with a Büchi objective. The complexity of a task is determined by grid
size (the number of cells on each side) and capacity. We use grid sizes $10$,
$20$, and $50$. For each grid size $n$, we create five tasks with capacities
equal to $1$, $2$, $3$, $5$, and $10$ times $n$.%
We solve each task modeled as a CMDP using \FiMDP{} and modeled as a regular
MDP with resource constraints encoded in states and actions using \Storm{}. We
express the qualitative Büchi property in PCTL~\cite{BaiKat08} for \Storm.
Figure \ref{fig:gw-comptime} presents the running times (averaged over 10
independent runs) needed for each task by \FiMDP{}
(\textcolor{fimdpcolor}{$\bullet$}) and by \Storm{}
(\textcolor{stormcolor}{\boldmath{$\times$}}). For each grid size we have one
plot and a dot $(x,y)$ indicates that the corresponding tool needed $y$
seconds on average for the task with capacity $x$.

\begin{figure}[ht]
	\centering \scalebox{.7}{
%
\definecolor{mycolor1}{rgb}{0.00000,0.44700,0.74100}%
\definecolor{mycolor2}{rgb}{0.74100,0.00000,0.44700}%
\begin{tikzpicture}
\def\plotsize{1.1in}

\begin{axis}[%
width=\plotsize,
height=\plotsize,
at={(0,0)},
scale only axis,
xlabel style={font=\color{white!15!black}},
xlabel={capacity},
ylabel style={font=\color{white!15!black}},
yticklabel style={/pgf/number format/fixed, /pgf/number format/precision=3},
scaled y ticks=false,
ylabel={comp. time (sec)},
axis background/.style={fill=white},
title style={font=\bfseries},
title={(a) Grid size 10},
legend style={legend cell align=left, align=left, draw=white!15!black, font=\scriptsize},
legend columns=1, legend pos=north west,
legend entries={FiMDP, Storm}
]
\addplot[only marks, mark=*, mark options={}, mark size=2pt, color=mycolor1, fill=mycolor1] table[col sep=comma,x index=2, y index=5] {data/gs10_timedata.csv};
\addplot[only marks, mark=x, mark options={thick}, mark size=2pt, color=mycolor2, fill=mycolor2] table[col sep=comma,x index=2, y index=6] {data/gs10_timedata.csv};

\end{axis}

\begin{axis}[%
width=\plotsize,
height=\plotsize,
at={(4cm,0)},
scale only axis,
xlabel style={font=\color{white!15!black}},
xlabel={capacity},
axis background/.style={fill=white},
title style={font=\bfseries},
title={(b) Grid size 20},
legend style={legend cell align=left, align=left, draw=white!15!black, font=\scriptsize},
legend columns=1, legend pos=north west,
legend entries={FiMDP, Storm}
]
\addplot[only marks, mark=*, mark options={}, mark size=2pt, color=mycolor1, fill=mycolor1] table[col sep=comma,x index=2, y index=5] {data/gs20_timedata.csv};
\addplot[only marks, mark=x, mark options={thick}, mark size=2pt, color=mycolor2, fill=mycolor2] table[col sep=comma,x index=2, y index=6] {data/gs20_timedata.csv};

\end{axis}

\begin{axis}[%
width=\plotsize,
height=\plotsize,
at={(8cm,0)},
scale only axis,
xlabel style={font=\color{white!15!black}},
xlabel={capacity},
axis background/.style={fill=white},
title style={font=\bfseries},
title={(c) Grid size 50},
legend style={legend cell align=left, align=left, draw=white!15!black, font=\scriptsize},
legend columns=1, legend pos=north west,
legend entries={FiMDP, Storm}
]
\addplot[only marks, mark=*, mark options={}, mark size=2pt, color=mycolor1, fill=mycolor1] table[col sep=comma,x index=2, y index=5] {data/gs50_timedata.csv};
\addplot[only marks, mark=x, mark options={thick}, mark size=2pt, color=mycolor2, fill=mycolor2] table[col sep=comma,x index=2, y index=6] {data/gs50_timedata.csv};

\end{axis}

\pgfplotsset{scaled y ticks=false}
\end{tikzpicture}
	\caption{Mean computation times for solving the CMDP model of the UUV environment with capacities proportional to the grid size in each task. Each subplot in the figure corresponds to a different size of the grid-world.}
	\label{fig:gw-comptime}
\end{figure}
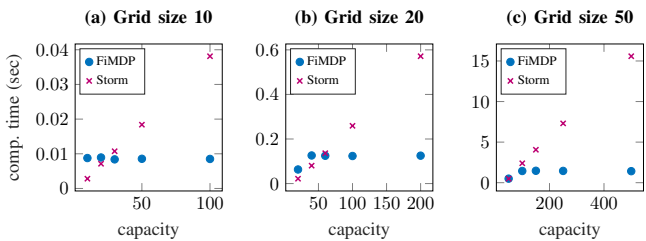

We can observe that \FiMDP{} outperforms \Storm{} in terms of computation time
in all test cases with the exception of small problems. For the small tasks,
\Storm{} benefits from its efficient implementation in C++. The advantage of
\FiMDP{} lies in the fact that the state space of CMDPs (and also the time
needed for their analysis) does not grow with rising capacity.

\subsection{Comparing heuristics for improving ERT}

This section investigates the novel heuristics from a practical, optimal
decision-making perspective. The test scenario is based in the UUV environment
with grid size $20$, a single reload state and one target state. The objective
of agents is to reach the target almost surely. We consider four strategies
generated for almost-sure reachability by the presented algorithms: the
standard strategy (using randomized $\arg\min$ operator), and strategies
generated using the goal-leaning $\arg\min$ operator and thresholds $\theta$
equal to $0$, $0.3$, and $0.5$. For each strategy, we run $10000$ independent
runs and measure the number of steps needed to reach the target. By averaging
the collected data, we approximate the expected reachability time (ERT) for
the strategies.

\begin{table}[hb]
\caption{Approximated ERT for strategies with standard and goal-leaning $\arg\min$ operators}
\label{tab:heuristics}
\centering
\begin{tabular}{lcr}
\toprule
\textbf{operator} & $\mathbf{\theta}$ & \textbf{ERT} \\
\midrule
standard & --- & $200$+\hphantom{$.0$}\\
goal-leaning & $0\hphantom{.2}$ & $51.27$\\
goal-leaning & $0.3$            & $19.53$ \\
goal-leaning & $0.5$            & $15.00$ \\
\bottomrule
\end{tabular}
\end{table}

\Cref{tab:heuristics} shows the average number of steps needed to reach the
target by each of the strategies. The strategy built using the standard
$\arg\min$ operator does not reach the target within the first $200$ steps in
any of the $10000$ trials. The goal-leaning $\arg\min$ operator itself helps a
lot to navigate the agent towards the goal. However, it still relies on rare
events at some places. Setting $\theta=0.3$ helps to avoid these situations as
the unlikely outcomes are not considered any more, and finally $\theta=0.5$
forces the agent to use strong actions almost exclusively. While using
thresholds led to a better ERT in this particular environment, the result
might not hold in general. The best choice of threshold solely depends on the
environment model and the exact probabilities of outcomes.


\section{Conclusion \& future work}

We presented consumption Markov decision processes --- models for stochastic
environments with resource constraints --- and we showed that strategy
synthesis for qualitative objectives is efficient. In particular, our
algorithms that solve synthesis for almost-sure reachability and almost-sure
Büchi objective in CMDPs, work in time polynomial with respect to the
representation of the input CMDP. In addition, we presented two heuristics
that can significantly improve the expected time needed to reach a target in
realistic examples. These heuristics improve the utility of the presented
algorithms for planning in stochastic environments under resource constraints.
The experimental evaluation of the suggested methods confirmed that direct
analysis of CMDPs in out tool is faster than analysis of an equivalent MDP
even when performed by the state-of-the-art tool \Storm (with the exception of
very small models).

Possible directions for the future work include extensions to quantitative
analysis (e.g. minimizing the expected resource consumption or reachability
time), stochastic games, or partially observable setting.

\noindent\textbf{Acknowledgements:} We acknowledge the kind help of Tomáš Brázdil, Vojtěch Forejt, David Klaška, and Martin Kučera in the discussions leading to this paper.

\bibliographystyle{ieeetran}
\bibliography{journal}

\newpage

\begin{IEEEbiography}[{\includegraphics[width=1in,height=1.25in,clip,keepaspectratio]{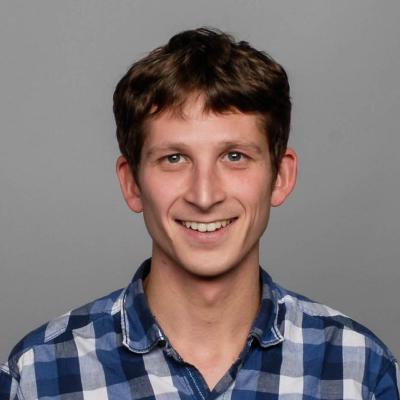}}]{František Blahoudek} is a postdoctoral researcher at the Faculty of Information Technology, Brno University of Technology, Czech Republic. He was a postdoctoral researcher in the group of Ufuk Topcu at the University of Texas at Austin. He received his Ph.D. degree from the Masaryk University, Brno in 2018. His research focuses on automata in formal methods and on planning under resource constraints.
\end{IEEEbiography}

\begin{IEEEbiography}[{\includegraphics[width=1in,height=1.25in,clip,keepaspectratio]{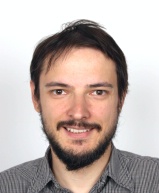}}]{Petr Novotný} is an assistant professor at the Faculty of Informatics, Masaryk University, Czech Republic. He received his Ph.D. degree from Masaryk University in 2015. His research focuses on automated analysis of probabilistic program, application of formal methods in the domains of planning and reinforcement learning, and on the theoretical foundations of probabilistic verification.
\end{IEEEbiography}

\begin{IEEEbiography}[{\includegraphics[width=1in,height=1.25in,clip,keepaspectratio]{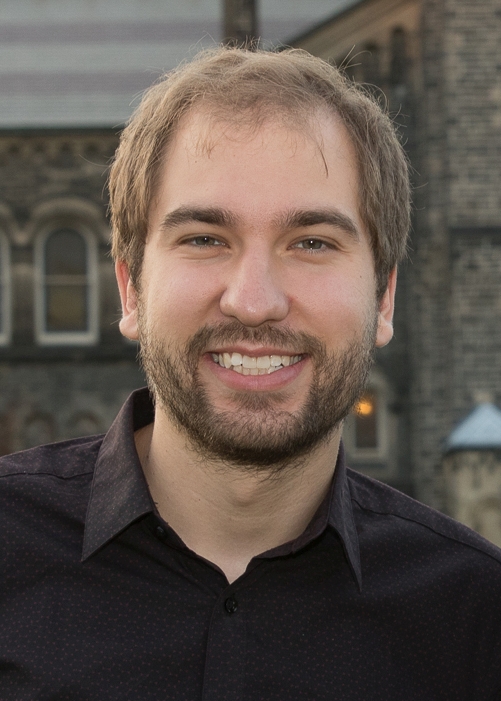}}]{Melkior Ornik} is an assistant professor in the Department of Aerospace Engineering and the Coordinated Science Laboratory at the University of Illinois at Urbana-Champaign. He received his Ph.D. degree from the University of Toronto in 2017. His research focuses on developing theory and algorithms for learning and planning of autonomous systems operating in uncertain., complex and changing environments, as well as in scenarios where only limited knowledge of the system is available.
\end{IEEEbiography}

\begin{IEEEbiography}[{\includegraphics[width=1in,height=1.25in,clip,keepaspectratio]{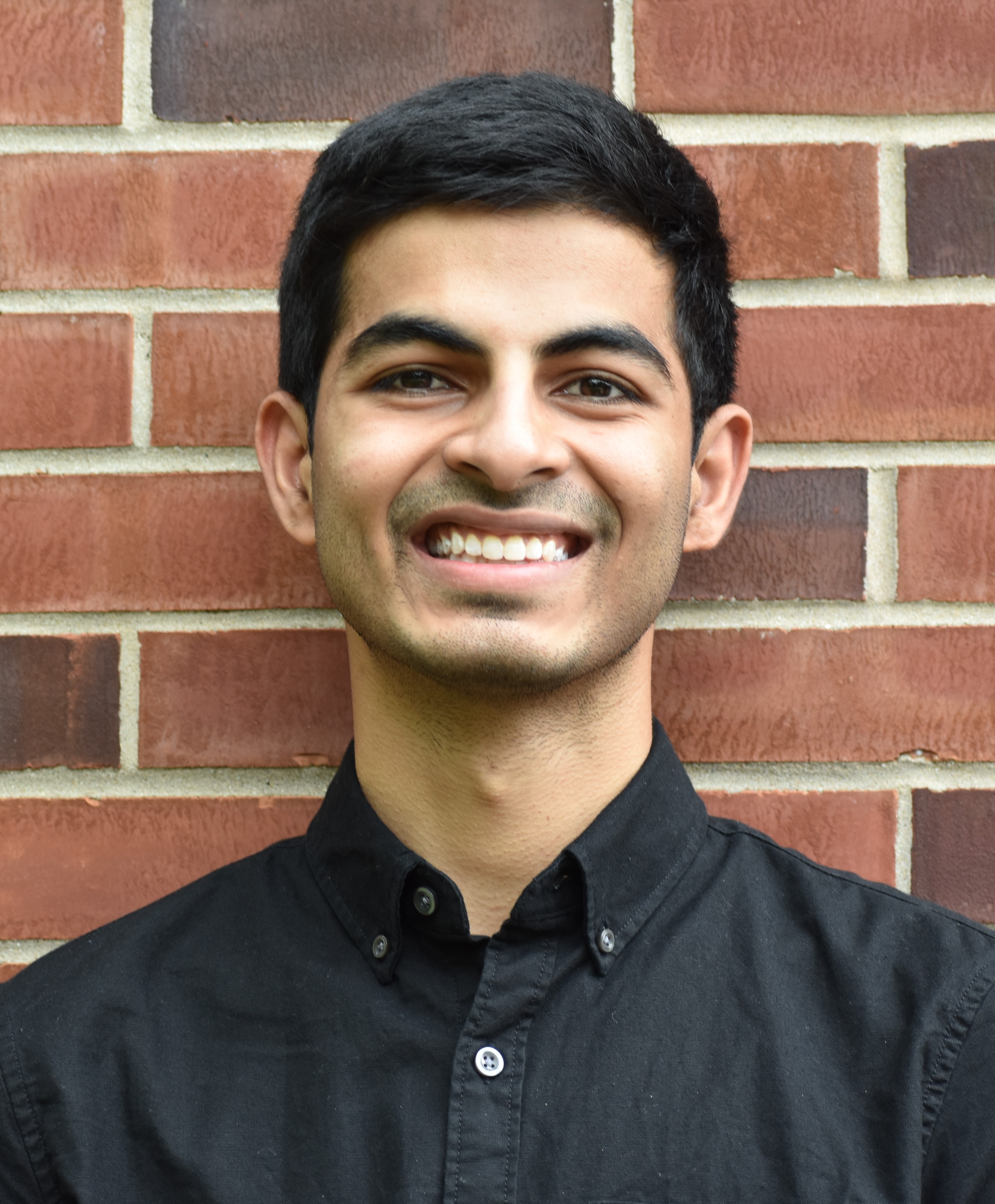}}]{Pranay Thangeda} is a graduate student in the Department of Aerospace Engineering and the Coordinated Science Laboratory at the University of Illinois at Urbana-Champaign. He received his M.S. degree in Aerospace Engineering from the University of Illinois at Urbana-Champaign in 2020. His research focuses on developing algorithms that exploit side information for efficient planning and learning in unknown environments.
\end{IEEEbiography}

\begin{IEEEbiography}[{\includegraphics[width=1in,height=1.25in,clip,keepaspectratio]{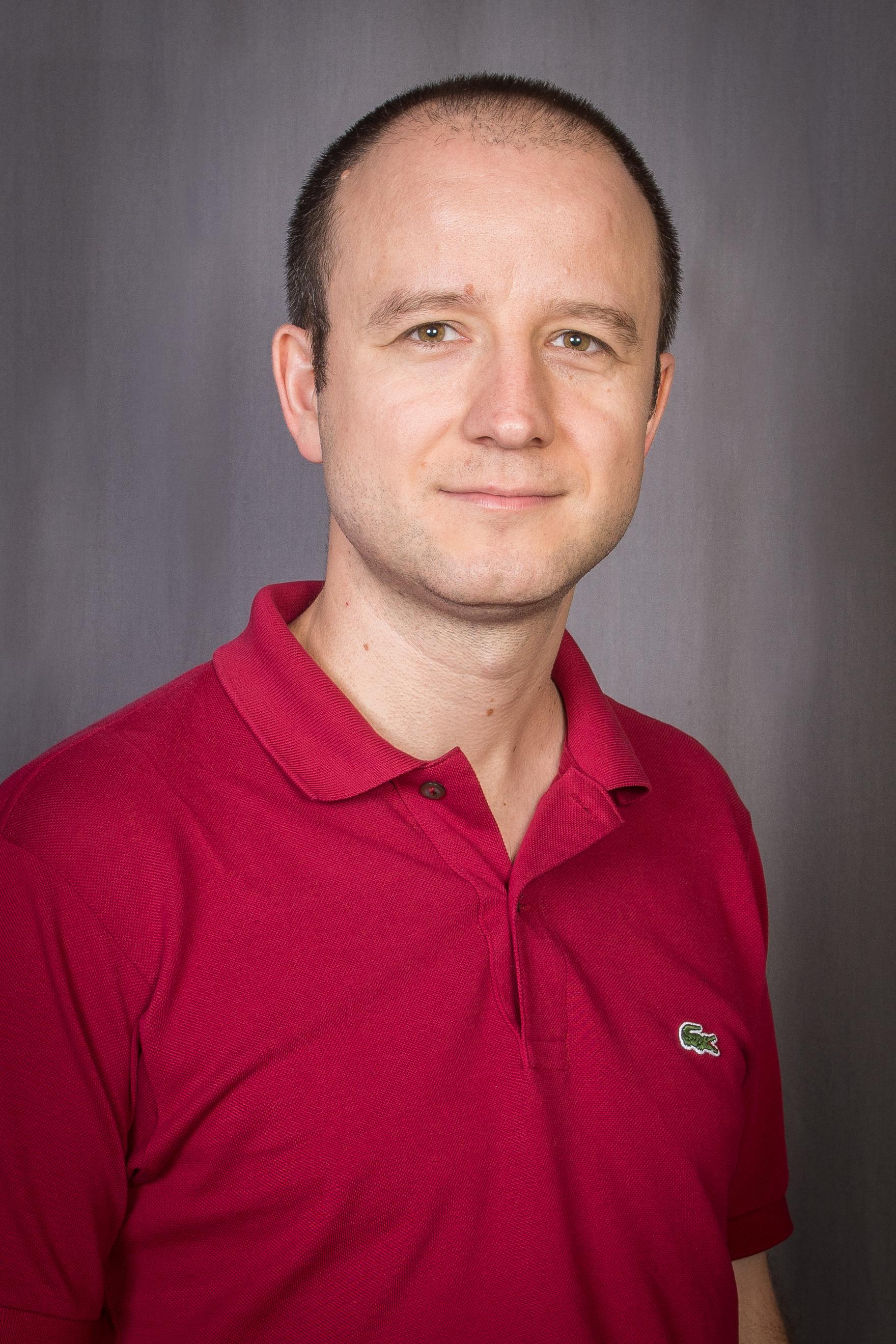}}]{Ufuk Topcu} is an associate professor in the Department of Aerospace Engineering and Engineering Mechanics and the Oden Institute at The University of Texas at Austin. He received his Ph.D. degree from the University of California at Berkeley in 2008. His research focuses on the theoretical, algorithmic, and computational aspects of design and verification of autonomous systems through novel connections between formal methods, learning theory and controls.
\end{IEEEbiography}

\vfill

\appendix

\subsection{Proof of \Cref*{lem:posreach-iterate}}
\label{app:posreach-iterate}
\begin{proof}[Proof of \cref{lem:posreach-iterate}]
We proceed by induction on $ i $. The base case is clear. Now assume that the statement
holds for some $ i \geq 0 $. Fix any $s$. Denote by $b =
\veccomp{\mathcal{B}^{i+1}(\vec{y}_\target)}{s}$ and
$d=\veccomp{\ml{{\reachability^{i+1}}}{\pos}}{s}$. We show that $b = d$. The
equality trivially holds whenever $s\in \target$, so in the remainder of the
proof we assume that $s\not \in \target$.

We first prove that $b\geq d$. If $b=\infty$, this is clearly true. Otherwise,
let $a_{\min}$ be the action minimizing
$\SV{\mathcal{B}^{i}(\vec{y}_T)}{s}{a_{\min}}$ (which equals $b$ if $ s\not
\in \reloads $) and let $t_{\min} \in \Succ(s,a_{\min})$ be the successor with
the lowest hope value. We denote by $p$ the value
$\veccomp{\ml{\reachability^{i}}{\pos}}{t_{\min}} \geq
\veccomp{\ml{\safety}{}}{t_{\min}}$ in the following two paragraphs. By
induction hypothesis, there exists a strategy $\sigma_1$ such that
$\sdsat{\sigma_1}{t_{\min}}{p}{\reachability^i}{\pos}$, and there also exists
a strategy $\sigma_2$ such that $\sdsat{\sigma_2}{t}{l}{\safety}{}$ for all
other successors $t\in\Succ(s,a_{min}), t\neq t_{\min}$ with
$l=\veccomp{\ml{\safety}{}}{t}$. We now fix a run $\loadedpath{\rho}{p}$ as a
run from $\lcompatible{\sigma_1}{t_{\min}}{p}$ that reaches $\target$ in at
most $i$ steps (which must exists).

Consider now a strategy $\pi $ which, starting in $s$, plays $a_{\min}$. If
the outcome of $a_{\min}$ is $t_{\min}$, the strategy $\pi$ starts to mimic
$\sigma_1$, otherwise it starts to mimic $\sigma_2$. By definition of
$\SVname$ we have that $b \geq \cons(s,a_{\min}) + l$ for $l =
\veccomp{\ml{\safety}{}}{t}$ for all $t\in\Succ(s,a_{\min})$ (including
$t_{\min}$) and thus $\sdsat{\pi}{s}{b}{\safety}{}$. The loaded run
$\loadedpath{s}{b}a_{\min}t_{\min}\histconc \rho \in \lcompatible{\pi}{s}{b}$
is safe (as $b \geq \cons(s, a_{\min}) + p$) and thus it is the witness that
$\sdsat{\pi}{s}{b}{\reachability^{i+1}}{\pos}$.

Now we prove that $ b \leq d $. This clearly holds if $ d = \infty $, so in
the remainder of the proof we assume $ d\leq \Ca$. By the definition of
$d$ there exists a strategy $\sigma$ such that
$\sdsat{\sigma}{s}{d}{\reachability^{i+1}}{\pos}$ Let
$a=\sigma(\loadedpath{s}{d})$ be the action selected by $\sigma$ in the first
step when starting in $s$ loaded by $d$.
For each $t\in \Succ(s,a)$ we assign a number $d_t$ defined as $d_t=0$ if
$t\in\reloads$ and $d_t=\lreslev{d}{sat}=d - \cons(s,a)$ otherwise.

We finish the proof by proving these two claims:
\begin{compactenum}[(1)]
\item It holds $\SV{\mathcal{B}^{i}(\vec{y}_\target)}{s}{a} \leq \cons(s, a) +
\max_{t\in \Succ(s,a)} d_t$.\label{item:sv-bounded}%
\item If $s \not \in \reloads$, then $ \cons(s,a) + \max_{t\in \Succ(s,a)}
d_t \leq d$.\label{item:sv-bounded-ii}%
\end{compactenum}
Let us first see why these claims are indeed sufficient. From
(\ref{item:sv-bounded}) we get $
\veccomp{\mathcal{A}(\mathcal{B}^{i}(\vec{y}_\target))}{s} \leq \cons(s, a) +
\max_{t\in \Succ(s,a)} d_t \leq \Ca$ (from the definition of
$\reslevs{d}{sat}$). If $ s\in \reloads $, then it follows that $b =
\veccomp{\trunc{\mathcal{A}(\mathcal{B}^{i}(\vec{y}_\target))}_\reloads^\Ca}{s} = 0 \leq d $. If $ s\not \in \reloads $, then $b = \veccomp{\trunc{\mathcal{A}(\mathcal{B}^{i}(\vec{y}_\target))}_\reloads^\Ca}{s} = \veccomp{\mathcal{A}(\mathcal{B}^{i}(\vec{y}_T))}{s} \leq \cons(s, a) + \max_{t\in \Succ(s,a)} d_t \leq d$, the first inequality shown above and the second coming from (\ref{item:sv-bounded-ii}).

Let us first prove (1.). We denote by $\tau$ the strategy such that
for all histories $\hist$ we have $\tau(\hist) = \sigma(sa\hist)$. For each
$t\in \Succ(s,a)$, we have $\sdsat{\tau}{t}{d_t}{\safety}{}$. Moreover, there
exists $q \in \Succ(s,a)$ such that
$\sdsat{\tau}{q}{d_q}{\reachability^i}{\pos}$ (since $s\notin\target$); hence,
by induction hypothesis it holds $\veccomp{\mathcal{B}^{i}(\vec{y}_T)}{q} \leq
d_q$. From this and from the definition of  $\SVname$ we get
\begin{align*}
\SV{\mathcal{B}^{i}(\vec{y}_T)}{s}{a}
  & \leq \cons(s,a) + \shope{\mathcal{B}^{i}(\vec{y}_T)}{s}{a}{q} \\
  & \leq \cons(s,a) + \!\!\!\!\max_{\substack{t\in\Succ(s,a)\\t\neq q}}%
    \!\!\{
      \veccomp{\mathcal{B}^{i}(\vec{y}_T)}{q}, \veccomp{\ml{\safety}{}}{s}%
    \}\\
  & \leq \cons(s,a) + \max_{t\in \Succ(s,a)} d_t.
\end{align*}

To finish, (\ref{item:sv-bounded-ii}) follows immediately from the definition
of $d_t$ and the fact that $\lreslev{d}{sat}$ is always bounded from above by
$d-\cons(s,a)$ for $s\notin \reloads$.
\end{proof}

\subsection{Proof of \Cref*{lem:posreach-bound}}
\label{app:posreach-bound}
\begin{proof}[Proof of \cref{lem:posreach-bound}]
By \Cref{lem:posreach-iterate}, it suffices to show that $
\ml{\reachability}{\pos} = \ml{\reachability^K}{\pos}$. To show this, fix any
state $s$ such that $\veccomp{\ml{\reachability}{\pos}}{s} < \infty$. For the
sake of succinctness, we denote $\veccomp{\ml{\reachability}{\pos}}{s}$ by
$d$. To each strategy $\pi$ such that $\sdsat{\pi}{s}{d}{\reachability}{\pos}$
we assign a \emph{\reachindex} that is the infimum of all $i$ such that
$\sdsat{\pi}{s}{d}{\reachability^i}{\pos}$. Let $\sigma$ be a strategy such
that $\sdsat{\sigma}{s}{d}{\reachability}{\pos}$ with the minimal \reachindex{} $k$.
We show that the $k \leq K$.

We proceed by a suitable ``strategy surgery'' on $\sigma$. Let
$\hist$ be a history produced by $\sigma$ from $s$ of length $k$
whose last state belongs to $\target$. Assume, for the sake of
contradiction, that $k > K$. This can only be if at least one of the
following conditions hold:
\begin{compactenum}[(a)]
\item Some reload state is visited twice on $\hist$, i.e. there are $0
\leq j < l \leq k + 1$ such that $ \rstate[\hist]{l} =
\rstate[\hist]{j} \in \reloads $, or%
\item some state is visited twice with no intermediate visits to a
reload state; i.e., there are $ 0 \leq j < l \leq k + 1$ such that $
\rstate[\hist]{j} = \rstate[\hist]{l} $ and $ \rstate[\hist]{h}
\not\in \reloads $ for all $ j < h <l $.
\end{compactenum}
Indeed, if none of the conditions hold, then the reload states
partition $ \hist $ into at most $ |\reloads|+1 $ segments, each
segment containing non-reload states without repetition. This would
imply $ k = \len{\hist} \leq K$.

In both cases (a) and (b) we can arrive at a contradiction using essentially
the same argument. Let us illustrate the details on case (a): Consider a
strategy $\pi$ such that for every history of the form $\hist\pref{j}
\histconc\histpr $ for a suitable $\histpr$ we have $\pi(\hist\pref{j}
\histconc\histpr) = \sigma(\hist\pref{l}\histconc \histpr)$; on all other
histories, $\pi$ mimics $\sigma$. Clearly $\sdsat{\pi}{s}{d}{\safety}{}$:
the behavior changed only for histories with $\hist\pref{j}$ as a prefix and
for each suitable $\histpr$ we have
$\lreslev{d}{\hist\pref{j}\histconc\histpr} =
\lreslev{d}{\hist\pref{l}\histconc\histpr}$ due to the fact that
$\rstate[\hist]{j}=\rstate[\hist]{l}$ is a reload state. Moreover, we have
that $\loadedpath{\hist}{d}\pref{j}\histconc\hist\infix{l}{k}$ created by
$\pi$ reaches $\target$ in $k' = k - (l-j)< k$ steps which is the
\reachindex{} of $\pi$. We reached a contradiction with the choice of
$\sigma$.

For case (b), the only difference is that now the resource level after
$\hist\pref{j}\histconc\histpr$ can be higher then the one of
$\hist\pref{l}\histconc\histpr$ due to the removal of the intermediate
non-reloading cycle. Since we need to show that the energy level never
drops below 0, the same argument works.
\end{proof}

\end{document}